\pdfoutput=1
\documentclass[pdflatex, sn-mathphys-num]{sn-jnl}


\usepackage{graphicx}%
\usepackage{multirow}%
\usepackage{amsmath,amssymb,amsfonts}%
\usepackage{amsthm}%
\usepackage{mathrsfs}%
\usepackage[title]{appendix}%
\usepackage{xcolor}%
\usepackage{textcomp}%
\usepackage{manyfoot}%
\usepackage{booktabs}%
\usepackage{algorithm}%
\usepackage{algorithmicx}%
\usepackage{algpseudocode}%
\usepackage{listings}%
\usepackage{enumerate}
\usepackage{enumitem}

\theoremstyle{definition}
\newtheorem{Definition}{Definition}[section]

\theoremstyle{plain}
\newtheorem{Theorem}[Definition]{Theorem}
\newtheorem{Corollary}[Definition]{Corollary}
\newtheorem{Lemma}[Definition]{Lemma}
\newtheorem{Proposition}[Definition]{Proposition}

\theoremstyle{remark}
\newtheorem{Remark}[Definition]{Remark}

\newcommand{\R}{\mathbb{R}}
\newcommand{\Q}{\mathbb{Q}}

\newcommand{\N}{\mathbb{N}}
\newcommand{\Z}{\mathbb{Z}}

\newcommand{\abs}[1]{\ensuremath{\left\lvert#1\right\rvert}}
\newcommand{\norm}[2][]{\ensuremath{\left\lVert#2\right\rVert_{#1}}}

\newcommand{\vect}[1]{\boldsymbol{#1}}
\newcommand{\x}{\vect{x}}
\newcommand{\q}{\vect{q}}
\newcommand{\goto}{\rightarrow}
\newcommand{\sube}{\subset}
\newcommand{\norma}[1]{\ensuremath{\left\lVert#1\right\rVert}}
\newcommand{\separ}{~|~}

\raggedbottom

\begin{document}

\title[Computability of Classification and Deep Learning: From Theoretical Limits to Practical Feasibility Through Quantization]{Computability of Classification and Deep Learning: From Theoretical Limits to Practical Feasibility through Quantization}


\author[1,2,3,4,5]{\fnm{Holger} \sur{Boche}}\email{boche@tum.de}

\author*[6,7]{\fnm{Vit} \sur{Fojtik}}\email{fojtik@math.lmu.de}

\author[6]{\fnm{Adalbert} \sur{Fono}}\email{fono@math.lmu.de}

\author[6,7,8,9]{\fnm{Gitta} \sur{Kutyniok}}\email{kutyniok@math.lmu.de}

\affil[1]{\orgdiv{Institute of Theoretical Information Technology}, \orgname{Technical University of Munich}, \orgaddress{\city{Munich}, \country{Germany}}}

\affil[2]{\orgdiv{CASA – Cyber Security in the Age of Large-Scale Adversaries– Exzellenzcluster}, \orgname{Ruhr-Universität Bochum}, \orgaddress{\city{Bochum},  \country{Germany}}}

\affil[3]{ \orgname{BMBF Research Hub 6G-life}, \orgaddress{\country{Germany}}}

\affil[4]{ \orgname{Munich Quantum Valley (MQV)}, \orgaddress{\city{Munich},  \country{Germany}}}

\affil[5]{ \orgname{Munich Center for Quantum Science and Technology (MCQST)}, \orgaddress{\city{Munich},  \country{Germany}}}

\affil[6]{\orgdiv{Mathematical Institute}, \orgname{Ludwig-Maximilians-Universität München}, \orgaddress{\city{Munich}, \country{Germany}}}

\affil[7]{\orgname{Munich Center for Machine Learning (MCML)}, \orgaddress{\city{Munich}, \country{Germany}}}

\affil[8]{\orgdiv{Department of Physics and Technology}, \orgname{University of Tromsø}, \orgaddress{\city{Tromsø}, \country{Norway}}}

\affil[9]{\orgdiv{Institute of Robotics and Mechatronics}, \orgname{DRL-German Aerospace Center}, \orgaddress{\country{Germany}}}

\abstract{
The unwavering success of deep learning in the past decade led to the increasing prevalence of deep learning methods in various application fields. However, the downsides of deep learning, most prominently its lack of trustworthiness, may not be compatible with safety-critical or high-responsibility applications requiring stricter performance guarantees.
Recently, several instances of deep learning applications have been shown to be subject to theoretical limitations of computability, undermining the feasibility of performance guarantees when employed on real-world computers. We extend the findings by studying computability in the deep learning framework from two perspectives: From an application viewpoint in the context of classification problems and a general limitation viewpoint in the context of training neural networks. In particular, we show restrictions on the algorithmic solvability of classification problems that also render the algorithmic detection of failure in computations in a general setting infeasible.
Subsequently, we prove algorithmic limitations in training deep neural networks even in cases where the underlying problem is well-behaved.
Finally, we end with a positive observation, showing that in quantized versions of classification and deep network training, computability restrictions do not arise or can be overcome to a certain degree.
}

\keywords{Computability, Deep Learning, Classification, Approximation Theory, Quantization}


\pacs[MSC Classification]{68T07, 68T05, 03D80, 65D15}

\maketitle

\section{Introduction}\label{sec1}
With the advent of deep learning \citep{dloverview, overview2, overview3, overview4} a new machine learning approach materialized that provides state-of-the-art results in various tasks. The performance of deep neural networks makes them the go-to strategy to tackle a multitude of problems in relevant applications such as image classification, speech recognition, and game intelligence, as well as more recent developments such as image and sound synthesis, chat tools, and protein structure prediction, to name a few \citep{applications1, applications2, applications3, achievement4, achievement5, achievement6, achievement7}. Although a wide range of literature supports the power of deep learning, such as the well-known Universal Approximation Theorems \citep{universal1, universal2, universal3}, its universality and success still lack theoretical underpinning. Moreover, despite the impressive performance deep learning typically comes at the cost of downsides such as black-box behavior and non-interpretability, as well as instability, non-robustness, and susceptibility to adversarial manipulation \citep{problem1, ras2023explainable, problem2, problem3, Tsipras18RobustnessOdds,  problem44, problem5, problem6, problem7, problem9}.

In certain applications, the highlighted drawbacks, informally summarized by a lack of trustworthiness \cite{trustwothiness, Fettweis2022Trustworthiness}, are tolerable or even avoidable by a human-in-the-loop approach \cite{WU22Hitl}. However, increasing the autonomy of deep learning systems without impairing their trustworthiness poses a great challenge, especially in safety-critical or high-responsibility tasks -- a prime example being autonomous driving \cite{Liu2020ComputingSF, Muhammad2021AutonomVehicle2}. Therefore, it is crucially important for forthcoming deep learning methods to tackle and alleviate the lack of trustworthiness. To that end, we first need to understand whether or to what degree trustworthiness can be realized, e.g., is it feasible to ask for 'hard' performance guarantees or verifiably correct results? We study this question from the viewpoint of algorithmic computations with correctness guarantees and analyze whether trustworthiness can be established from a mathematical perspective.

\subsection{Algorithmic Computability}
Computability theory aims to mathematically model computation and answer questions about the algorithmic solvability and complexity of given problems. The most commonly studied model of computation is the Turing machine \cite{turing}, which is an idealized version of real-world digital hardware neglecting time and space constraints. We distinguish between two different modes of computations – problems on continuous and discrete domains. The former typically represents an idealized scenario whereas the latter
treats a setting closer to the actual realization of digital hardware. For instance, complex real-world problems -- a typical application scenario for deep learning techniques due to their increasing capabilities -- may be represented by a model with continuous state and parameter space despite the eventual implementation on digital hardware. The distinction is nevertheless crucial since the underlying computability concepts depend on the input domain and the associated ground truth solution of the tackled problem.

\paragraph{Computability on continuous domains}
In recent years, there has been an increased interest in the computability of continuous problems, studying the capabilities of inherently discrete digital computers when employed in the real domain. Some results indicate a non-conformity between these two realms and limitations of two types can be identified: By Type 1 failure of computability we refer to the situation where the problem in question cannot be algorithmically solved. This has been found in diverse settings including inverse problems, optimization, information and communication theory, financial mathematics, and linear algebra \citep{adalbertinverse, ziegleroptimization, boche2, yunseok, adalbertpseudo}. On the other hand, in Type 2 failure a computable solver may exist, but we cannot algorithmically learn it from data. There have not been many results in this direction, but it has been shown to occur in the context of inverse problems \citep{hansen} and simple neural networks \cite{yunseok}.

\paragraph{Quantization}
It is unrealistic to expect access to real-valued data and parameters with unlimited precision in many applications since physical measurements generally guarantee only some bounded accuracy. Furthermore, high-precision values can be computationally expensive. For these reasons, real-valued problems can be described by discretized models. A typical approach in practice is to perform real-valued computation under quantization, i.e., associating continuous ranges with a discrete set of values \citep{quantization1, quantization2, quantization3, quantization4}. For instance, quantized deep learning has been a topic of interest, comparing its theoretical and practical capabilities to non-quantized deep learning \citep{quantizednetworks1, quantizednetworks2}. In perhaps the simplest quantization paradigm, fixed-point quantization, fixed numbers $b, k \in \Z$ are chosen, and real numbers are replaced with numbers expressable in base $b$ using up to $k$ decimal places, that is, the set $b^{-k}\Z$. Naturally, the question arises of how these quantization techniques affect the properties of algorithmic computations, including the limitations of computability on continuous domains. The key property of quantization techniques and the resulting models is their amenability to classical computing theory (based on exact binary representations), which as it turns out influences the computability characteristics.    

\paragraph{Classification and Learning} 
We apply the introduced computability frameworks to two tasks closely associated with deep learning, which provide insights from different perspectives. On the one hand, we consider classification problems as an instance of a classic application field for deep learning. This highlights the importance of studying the computability of a specific problem independent of the employed solution strategy: If a tackled problem has computability restrictions, one can not expect to circumvent them with a deep learning approach. On the other hand, one key issue in deep learning regarded as a parametric model is identifying suitable parameters, i.e., finding an appropriate deep learning model, to solve a given task. The search process -- the so-called learning or training -- is typically based on data samples and constitutes the main obstacle in successfully employing deep learning. Thus, it is critical to understand under what circumstances learning can be performed algorithmically and what kind of performance guarantees can be achieved.

\subsection{Our Contributions}
This paper aims to extend the theory of computability in deep learning and to highlight the importance of ground truth descriptions in questions of computability. 
\begin{itemize}
    \item We analyze the field of classification tasks from the computability viewpoint and show in Propositions \ref{prop:decide} and \ref{prop:Connected} that the computability of a classifier is equivalent to the semi-decidability of its classes. 
    Since only specific real sets are semi-decidable, classification on the real domain is typically not algorithmically solvable, i.e., Type 1 failure of computability arises.
    \item Furthermore, we study the computable realizability of training neural networks, asking whether a function representable by a neural network can be learned from data. Theorem \ref{learning} shows that no general learning algorithm applicable to all (real-valued) networks exists implying that Type 2 failure is unavoidable in this scenario.
    \item We also consider strategies for coping with the introduced failures. We show it is impossible to predict when an algorithmic approximation of a non-computable function will fail in Proposition \ref{prop:flag} and Corollary \ref{cor:flag}. On the other hand, Theorems \ref{thm:posLearning} and \ref{thm:posLearning2} indicate that computability limitations in the context of learning can be avoided by relaxing exactness requirements on the learned network.
    \item Finally -- and perhaps most importantly -- we show in Theorems \ref{quantizedlearning} and \ref{quantizedlearningV2} and Proposition \ref{prop:quantizedclassification} that when considering quantized versions of the previous settings, issues of non-computability do not arise. Proposition \ref{prop:quantizationfunction} provides a word of caution, stating that the quantization function itself is non-computable -- we cannot algorithmically determine which quantized values faithfully represent the original (real-valued) problem.
\end{itemize}
Non-computability should not be understood as undermining the power of deep learning, which has been consistently demonstrated but rather as a different perspective on fundamental problems bringing us closer to trustworthy deep learning methods in major applications such as autonomous decision-making and critical infrastructure, e.g., robotics and healthcare. In particular, our non-computability results indicate that theoretical correctness guarantees typically cannot be provided in digital computations of continuous problems, including deep learning. Therefore, innovative and specialized hardware platforms beyond purely digital computations might be a solution \cite{Christensen2022NCSurvey}. However, our positive findings demonstrate that in certain specific settings -- primarily, if the problem domain does not reside in a continuous but discrete domain -- correctness guarantees and trustworthiness can be established for deep learning algorithms in the digital computing framework. Hence, the ground truth description of problems has a decisive influence on the feasible performance guarantees.

\subsection{Previous Work}
Non-computability has been a point of high interest in algorithmic computation since the results of Church \cite{church} and Turing \cite{turing}. In the context of continuous problems, the non-existence of a computable solver (Type 1 failure) has been shown in various applications including optimization, inverse problems, signal processing, and information theory \citep{ziegleroptimization, yunseok, adalbertinverse, adalbertpseudo, boche1, boche2, bastounis21extended, Boche20SpecFac, Boche20BandlimitedSignals, Boche2020SmeetsT, BocheDoSAttacks}.

`Hardness' results in neural network training have a long tradition going back to \cite{Blum92NP, Vu98NP}, where it was shown that the training process can be NP-complete for certain architectures. The infeasibility of algorithmically learning an existing computable neural network (Type 2 failure) in a continuous setting has been shown for the specific context of inverse problems in \cite{hansen} and classification problems in \cite{bastounis2021mathematicsadversarialattacksai}. Furthermore, \cite{yunseok} showed that no algorithm can reach near-optimal training loss on all possible datasets for simple neural networks. Further properties of deep learning from the computability perspective concerning adversarial attacks, implicit regularization, and hardness of approximation were studied in \cite{bastounis2021mathematicsadversarialattacksai, wind2023implicitregularizationaimeets, gazdag2023generalisedhardnessapproximationsci}. A different context of learning an existing neural network has been studied in \cite{samplecomplexity}, where difficulties in the form of an explosion of required sample size were shown, rather than algorithmic intractability. Similar results concerning the sample complexity were established in the framework of statistical query algorithms in \cite{chen2022hardness}. 

A key challenge is to increase the interpretability of deep learning algorithms, i.e., enabling the user to comprehend their decision-making, which is typically hindered due to the black-box behavior of deep learning \cite{ras2023explainable, Olah22MechInt, Kaestner23ExplAI}. Understanding the inner workings of deep learning also points out a way to establish trustworthy methods. Another approach relies on verifying the accuracy and correctness of deep learning methods without explicitly tracing internal computations \cite{Biondi20Certification, Zhang20Certification, Mirman21Certification, Katz17Certification}. However, the findings in \cite{problem8, Bastounis23Classification} indicate that certifying the accuracy and robustness of deep learning in the computability framework is challenging if at all possible, which poses challenges for future applications. A potential direction to cope with this issue was considered in \cite{analog}, where certain inverse problems that are not computable on digital computers were shown to be computable in a model of analog computation enabling implicit correctness guarantees in theory.

\subsection{Outline}
In Section \ref{sec:Def}, we introduce the applied formalisms, including computability theory and neural networks as the main workhorse of deep learning. We present our main results concerning Type 1 and Type 2 failure of computability in Section \ref{sec:failures}. We conclude in Section \ref{sec:Strategy} by studying strategies to cope with computability failures, whereby quantization is a main theme. The proofs of the theorems are provided in Appendix \ref{secA1}.

\section{Notation and Definitions}\label{sec:Def}
We first introduce some basic concepts and notation used in the following.

\subsection{Computability of Real Functions}
We begin by reviewing definitions from real-valued computability theory necessary for our analysis. For a more comprehensive overview, see, for instance, \cite{computableanalysisbook, Pour-El17Computability, avigadbrattka}. We also omit elementary topics of computability theory such as recursive functions and Turing machines. Here we refer the reader to \cite{computabilitybook}.

Previous results in applied computability on the real domain introduce many different, although partly equivalent, versions of computation and computability. We follow standard definitions introduced by Turing \cite{turing}.
\begin{Definition}
    A sequence of rational numbers $(q_k)_{k=1}^\infty\subset \Q$ is \emph{computable} if there exist recursive functions $a,b,s: \N \goto \N$ such that
    \begin{equation*}
        q_k = (-1)^{s(k)}\frac{a(k)}{b(k)}.
    \end{equation*}
    A rational sequence $(q_k)_{k=1}^\infty$ \emph{converges effectively} to $x \in \R$, if there exists a recursive function $e: \N \goto \N$ such that for all $k_0 \in \N$ and all $k \ge e(k_0)$
    \begin{equation*}
        \abs{x - q_k} \le \frac{1}{2^{k_0}}.
    \end{equation*}
    A real number $x \in \R$ is \emph{computable} if there exists a computable rational sequence $(q_k)_{k=1}^\infty$ converging effectively to $x$. Such a sequence is called a \emph{representation} (or a \emph{rapidly converging Cauchy name}) of $x$.
    We denote the set of all computable reals by $\R_c$.
\end{Definition}
Before defining computable real functions, we need to specify what it means for a real sequence to be computable.

\begin{Definition}
    A real sequence $(x_k)_{k=1}^\infty\subset \R$ is \emph{computable} if there exists a computable double-indexed rational sequence $(q_{k,\ell})_{k,\ell=1}^\infty$ such that, for some recursive function $e: \N \times \N \goto \N$ and all $k, \ell_0 \in \N$ and $\ell \ge e(k, \ell_0)$, we have
    \begin{equation*}
        \abs{x_k - q_{k,\ell}} \le \frac{1}{2^{\ell_0}}.
    \end{equation*}
\end{Definition}
\begin{Remark}
    All previous definitions can be extended to $\R^d$ and $\R_c^d$ with $d>1$ by requiring that each (one-dimensional) component or component-wise sequence is computable, respectively.
\end{Remark}

Out of the various definitions of a computable real function (see \cite[Appendix 2.9]{avigadbrattka} for an overview) we introduce two. Borel-Turing computability can be seen as the standard intuitive notion of computation, i.e., an algorithm approximating a given function to any desired accuracy exists. On the other hand, Banach-Mazur computability is the weakest common definition of computability meaning that a function is not computable in any usual sense if it is not Banach-Mazur computable.
\begin{Definition} \label{def:compfunction}
    Let $D \sube \R^d$. A function $f: D \goto \R_c^m$ is 
    \begin{enumerate}[label={(\arabic*)}]
        \item \emph{Borel-Turing computable} if there exists a Turing machine $M$ such that, for all $\x \in D\cap \R_c^d$ and all representations $(\vect{q}_k)_{k=1}^\infty$ of $\x$, the sequence $(M(\vect{q}_k))_{k=1}^\infty$ is a representation of $f(\x)$;
        \item \emph{Banach-Mazur computable} if for all computable sequences $(\x_k)_{n=1}^\infty \sube D\cap \R_c^d$ the sequence $(f(\x_k))_{k=1}^\infty$ is also computable.
    \end{enumerate}
\end{Definition}
\begin{Remark}\label{rm:CompFunc}
    For a Borel-Turing computable function, there exists a Turing machine taking a sequence of increasingly precise approximations of the input and producing increasingly accurate approximations of the output, whereas, a Banach-Mazur computable function only guarantees that it preserves the computability of real sequences. However, it is well known that all Borel-Turing computable functions are also Banach-Mazur computable, and computable functions in either sense are continuous -- that is, continuous on $\R_c$ with the inherited topology \cite{avigadbrattka}. For simplicity, we often refer to ``computable'' functions rather than ``Borel-Turing computable'', as this is our framework's standard version of computability. We explicitly specify whenever we apply the notion of Banach-Mazur computability.
\end{Remark}
The main theme of this paper revolves around the failure of algorithmic computations studied from the perspective of computability. In particular, we ask in what circumstances failures arise, and under what additional conditions they potentially can be avoided. Thereby, we associate in the real domain the more intuitive term ``algorithm'' with ``Borel-Turing computable function'' and we distinguish two cases of algorithmic failure:
\begin{itemize}
    \item We say that a problem suffers from \emph{Type 1 failure of computability} if it has no computable solver, that is, for any algorithm there exists an instance of the problem to which the algorithm provides an incorrect solution. 
    \item A problem is subject to \emph{Type 2 failure of computability} if a solver cannot be algorithmically found based on data, that is, for any learning algorithm $\Gamma$ there exists a problem instance $s$ such that for any dataset $\mathcal{X}$ the output $\Gamma(\mathcal{X})$ of the algorithm is not a correct solver of $s$.
\end{itemize}
Note that Type 1 as a special case of Type 2 failure is more fundamental since a (computable) solution cannot be learned from data if it does not exist. However, we show in Subsection \ref{sec:learn} that instances of Type 2 free of Type 1 failure exist in the context of training neural networks. Here, the problem instance is an unknown function and the goal of the learning algorithm is to find a neural network that represents the sought function based on samples.


\subsection{Neural Networks}
\label{sec:nndef}

In this paper, we restrict our attention to feedforward neural networks. For additional background on (deep) neural networks theory  -- usually referred to as deep learning -- we point to \cite{deeplearningbook}. We characterize neural networks by their structure, coined architecture, and the associated sequence of their parameters, i.e., their weight matrices and bias vectors.

\begin{Definition}
	Let $L \in \N$. An \emph{architecture} of depth $L$ is a vector $S := (N_0, N_1, \dots, N_{L-1}, N_L) \in \N^{L+1}$. A \emph{neural network} with architecture $S$ is a sequence of pairs of \emph{weight matrices} and \emph{bias vectors} $((A_\ell, \vect{b}_\ell))_{\ell=1}^L$ such that $A_\ell \in \R^{N_\ell \times N_{\ell-1}}$ and $\vect{b}_\ell \in \R^{N_\ell}$ for all $\ell = 1, \dots, L$. We denote the set of neural networks with architecture $S$ by $\mathcal{NN}(S)$ and the total number of parameters in the architecture by $N(S) := \sum_{\ell=1}^{L} (N_\ell N_{\ell-1} + N_\ell)$. 
\end{Definition}
\begin{Remark}
    Typically, we consider $b_L := \vect{0}$ and denote the input dimension $N_0 := d$. Also, throughout this paper, we focus on the case $N_L = 1$ for simplicity of presentation, even though the results can be reformulated for the general case.
\end{Remark}
The architecture and the parameter then induce the network's input-output function, the so-called realization.
\begin{Definition}
    For $\Phi \in \mathcal{NN}(S)$, $D \sube \R^{N_0}$, and $\sigma: \R \goto \R$ denote by $R_\sigma^D(\Phi): D \goto \R^{N_L}$ the \emph{realization} of the neural network $\Phi$ with \emph{activation} $\sigma$ and domain $D$, that is,
 \begin{equation*}
     R_\sigma^D(\Phi) := T_L \circ \sigma \circ \dots \circ \sigma \circ T_1|_{D},
 \end{equation*}
 where $\Phi = ((A_\ell, \vect{b}_\ell))_{\ell=1}^L$ and $T_\ell(\vect{x}) := A_\ell \vect{x} + b_\ell$, $\ell = 1, \dots, L$.
\end{Definition}

Parameters of neural networks are almost always the result of a learning algorithm. Let us briefly recapitulate the learning process since it pertains to our discussion on computability. A learning algorithm, typically some version of stochastic gradient descent, receives as input a dataset of sample pairs $(\vect{x}_i, \vect{y}_i)_{i=1}^n$, which are usually sampled from some underlying goal function $f$, that is, $f(\x_i) = \vect{y}_i$. 

Machine learning algorithms aim to find a function $\hat{f}$ approximating $f$. In deep learning, we typically take $\hat{f}:= R_\sigma^D(\Phi)$ for some neural network $\Phi$ with a fixed architecture $S$. The algorithm initializes the network with typically random parameters, followed by an iterative optimization of a loss function $\mathcal{L}: \mathcal{NN}(S) \times \left(\R^{N_0} \times \R^{N_L}\right)^n \goto \R$. A popular choice is the mean square error
\begin{equation*}
    \mathcal{L}(\Phi, \x_1, \vect{y}_1, \dots, \x_n,\vect{y}_n) := \frac{1}{n} \sum_{i=1}^n \norm[]{R_\sigma^D(\Phi)(\x_i) - \vect{y}_i}^2,
\end{equation*}
where $\norma{\cdot}$ indicates the Euclidean norm throughout the paper. Note that there exists a homeomorphism between neural networks with architecture $S$ and their parameter space $\R^{N(S)}$, that is, $\mathcal{NN}(S) \approx \R^{N(S)}$. Thus we can view the learning algorithm as a Borel-Turing computable function $\Gamma: \R_c^{n(N_0+N_L)} \goto \R_c^{N(S)}$, which receives a set of samples $(\vect{x}_i, \vect{y}_i)_{i=1}^n$ and returns weights and biases of the optimized network. More precisely, given rational sequences representing the real data the algorithm produces a sequence representing weights and biases. Therefore, the parameters of the resulting neural network will be computable numbers. To distinguish between networks with real and computable parameters, we introduce the notation $\mathcal{NN}_c(S)$ for the set of neural networks with architecture $S$ and parameters in $\R_c$. Again, we can establish a homeomorphism between neural networks $\mathcal{NN}_c(S)$ and their parameters $\R_c^{N(S)}$ with the inherited topology. An important property of any network in $\mathcal{NN}_c(S)$ is that its realization is a computable function provided that the applied activation function is computable.

Finally, for ease of presentation of our analysis concerning Type 2 failure of neural networks, we apply the following concise form to describe sampled data sets. 
\begin{Definition}
    Given $n \in \N$, $f: \R^d \to \R^m$ and $D \sube \R^d$, we denote by $\mathcal{D}_{f,D}^n$ the set of all \emph{datasets} of size $n$ generated from $f$ on the input domain $D$, that is,
    \begin{equation*}
        \mathcal{D}_{f,D}^n := \left\{ (\x_i, f(\x_i))_{i=1}^n \in (\R^d\times\R^m)^n \separ \x_i \in D, i = 1,\dots,n \right\}.
    \end{equation*}
    For a neural network $\Phi \in NN(S)$ with activation $\sigma$ we denote for short $\mathcal{D}_{\Phi,D}^n := \mathcal{D}_{R_\sigma^D(\Phi),D}^n$.
\end{Definition}

\section{Computability Limitations}
\label{sec:failures}

Our first goal is to study and actually establish Type 1 and Type 2 failure for general problem descriptions, namely in classification and learning. Subsequently, we will analyze approaches to cope and ideally lessen the derived failures without compromising the generality of the considered problems. The two settings -- classification and learning -- are chosen because of their importance in deep learning. Classification is one of the main objectives tackled by deep learning, whereas learning is one key component of the method. Having reliable, flexible, and universal learning algorithms hugely benefits the applicability of deep learning in various fields. Thus, classification and learning are suitable choices to highlight the consequences of computability failures.

\subsection{Type 1 Failure in Classification}\label{sec:decide}

We study classification problems from the viewpoint of computability theory and show that we can find Type 1 failure of computability. A classification problem is modeled by a function 
\begin{equation*}
    f: D \to \{1,\dots,C\}, \qquad D \subset \R^d,\quad  C \in \N,
\end{equation*} 
that assigns each input $\x \in D$ a corresponding class $c \in \{1,\dots,C\}$. A typical example is image classification where the input domain $D$ is for instance given by $D=[0,256]^{h\times w}$ with $[0,256]$ and $h,w \in \N$ encoding color and size (height and width) of an image, respectively. The range $[0,256]$ may also be quantized so that a discrete set such as $\{0,\dots,256\}$ represents the input domain. First, we explore the continuous setting before turning to quantized problems in Subsection \ref{sec:quantized}. The properties of the input domain play a crucial role in establishing Type 1 failure. It turns out that quantization alleviates Type 1 limitations arising in the continuous case. 

As described in Subsection \ref{sec:nndef}, the goal of deep learning is to learn a function $\hat{f}: D \cap \R_c^d \to \{1,\dots,C\}$ based on samples $(\x_i, f(\x_i))_{i=1}^n \sube D \times \{1,\dots, C\}$ such that $\hat{f}$ is close to $f$ with respect to a suitable measure. Hence, a crucial question is whether $\hat{f}$ can be obtained from an algorithmic computation given a specific closeness condition. Equivalently, this question can be expressed in terms of (semi-)decidability on the input domain of $f$ in $\R^d$ if $\hat{f}$ is expected to exactly emulate $f$, i.e., $\hat{f} = f|_{\R_c^d}$. 
\begin{Definition}
   A set $A \sube D \cap \R_c^d$ is
    \begin{itemize}
        \item \emph{decidable} in $D$, if its indicator function $1_A: D \cap \R_c^d \goto \R_c$ is computable;
        \item \emph{semi-decidable} in $D$, if there exists a computable function $f: D' \goto \R_c$, $D' \sube D \cap \R_c^d$, such that $A \sube D'$ and $f = 1_A|_{D'}$.
    \end{itemize}
\end{Definition}
\begin{Remark}    
    The notion of (semi-)decidability can be explicitly expressed via algorithms in the following way.
    The set $A$ is Borel-Turing decidable in $D$ if there exists a Turing machine $M$ taking as inputs representations of $\x \in D \cap \R_c^d$ which correctly identifies after finitely many iterations whether $\x \in A$ or $\x \in D\setminus A$. Similarly, $A$ is semi-decidable in $D$ if there exists a Turing machine $M$ taking as inputs representations of $\x \in D$ which correctly identifies (after finitely many iterations) every input $\x \in A$, but which may run forever for $\x \in D\setminus A$.
\end{Remark}
Recall that computable functions are necessarily continuous on $\R_c^d$. However, indicator functions are discontinuous on $\R_c^d$ (excluding the trivial cases $\R_c^d$ and $\emptyset$). Thus, only sets of the type $\R_c^d\cup B$, $B \subset \R^d\setminus \R^d_c$, are decidable in $\R^d$. Therefore, decidability in $\R^d$ is a very restrictive notion that typically will not be satisfied by a classifier $f$. Regarding semi-decidability, the following equivalence is immediate due to the discrete image of $f$. In particular, the proposition provides a necessary condition for learning a perfect emulator $\hat{f} = f|_{\R_c^d}$ since computability is a prerequisite for learnability. 
\begin{Proposition} \label{prop:decide}
    Let $D \sube \R^d$ and consider $f:D \to \{1,\dots,C\}$. Then, $f|_{\R_c^d}$ is computable if and only if each set $f^{-1}(i)$, $i=1,\dots,C$, is semi-decidable in $D$.
\end{Proposition}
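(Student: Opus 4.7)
The plan is to prove both directions by exploiting that the range $\{1,\dots,C\}$ is a finite set of integers, which lets approximation-based computability interact cleanly with exact recognition.

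For the forward direction, I would assume $f|_{\R_c^d \cap D}$ is (Borel-Turing) computable and construct a semi-decision procedure for each class $f^{-1}(i)$. Given a representation $(\vect{q}_k)_{k=1}^\infty$ of some $\vect{x} \in D \cap \R_c^d$, the Turing machine $M$ computing $f$ produces a rational sequence $(M(\vect{q}_k))_{k=1}^\infty$ that effectively converges to $f(\vect{x}) \in \{1,\dots,C\}$. Since the output is an integer, as soon as the algorithm guarantees accuracy better than $1/2$, the approximation pins down a unique integer. Concretely, I would wait for the iteration index prescribed by the modulus of convergence for tolerance $1/4$; whatever integer lies within $1/4$ of that rational must equal $f(\vect{x})$. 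This procedure terminates for every $\vect{x} \in D \cap \R_c^d$, so in fact each $f^{-1}(i) \cap \R_c^d$ turns out to be decidable in $D$, hence semi-decidable. (So this direction gives something slightly stronger than claimed, which I would note.)

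For the backward direction, I would assume each $f^{-1}(i)$ is semi-decidable in $D$ via a Turing machine $M_i$ (following the equivalent algorithmic reformulation in the Remark) and build a single Turing machine $M$ that computes $f|_{\R_c^d \cap D}$. Given a representation of $\vect{x}$, run $M_1, \dots, M_C$ in parallel (dovetailing the computations). Since $f$ is a total function into $\{1,\dots,C\}$, the point $\vect{x}$ lies in exactly one class $f^{-1}(i^\ast)$, so exactly one of the $M_i$ halts; at that moment, output the constant sequence $(i^\ast)_{k=1}^\infty$, which is a rapidly converging Cauchy name for the integer $i^\ast = f(\vect{x})$. This establishes Borel-Turing computability of $f|_{\R_c^d \cap D}$.

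The only subtlety I anticipate is ensuring that the two formulations of semi-decidability — the function-theoretic one in the definition and the machine-theoretic one in the Remark — are used consistently. For the forward direction I rely on the function-theoretic formulation (the indicator on the classes is computable as a restriction of $f$, up to an affine change of variables), while for the backward direction it is cleaner to work with the algorithmic formulation and parallel simulation. Everything else, including the disjointness of the classes that guarantees exactly one procedure halts, follows from $f$ being a function into a finite set, so I expect no serious technical obstacle beyond bookkeeping.
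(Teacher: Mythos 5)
Your proof is correct and is precisely the argument the paper has in mind: the paper gives no written proof of this proposition, stating only that the equivalence is ``immediate due to the discrete image of $f$'', and your two directions (rounding a $1/4$-accurate approximation of the integer output to recover $f(\x)$ exactly, and dovetailing the $C$ semi-deciders until the unique accepting one halts, then emitting the constant representation) are the standard way to make that immediacy precise. Your side observation that the forward direction in fact yields decidability of each class is consistent with the paper's own remark following the proposition.
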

\begin{Remark}
        Note that if each set $f^{-1}(i)$, $i=1,\dots,C$, is semi-decidable in $D$, then also each set $f^{-1}(i)$ is decidable in $D$. Therefore, for $f$ to be computable, $D$ has to have a specific structure. In particular, if $D \cap \R_c^d$ is a connected set homeomorphic to $\R^d_c$, e.g., $D=(0,1)^d$, then $f$ is not computable unless it is constant on $D \cap \R_c^d$. However, in this case, the classification problem is itself trivial. Thus, a necessary condition for computability is that the sets $f^{-1}(i)$ are separated to a certain degree. As a simple example for this type of problems consider $D=f^{-1}(1) \cup f^{-1}(2)$ with  $C=2$, $f^{-1}(1)=(0,1)$ and $f^{-1}(2)=(2,3)$. Here, a simple check of whether a given input is smaller or larger than 1.5 is sufficient to determine the associated class of the input.  
\end{Remark}
These observations can be summarized in the following conclusion, where $\text{dist}(A,B)$ denotes the set distance $\text{dist}(A,B) = \inf_{a \in A, b \in B} \norm[]{a-b}$.
\begin{Proposition}\label{prop:Connected}
    Let $f: D \goto \{1, \dots, C\}$ be a function such that there exists $i \neq j \in \{1, \dots, C\}$ with $\text{dist}(f^{-1}(i), f^{-1}(j)) = 0$. 
    Then, $f|_{\R_c^d}$ is not computable.
\end{Proposition}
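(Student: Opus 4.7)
My plan is to argue by contradiction, combining Proposition \ref{prop:decide} with the continuity of computable functions.

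Assume that $f|_{\R_c^d}$ is computable. By Proposition \ref{prop:decide} each preimage $f^{-1}(k)$ is semi-decidable in $D$, and by the remark following that proposition each $f^{-1}(k)$ is in fact decidable in $D$. Decidability means the indicator $1_{f^{-1}(i)}$ is Borel-Turing computable on $D\cap\R_c^d$, and so by Remark \ref{rm:CompFunc} it is continuous on $D\cap\R_c^d$ with the subspace topology inherited from $\R^d$. Since this indicator takes only the values $0$ and $1$, continuity forces it to be locally constant, so both $f^{-1}(i)\cap\R_c^d$ and $f^{-1}(j)\cap\R_c^d$ are clopen in $D\cap\R_c^d$. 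Equivalently, around every $x \in f^{-1}(i)\cap\R_c^d$ there exists $r_x > 0$ with $B(x,r_x) \cap f^{-1}(j) \cap \R_c^d = \emptyset$, and symmetrically with $i$ and $j$ swapped.

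Next I would extract from the distance hypothesis sequences $a_n \in f^{-1}(i)$ and $b_n \in f^{-1}(j)$ with $\|a_n - b_n\| \to 0$, and promote them to computable witnesses. Using density of $\R_c^d$ in $\R^d$ together with the continuity of the indicator on $D\cap\R_c^d$, a sufficiently fine computable approximation $\tilde a_n$ of $a_n$ can be chosen so that $\tilde a_n \in f^{-1}(i)\cap\R_c^d$; the analogous construction gives $\tilde b_n \in f^{-1}(j)\cap\R_c^d$ with $\|\tilde a_n - \tilde b_n\| \to 0$. For $n$ large enough, $\tilde b_n$ would then lie in the ball $B(\tilde a_n, r_{\tilde a_n})$ from the previous paragraph, contradicting that this ball is disjoint from $f^{-1}(j)\cap\R_c^d$.

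The main obstacle I foresee is this promotion step: the preimages $f^{-1}(i), f^{-1}(j)$ are not assumed to be open in $D$, so the computability of the indicator is only guaranteed on the subset $D'\subseteq D\cap\R_c^d$ where the semi-decidability witness is defined, and generic computable approximations of the possibly non-computable points $a_n, b_n$ need not land in the correct preimage. I would resolve this at the Turing-machine level: a Borel-Turing computable function must commit its output after reading only finitely many prefix rationals of any representation, so once $\|\tilde a_n - \tilde b_n\|$ is chosen small enough that rapidly converging Cauchy representations of $\tilde a_n$ and $\tilde b_n$ share arbitrarily long prefixes, the machine computing $1_{f^{-1}(i)}$ is forced to return the same value on both, contradicting $1_{f^{-1}(i)}(\tilde a_n)=1$ and $1_{f^{-1}(i)}(\tilde b_n)=0$. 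This finite-prefix viewpoint sidesteps worrying about where the common limit point lives and yields the contradiction directly from the definition of Borel-Turing computability.
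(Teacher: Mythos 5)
Your opening reduction (computable $\Rightarrow$ each class semi-decidable $\Rightarrow$ each class decidable $\Rightarrow$ the indicators continuous, hence locally constant, on $D\cap\R_c^d$) is exactly the route the paper intends; it gives no separate proof of this proposition and presents it as a summary of the remark following Proposition \ref{prop:decide}. The gap is in the second half, and although you correctly sense where it is, the finite-prefix patch does not close it. Two things go wrong. First, the promotion step: $\text{dist}(f^{-1}(i),f^{-1}(j))=0$ only supplies pairs $(a_n,b_n)$ that get close to each other \emph{somewhere} in $D$; nothing guarantees that $f^{-1}(i)$ or $f^{-1}(j)$ contains any computable points near $a_n$ or $b_n$ (the classes could even be disjoint from $\R_c^d$ there), so $\tilde a_n,\tilde b_n$ need not exist. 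Second, and more fundamentally, the prefix argument requires a \emph{single} computable point $\tilde a\in f^{-1}(i)$ admitting points of $f^{-1}(j)\cap\R_c^d$ within $2^{-N}$ of it, where $N$ is the number of input terms the machine reads before committing on $\tilde a$ — and $N$ depends on $\tilde a$ and its representation. Distance zero does not provide such a common accumulation point: the close pairs may occur at different locations for each $n$, and the machine is free to read ever more precision at each of them.

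That this cannot be repaired from the stated hypotheses alone is shown by $D=\{n: n\ge 2\}\cup\{n+\tfrac{1}{3n}: n\ge 2\}$ with $f(n)=1$ and $f(n+\tfrac{1}{3n})=2$: here $\text{dist}(f^{-1}(1),f^{-1}(2))=0$, yet $f$ is Borel-Turing computable (approximate the input to accuracy $\tfrac18$ to determine $n$, then to accuracy $\tfrac{1}{12n}$ to decide the class by an exact rational comparison). So any correct argument must invoke more than the bare distance condition — for instance, that computable points of the two classes accumulate at a common point, as happens for the connected, full-interior domains the paper has in mind. Under such an added hypothesis your prefix argument goes through (and is then essentially the standard proof that computable functions are continuous on $\R_c^d$); as written, it attempts to prove a statement that fails at the stated level of generality, so the missing localization is not a technicality but the entire content of the proposition.
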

\begin{Remark}
    A desired property in certain applications is identifying inputs not belonging to the known classes. For instance, unknown classes or erroneous inputs, which were not determined beforehand or cannot be unequivocally assigned to the known classes, respectively, may be part of the feasible input set. 
    Formally, we can express this setting by a classifier 
    \begin{equation*}
        \hat{f^\prime}: D^\prime \to \{1,\dots,C+1\}, \quad D \subset D^\prime \subset \R^d,
    \end{equation*}
    where $D^\prime$ is connected, such that $\hat{f^\prime}(\x) = f(\x)$ for $\x \in D$ and $\hat{f^\prime}(\x) = C+1$ otherwise. However, we immediately observe that $f^\prime$ is not computable and the desired property cannot be achieved.
\end{Remark}
The results in Proposition \ref{prop:decide} and \ref{prop:Connected} imply that Type 1 computability failure is unavoidable in sufficiently general classification problems. Our next step is to study whether algorithmic solvability can be expected in the absence of Type 1 failure.

\subsection{Type 2 Failure in Deep Learning} \label{sec:learn}

We now focus on Type 2 failure of computability, i.e., situations where a computable approximator may exist but cannot be algorithmically found based on data. We explore this phenomenon in the context of deep learning, within a general framework by studying the learnability of functions that can be represented by a neural network from data, independently of the concrete application. This includes any instance of deep learning where Type 1 computability failure does not arise, going beyond the previous context of classification.

The following theorem states that for any learning algorithm, there exist functions representable by computable neural networks (i.e., not suffering from Type 1 failure) that the algorithm cannot learn from data. This implies that there is no universal algorithm for training neural networks (based on data), even when a correct solving network exists, as deep learning suffers from Type 2 failure of computability. More precisely, for any learning algorithm $\Gamma$, there exists a computable neural network $\Phi$ such that given any training data set generated from $\Phi$, the algorithm $\Gamma$ cannot reconstruct any neural network with the same realization as $\Phi$.

\begin{Theorem}
	\label{learning}
	Let $\sigma: \R \goto \R$ be a Lipschitz continuous, but not affine linear activation function, such that $\sigma|_{\R_c}$ is Banach-Mazur computable. Let $S = (d, N_1, \dots, N_{L-1}, 1)$ be an architecture of depth $L \ge 2$ with $N_1 \ge 3$. Let $D \sube \R_c^d$ be bounded with a nonempty interior.
	
	Then, for all $\varepsilon > 0$, $n \in \N$, and all Banach-Mazur computable functions $\Gamma: (\R_c^d\times \R_c)^n \goto \R_c^{N(S)}$ there exists $\Phi \in \mathcal{NN}_c(S)$ such that for all $\mathcal{X} \in \mathcal{D}_{\Phi,D}^n$ and all $\Phi^\prime \in \mathcal{NN}_c(S)$ with $R_\sigma^D(\Phi^\prime) = R_\sigma^D(\Phi)$ we have
	\begin{equation}\label{eq:Weigtdist}
		\norm[2]{\Gamma(\mathcal{X}) - \Phi^\prime} > \varepsilon.
	\end{equation}
\end{Theorem}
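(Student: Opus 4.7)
The plan is to construct, for the given Banach-Mazur computable algorithm $\Gamma$, a specific computable network $\Phi \in \mathcal{NN}_c(S)$ that is adversarial against $\Gamma$, in the sense that no $n$-sample dataset from $\Phi$ yields parameters close to any equivalent network. Two main ingredients I would exploit are, first, the continuity of any Banach-Mazur computable function on the computable reals with the inherited topology (Remark \ref{rm:CompFunc}), and second, the fact that for $\sigma$ Lipschitz but not affine linear and for $N_1 \ge 3$, the realization-equivalence class of a generic network is a discrete collection of well-separated points in $\R^{N(S)}$ (permutation orbit together with any symmetries of $\sigma$), so that a ``close'' output must land near one of finitely many specific parameter vectors.

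My first step would be to fix a computable open ball $B \subset D \cap \R_c^d$ (possible since $D$ is bounded with nonempty interior) and design a computable family $\{\Phi_\alpha\}_{\alpha \in I}$ of candidate networks over a computable interval $I \subset \R_c$, chosen so that (i) $\alpha \mapsto \Phi_\alpha$ is Banach-Mazur computable and traces a set of parameters whose diameter exceeds $4\varepsilon$ modulo equivalence; (ii) distinct $\alpha$ produce distinct realizations on $B$, so the $\Phi_\alpha$ are pairwise non-equivalent; and (iii) the data points $\mathcal{X}_\alpha$ generated from any fixed sample configuration depend continuously on $\alpha$. The Lipschitz non-affine structure of $\sigma$ combined with $N_1 \ge 3$ provides the necessary nonlinearity to construct such a family with discrete equivalence classes; I would build it by perturbing one fixed weight in the first layer across $I$ while keeping all other parameters far apart.

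Next, I would combine the two ingredients through a diagonalization. Let $\{\mathcal{P}_k\}_{k \in \N}$ be a computable enumeration of rational sample configurations in $D^n$. By continuity of $\Gamma$, the map $\alpha \mapsto \Gamma\!\left(\mathcal{X}_\alpha(\mathcal{P}_k)\right)$ is continuous for each fixed $\mathcal{P}_k$. Using the discreteness of the equivalence orbit and the $4\varepsilon$-spread of the family, any attempt by $\Gamma$ to ``track'' a single equivalence copy continuously in $\alpha$ is incompatible with the global parameter spread, so for each $k$ I can identify a computable sub-interval $I_k \subset I_{k-1}$ on which $\Gamma\!\left(\mathcal{X}_\alpha(\mathcal{P}_k)\right)$ stays $\varepsilon$-away from every equivalent copy of $\Phi_\alpha$. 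The nested intersection $\bigcap_k I_k$ yields a computable $\alpha^\ast \in \R_c$, and $\Phi := \Phi_{\alpha^\ast}$ is then the desired network, belonging to $\mathcal{NN}_c(S)$ by the Banach-Mazur computability of the construction.

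The main obstacle is the strong universal quantifier ``for all $\mathcal{X} \in \mathcal{D}_{\Phi,D}^n$'', because the rational configurations $\{\mathcal{P}_k\}$ cover only a dense subset of $D^n$, whereas any real sample configuration must be ruled out. I would handle this by exploiting uniform continuity of $\Gamma$ over a computable compact sub-cube of $D^n$ together with the Lipschitz continuity of $\sigma$, which propagates to a Lipschitz dependence of the dataset on the sample points. This allows me to thicken each ``blocking'' interval $I_k$ so that it rules out a computable ball of sample configurations around $\mathcal{P}_k$, and density then ensures the bound $\norm[2]{\Gamma(\mathcal{X}) - \Phi'} > \varepsilon$ extends from rational to all $\mathcal{X} \in \mathcal{D}_{\Phi, D}^n$. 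The finiteness of $N(S)$ and $n$ together with the continuity of $\Gamma$ makes this uniform extension feasible and completes the proof.
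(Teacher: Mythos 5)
Your proposal has two genuine gaps. The first is the structural claim that the realization-equivalence class of a generic network is a \emph{discrete}, well-separated collection of parameter vectors (permutation orbit plus symmetries of $\sigma$). This is false for the main activations the theorem is meant to cover: for ReLU, positive homogeneity yields a continuum of equivalent reparametrizations (rescale a row of $A_1$ by $\lambda>0$ and the corresponding column of $A_2$ by $1/\lambda$), so ``close to one of finitely many parameter vectors'' is unavailable, and the quantifier over all $\Phi'$ with $R_\sigma^D(\Phi')=R_\sigma^D(\Phi)$ cannot be discharged this way. The second, more fundamental gap is that your family $\{\Phi_\alpha\}$, obtained by continuously perturbing a single first-layer weight, creates no tension with the continuity of $\Gamma$: the datasets and the parameters both vary continuously in $\alpha$, so a continuous $\Gamma$ could in principle track an equivalent copy. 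The indispensable ingredient, which your construction does not supply, is the failure of \emph{inverse stability} of the realization map: there exist pairs of networks whose realizations (hence all datasets sampled from them) become arbitrarily close in the supremum norm while \emph{every} pair of equivalent reparametrizations stays arbitrarily far apart in parameter space. The paper imports exactly this from Theorem 4.2 of \cite{topology} (Lemma \ref{lem:topology}, with the uniformity over all reparametrizations derived in Remark \ref{rm:topology} from the bound $\text{Lip}(R_\sigma^D(\Phi)) \le C\norma{\Phi}_{\text{scaling}}$) and then concludes with a short, purely non-constructive continuity argument (Lemma \ref{lem:computability}).

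A secondary but real problem is effectivity. Your nested-interval diagonalization needs to \emph{compute} the blocking subintervals $I_k$ from the behavior of $\Gamma$ and to extract a computable $\alpha^\ast$ from $\bigcap_k I_k$; Banach-Mazur computability guarantees only plain continuity on $\R_c$, not a computable modulus of continuity, and a nested intersection of intervals need not contain a computable point without effective control of the shrinkage rate. The paper sidesteps all of this by arguing by contradiction: it never constructs the adversarial $\Phi$ effectively, it only shows that a successful continuous $\Gamma$ would force the separated solution sets $\Xi(\iota_k^1)$ and $\Xi(\iota_k^2)$ to come within $\varepsilon$ of each other.
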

\begin{Remark}
    The assumption that $\sigma$ is not affine linear excludes none of the commonly used activations such as ReLU, tanh, or sigmoid. Only functions like $\sigma(t) = at+b$ are not permitted. Moreover, the computability assumption concerning $\sigma$ is not restrictive since it guarantees a computable realization, a prerequisite for subsequent algorithmic evaluation in usage. 
\end{Remark}
As a consequence, we cannot reconstruct the original input-output function.
\begin{Corollary}\label{cor:nonLearning}
	Under the assumptions of Theorem \ref{learning}, there exists no Banach-Mazur computable function $\Gamma: (\R_c^d\times \R_c)^n \goto \R_c^{N(S)}$ for $n \in \N$ such that for all $\Phi \in \mathcal{NN}_c(S)$ there exists a dataset $\mathcal{X} \in \mathcal{D}_{\Phi,D}^n$ satisfying
	\begin{equation*}
		R_\sigma^D(\Gamma(\mathcal{X})) = R_\sigma^D(\Phi).
	\end{equation*}
\end{Corollary}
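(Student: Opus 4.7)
The plan is to deduce the corollary from Theorem \ref{learning} by a short contradiction argument. Assume, for contradiction, that there is a Banach-Mazur computable map $\Gamma\colon (\R_c^d\times \R_c)^n \goto \R_c^{N(S)}$ with the stated reconstruction property: for every $\Phi \in \mathcal{NN}_c(S)$ some dataset $\mathcal{X}_\Phi \in \mathcal{D}_{\Phi,D}^n$ satisfies $R_\sigma^D(\Gamma(\mathcal{X}_\Phi)) = R_\sigma^D(\Phi)$.

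Next I would apply Theorem \ref{learning} to this same $\Gamma$ with any fixed $\varepsilon > 0$, for instance $\varepsilon = 1$. The theorem then supplies a network $\Phi \in \mathcal{NN}_c(S)$ such that for every $\mathcal{X} \in \mathcal{D}_{\Phi,D}^n$ and every $\Phi' \in \mathcal{NN}_c(S)$ with $R_\sigma^D(\Phi') = R_\sigma^D(\Phi)$, the parameter distance $\norm[2]{\Gamma(\mathcal{X}) - \Phi'}$ exceeds $\varepsilon$. Invoking the assumed property of $\Gamma$ for precisely this $\Phi$ yields a dataset $\mathcal{X}_\Phi$ for which $\Gamma(\mathcal{X}_\Phi)$ realizes the same function as $\Phi$. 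Because $\Gamma$ takes values in $\R_c^{N(S)}$, and since parameter tuples in $\R_c^{N(S)}$ canonically correspond to networks in $\mathcal{NN}_c(S)$ under the homeomorphism recalled in Section \ref{sec:nndef}, the output $\Phi^\star := \Gamma(\mathcal{X}_\Phi)$ is itself an admissible choice for the $\Phi'$ in Theorem \ref{learning}. With this choice the conclusion of the theorem becomes $0 = \norm[2]{\Gamma(\mathcal{X}_\Phi) - \Gamma(\mathcal{X}_\Phi)} > \varepsilon$, which is the sought contradiction.

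The argument is essentially a contrapositive reading of Theorem \ref{learning}; there is no extra machinery to introduce. The only point that merits verification, rather than being purely mechanical, is that the image of $\Gamma$ genuinely lies in the computable network class $\mathcal{NN}_c(S)$ (and not merely in $\mathcal{NN}(S)$), so that it can play the role of $\Phi'$ in the theorem. This is immediate from the typing of $\Gamma$, and it is the only place where the Banach-Mazur computability of $\Gamma$ -- rather than some weaker hypothesis -- is quietly used.
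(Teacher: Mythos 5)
Your proof is correct and matches the paper's (implicit) argument: the paper states the corollary as an immediate consequence of Theorem \ref{learning}, and the intended reasoning is exactly your observation that if $R_\sigma^D(\Gamma(\mathcal{X}_\Phi)) = R_\sigma^D(\Phi)$, then $\Phi' := \Gamma(\mathcal{X}_\Phi) \in \mathcal{NN}_c(S)$ is an admissible choice in \eqref{eq:Weigtdist}, forcing $0 > \varepsilon$. The only minor imprecision is your closing remark: the fact that $\Gamma(\mathcal{X}_\Phi)$ lies in $\mathcal{NN}_c(S)$ comes from the codomain $\R_c^{N(S)}$ in the typing of $\Gamma$, while the Banach-Mazur computability of $\Gamma$ is what is needed to invoke Theorem \ref{learning} in the first place; this does not affect the validity of the argument.
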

Note that the distance in \eqref{eq:Weigtdist} in Theorem \ref{learning} is shown in the weight space of neural networks and it is known that networks with weights far apart can still represent functions close together \cite{topology}. Despite the failure of exact reconstruction obtained in Corollary \ref{cor:nonLearning}, a neural network with the same architecture, which approximates the desired realization to an arbitrary degree, might still exist. We will return to this observation in Subsection \ref{sec:approximatelearning}, where we consider strategies for coping with non-computability by relaxing the exactness condition.

\section{Strategies for Failure Circumvention}\label{sec:Strategy}
Can we overcome Type 1 and Type 2 limitations described in the previous section? We analyze different approaches to either reformulate or relax the tackled problems thus making them less amenable to computability failures. In particular, we explore two strategies. First, we study the effect of incorporating a reasonable error mode (depending on a given task) in the computation. Subsequently, we investigate the impact of moving the problem from the real to a discrete space via quantization.

\subsection{Error Control Strategies}
The requirement of exact emulation $\hat{f} = f|_{\R_c^d}$ of a classification function $f$ or exact reconstruction of neural networks as analyzed in Section \ref{sec:failures} may be too strict. In a practical setting, errors may be unavoidable or even acceptable to a certain degree. In particular, approximation of $f|_{\R_c^d}$ via $\hat{f}$ or reconstructing an approximate network based on an appropriate metric is a simpler task than exact emulation or reconstruction, respectively. However, in both cases, we certainly would like to have guarantees either in the form of a description of inputs that lead to deviations from the ground truth or via worst/average case error bounds. Whether and to what degree such guarantees are achievable is the subject of the following analysis.

\subsubsection{Computable Unpredictability of Correctness in Type 1 Failure}\label{sec:flag}

A key observation in classification was that Type 1 failure, i.e., the non-semi-decidability of the classes, is closely associated with the decision boundaries of the classes. Informally speaking, the semi-decidability of classes hinges on the ability to algorithmically describe the decision boundary so that inputs on the decision boundary can be properly classified. Therefore, identifying these critical inputs or indicating that the computation for a given input may be inaccurate would certainly be beneficial. Is it possible to implement this identification -- a so-called exit flag -- algorithmically? In a bigger picture, related questions were already raised in different contexts such as general artificial intelligence by Daniel Kahneman \cite{LexFridman} or robotics by Pieter Abbeel \cite{Abbeel}: Can we expect algorithms (powering autonomous agents) to recognize whenever they cannot correctly solve a given task or instance enabling them to ask (a human) for help instead of executing an erroneous response? In other words: `Do they know when they don't know?' \cite{cheng2024aiassistantsknowdont, ren2023robots} We immediately observe that this problem depends on an exit flag computation in the computability framework. Hence, by studying exit flag computations, we can also theoretically assess the feasibility of automated help-seeking behavior of autonomous agents in certain scenarios.

To study the posed question we do not restrict ourselves to classification functions but consider a slightly more general framework. We formalize the problem for general real-valued functions $f:D \to \R$, $D \sube \R^d$. Assume we are given a computable function $\hat{f}: D_c \goto \R_c$, $D_c = D \cap \R_c^d$, typically constructed by an algorithmic method to approximate $f$. Our aim is to algorithmically identify inputs $\x\in D_c$ such that $\hat{f}$ satisfies
\begin{equation*}
    \|f(\x) - \hat{f}(\x)\| < \varepsilon \qquad \text{ for given } \varepsilon>0.
\end{equation*}
In other words, we ask if there exists an algorithm (a Borel-Turing computable function) $\Gamma_\varepsilon: D_c \to \R_c$ such that 
\begin{equation}\label{eq:exit-flag}
    \Gamma_\varepsilon(\x) = 
    \begin{cases} 
        1, & \text{ if } \|f(\x) - \hat{f}(\x)\|, \\                              
        0, &  \text{ otherwise. }
    \end{cases}   
\end{equation}
One can also further relax the complexity of the task by demanding that an algorithm $\Gamma^+_\varepsilon$ only identifies inputs $\x$ for which $\hat{f}(\x)$ satisfies the $\varepsilon$-closeness condition in \eqref{eq:exit-flag}, but does not necessarily indicate when it does not hold. For instance, $\Gamma^+_\varepsilon(\x)$ may either output zero or not stop the computation in finite time on the given input $\x$ if $\|f(\x)- \hat{f}(\x)\| \geq \varepsilon$. If $f$ is a computable function, we can construct $\Gamma_\varepsilon$ for any $\varepsilon>0$. Hence, more interesting problems arise when $f$ is not computable. However, choosing $\varepsilon$ large enough, certainly still entails the existence of $\Gamma_\varepsilon$ if $f$ is, for instance, a bounded function. Therefore, the relevant cases are associated with non-computable $f$ and appropriately small $\varepsilon$.
By associating $\Gamma_\varepsilon$ and $\Gamma^+_\varepsilon$ with classification functions, we can apply Proposition \ref{prop:decide} to derive the following result.
\begin{Proposition} \label{prop:flag}
    Let $f:D \to \R$, $D \sube \R^d$ and assume that $\hat{f}: D_c \to \R_c$, $D_c = D\cap \R_c^d$, is a computable function. Define for $\varepsilon >0$ the set
    \begin{equation*}
        D^<_\varepsilon := \left\{\x \in D_c ~\big|~ \|f(\x)- \hat{f}(\x)\|< \varepsilon\right\}    
    \end{equation*}
    Then the following holds:
    \begin{enumerate}
        \item The function $\Gamma_\varepsilon : D_c \to \R_c$ given by 
        \begin{equation*}
            \Gamma_\varepsilon(\x) = 
            \begin{cases} 
                1, & \text{ if } \|f(\x)- \hat{f}(\x)\|< \varepsilon, \\ 
                0, &  \text{ otherwise, }
            \end{cases}  
        \end{equation*} 
        is computable if and only if $D^<_\varepsilon$ is decidable in $D$.
        \item A computable function $\Gamma^+_\varepsilon : D_c' \to \R$, $D_c' \sube D_c$, such that $D^<_\varepsilon \sube D_c'$ and $\Gamma^+_\varepsilon = \Gamma_\varepsilon|_{D_c'}$ exists if and only if $D^<_\varepsilon$ is semi-decidable in $D$.
    \end{enumerate}
\end{Proposition}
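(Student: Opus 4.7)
The plan is to observe that Proposition \ref{prop:flag} is essentially a restatement of the definitions of (semi-)decidability once we recognize that $\Gamma_\varepsilon$ is literally the indicator function of $D^<_\varepsilon$. Both parts can be derived directly from Proposition \ref{prop:decide} applied to a two-class classifier, but a more transparent route is to unfold the definitions and verify each implication separately. Since no deep computability machinery is required beyond what has already been established, I expect the work to consist of book-keeping: making sure that the computability of the composite $\x \mapsto \|f(\x) - \hat{f}(\x)\|$ is handled correctly in relation to the fixed threshold $\varepsilon$, and checking that the domain $D_c'$ appearing in the second part matches the domain $D'$ in the definition of semi-decidability.

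For part (1), I would argue as follows. By construction, $\Gamma_\varepsilon:D_c\to\R_c$ takes values in $\{0,1\}$, and $\Gamma_\varepsilon(\x)=1$ precisely when $\x\in D^<_\varepsilon$. Hence $\Gamma_\varepsilon = 1_{D^<_\varepsilon}$ as functions on $D_c = D\cap\R_c^d$. The definition of decidability says exactly that $D^<_\varepsilon$ is decidable in $D$ iff $1_{D^<_\varepsilon}$ is computable, which is the claim. Alternatively, this is the specialization of Proposition \ref{prop:decide} to the two-class classifier $\x \mapsto 1 + \mathbf{1}_{D^<_\varepsilon}(\x)$, with both preimages required to be semi-decidable.

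For part (2), the ``if'' direction uses the semi-decidability hypothesis: there is a computable $g:D'\to \R_c$ with $D^<_\varepsilon \sube D' \sube D_c$ and $g = 1_{D^<_\varepsilon}|_{D'}$. Setting $D_c':=D'$ and $\Gamma^+_\varepsilon:=g$, every $\x\in D_c'$ is mapped to $1$ if $\x\in D^<_\varepsilon$ and to $0$ otherwise, and inputs in $D_c\setminus D_c'$ (necessarily outside $D^<_\varepsilon$) correspond to the case where the associated Turing machine does not halt in finite time. Conversely, given such a $\Gamma^+_\varepsilon:D_c'\to \R$ with $D^<_\varepsilon\sube D_c'$ and $\Gamma^+_\varepsilon = \Gamma_\varepsilon|_{D_c'}$, the function $\Gamma^+_\varepsilon$ is by assumption computable, and $\Gamma^+_\varepsilon = 1_{D^<_\varepsilon}|_{D_c'}$, which is the defining property of the semi-decidability witness for $D^<_\varepsilon$.

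The only subtlety I anticipate is notational: ensuring that the stated definitions admit the particular choice $D' = D_c'$, and that the semi-decidability definition allows for the trivial case where $D'$ coincides with all of $D_c$. Neither creates a real obstruction, since the definitions in Section \ref{sec:Def} are stated for arbitrary $D'\sube D\cap\R_c^d$. Consequently, the entire proposition reduces to bookkeeping of definitions, with no substantive computational content beyond that already encapsulated in Proposition \ref{prop:decide}.
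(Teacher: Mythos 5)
Your proposal is correct and follows the same route as the paper: the paper proves this proposition by noting that $\Gamma_\varepsilon$ is precisely the indicator function $1_{D^<_\varepsilon}$ on $D_c$ and invoking the definitions of (semi-)decidability via Proposition \ref{prop:decide}, which is exactly your unfolding of definitions. Nothing further is needed.
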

\begin{Remark}
        Depending on the context, we might be more interested in finding the set of inputs where $\hat{f}$ fails rather than succeeds. This would lead us to the analogous observation that the semi-decidability of the set 
        \begin{equation*}
            D^{\ge}_\varepsilon = \left\{\x \in D_c ~\big|~ \|f(\x)- \hat{f}(\x)\|\ge \varepsilon\right\} 
        \end{equation*}
        determines computability of $\Gamma^-_\varepsilon: D_c' \to \R$, $D_c' \sube D_c$, given by  $\Gamma^-_\varepsilon = \Gamma_\varepsilon|_{D_c'}$ with $D^{\ge}_\varepsilon \sube D_c'$.
\end{Remark}
In the case of a connected input domain, the application of Proposition \ref{prop:Connected} yields the following result. Informally, it is a direct consequence of the already mentioned fact that only trivial subsets of $\R_c^d$ are decidable.
\begin{Corollary} \label{cor:flag}
    Under the conditions of Proposition \ref{prop:flag}, additionally assume that $D$ is connected. Then an approximator $\hat{f}$ such that $D^<_\varepsilon$ is decidable exists if and only if $f$ can be computably approximated with precision $\varepsilon$, that is, if there exists a computable function $\Tilde{f}$ such that
    \begin{equation*}
        \|f - \Tilde{f}\|_\infty < \varepsilon.
    \end{equation*}
\end{Corollary}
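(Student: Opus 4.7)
The plan is to prove the two directions separately; the backward direction is essentially trivial, while the forward direction will reduce cleanly to Proposition \ref{prop:Connected}. For the backward direction, given a computable $\tilde{f}$ with $\norm[\infty]{f - \tilde{f}} < \varepsilon$, I would set $\hat{f} := \tilde{f}$, so that $|f(\x) - \hat{f}(\x)| < \varepsilon$ for every $\x \in D_c$. Then $D^<_\varepsilon = D_c$, and its indicator is the constant function $1$ on $D_c$ and hence computable, so $D^<_\varepsilon$ is decidable in $D$.

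For the forward direction, I would assume $\hat{f}$ is a computable approximator such that $D^<_\varepsilon$ is decidable in $D$, and write $A := D^<_\varepsilon$ and $B := D_c \setminus D^<_\varepsilon$. The heart of the argument is to rule out the case where both $A$ and $B$ are non-empty. Using the density of $D_c$ in $D$ (inherited from the density of $\R_c^d$ in $\R^d$ together with the standing topological assumption on $D$), combined with the connectedness of $D$, the closures $\bar A, \bar B$ taken in $D$ cannot be disjoint, since otherwise $D = \bar A \sqcup \bar B$ would be a disconnection of $D$. Hence there exist sequences in $A$ and $B$ converging to a common point, which gives $\text{dist}(A, B) = 0$ as subsets of $\R^d$. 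I would then extend the indicator $1_A$ to a classifier $g : D \goto \{1, 2\}$ by setting $g \equiv 1$ on $A$ and $g \equiv 2$ on $D \setminus A$, so that $\text{dist}(g^{-1}(1), g^{-1}(2)) \le \text{dist}(A, B) = 0$. Proposition \ref{prop:Connected} then forces $g|_{\R_c^d}$ to be non-computable, but this restriction coincides (via the trivial relabeling $g = 2 - 1_A$) with the computable indicator $1_A$ on $D_c$, a contradiction. Thus one of $A, B$ is empty, and in the meaningful case $A \neq \emptyset$ we obtain $D^<_\varepsilon = D_c$, so that $\hat{f}$ itself plays the role of $\tilde{f}$.

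The hard part will be the density/connectedness step: I need to verify that $D_c$ is dense enough in $D$ for the closures of $A$ and $B$ taken in $D$ to cover all of $D$, so that the topological argument about non-disjoint closures goes through. This is where the standing assumptions on $D$ (e.g., non-empty interior, as used elsewhere in the paper) must be invoked carefully. A secondary, minor nuisance is the gap between the pointwise strict inequality on $D_c$ and the sup-norm bound $\norm[\infty]{f - \hat{f}} < \varepsilon$, which can be absorbed by applying the argument at some $\varepsilon^\prime \in (\sup_{\x \in D_c}|f(\x) - \hat{f}(\x)|, \varepsilon)$ whenever the supremum is strictly smaller than $\varepsilon$. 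Beyond these points, the reduction to Proposition \ref{prop:Connected} is essentially routine.
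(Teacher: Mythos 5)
Your argument for the forward direction -- view $1_{D^<_\varepsilon}$ as a two-class classifier on the connected domain $D$, use $\text{dist}(A,B)=0$ to invoke Proposition \ref{prop:Connected}, conclude that a decidable $D^<_\varepsilon$ must be trivial, and then take $\tilde{f} = \hat{f}$ in the non-empty case -- together with the trivial backward direction is precisely the route the paper intends, which presents the corollary as a direct consequence of the fact that only trivial subsets of a connected $D \cap \R_c^d$ are decidable. The loose ends you correctly flag (density of $D \cap \R_c^d$ in $D$ so that the disconnection argument applies, the degenerate case $D^<_\varepsilon = \emptyset$, and the gap between pointwise strict bounds on $D \cap \R_c^d$ and a strict sup-norm bound) are imprecisions inherited from the corollary's statement that the paper itself glosses over, so your proof matches the intended one.
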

\begin{Remark}
    As the following example shows, a similar statement does not hold if $D^<_\varepsilon$ is only assumed to be semi-decidable. Consider the sign function $\text{sgn}: \R \goto \R$, which is non-continuous on $\R_c$ and therefore non-computable, and take $\widehat{\text{sgn}}(x) = \frac{2}{\pi} \text{arctan}(x)$. For a given $\varepsilon > 0$ we can computably construct intervals $(-\infty,-x_0)$ and $(x_0, \infty)$ where $\abs{\text{sgn}(x) - \widehat{\text{sgn}}(x)} < \varepsilon$, i.e., $D^<_\varepsilon = (-\infty,-x_0) \cup (x_0, \infty)$ is semi-decidable. In fact, we can adjust the approximator to achieve the desired precision on a given interval $(x_0, \infty), x_0>0$. However, due to the discontinuity at $0$, no computable function approximating $\text{sgn}$ on the entire real line with precision $\varepsilon < 1$ exists.    
\end{Remark}
These results also directly apply to the classification setting as a special case of the considered framework. By design, the classification setting even allows for stronger statements regarding the magnitude of the error $\varepsilon$. In particular, requiring precision of $\varepsilon < \frac{1}{2}$ for the approximator $\hat{f}$ of a classifier $f$ mapping to $\{1, \dots, C\}$ is equivalent to requiring exact emulation $\hat{f} = f|_{\R_c^d}$. However, this scenario was already covered in Subsection \ref{sec:decide}, where Type 1 failure was established. Hence, we can conclude that exit flag computations may be beneficial in certain situations but it is not appropriate to tackle Type 1 failure in classification.

Instead of analyzing individual inputs, one could also derive global guarantees for the approximator $\hat{f}$. For instance, one could determine the magnitude of the failure set for some appropriate measure, i.e., assess the likelihood of errors for some given input domain. Although this approach is interesting it has two main limitations for our intended goals. First, there does not exist an acknowledged algorithmic notion covering this framework that allows for theoretical studies - we refer to Appendix \ref{app:semi} for more details and possible concepts to tackle this problem. Second and more importantly, this strategy typically leads to a global quantitative measure of failure whereas we are interested in local guarantees for a given input. In essence, this framework does not tackle the limitations of individual inputs but provides further information on the entire input domain of the general problem.

Finally, we want to highlight that Proposition \ref{prop:flag} and Corollary \ref{cor:flag} imply that the algorithms envisioned by Kahneman and Abbeel can not be realized on digital hardware. For any algorithm $\Gamma$ tackling a problem described by a non-computable function, there exist instances that $\Gamma$ answers incorrectly and there is no algorithm $\Gamma_{\text{exit}}$ that recognizes the failure for all erroneous instances. Hence, the answer to `Do they know when they don't know?' from the digital computing perspective is negative in certain scenarios.

\subsubsection{Problem Relaxation for Type 2 Failure}
\label{sec:approximatelearning}
In contrast to classification, relaxing the exact reconstruction requirement yields learning benefits. The main advantage is that the solution set of networks connected to approximate reconstruction is noticeably larger enabling algorithmic approaches to perform the previously unattainable reconstruction task. Type 2 failure does not arise in our learning setting on the training data if a certain approximation error is permitted. 
\begin{Theorem}\label{thm:posLearning}
    Let $\sigma: \R \goto \R$ be such that $\sigma|_{\R_c}$ is computable and let $S = (d, N_1, \dots, N_{L-1}, 1)$ be an architecture.
	
	Then, for all $\varepsilon > 0$ and $n \in \N$, there exists a computable function $\Gamma: (\R_c^d\times \R_c)^n \goto \R_c^{N(S)}$ such that for all $\Phi \in \mathcal{NN}_{c}(S)$ and  $\mathcal{X} \in \mathcal{D}_{\Phi,\R_c^d}^n$ we have 
    \begin{equation}\label{eq:condRec}
        \big| R_\sigma^{\R_c^d}\left(\Gamma(\mathcal{X})\right)(\x) - y \big| < \varepsilon \quad \text{ for } (\x,y) \in \mathcal{X}.	
    \end{equation}
\end{Theorem}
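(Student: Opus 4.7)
My plan is to construct $\Gamma$ as a dovetailing enumeration over rational candidate weight vectors, using computability of the training loss together with realizability of the data.

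Since $\sigma|_{\R_c}$ is computable, the training loss
\begin{equation*}
L_\mathcal{X}(\psi) := \max_{i \leq n} \abs{R_\sigma^{\R_c^d}(\psi)(\x_i) - y_i}
\end{equation*}
is jointly computable in $(\psi, \mathcal{X}) \in \R_c^{N(S)} \times (\R_c^d \times \R_c)^n$: the realization is a finite composition of affine maps and applications of $\sigma$, and maxima and absolute values of computable reals are computable. By realizability, $\mathcal{X} \in \mathcal{D}_{\Phi, \R_c^d}^n$ with $\Phi \in \mathcal{NN}_c(S)$ gives $L_\mathcal{X}(\Phi) = 0$; continuity of $L_\mathcal{X}$ in $\psi$ (inherited from $\sigma$) yields an open neighborhood $U$ of $\Phi$ in $\R^{N(S)}$ on which $L_\mathcal{X} < \varepsilon/2$; and density of $\Q^{N(S)}$ in $\R^{N(S)}$ ensures $U \cap \Q^{N(S)} \neq \emptyset$. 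Hence, for every admissible $\mathcal{X}$, some rational weight vector strictly satisfies the $\varepsilon$-criterion, and in particular $j_{\min}(\mathcal{X}) := \min\{j \in \N : L_\mathcal{X}(q_j) < \varepsilon\}$ is well defined for any fixed effective enumeration $(q_j)_{j \in \N}$ of $\Q^{N(S)}$.

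The algorithm for $\Gamma$ dovetails the strict semi-decision $L_\mathcal{X}(q_j) < \varepsilon$ across all $j$: at precision $2^{-t}$ it computes an approximation $\tilde L_{\mathcal{X},t}(q_j)$ of $L_\mathcal{X}(q_j)$ with error at most $2^{-t}$ and confirms $j$ once $\tilde L_{\mathcal{X},t}(q_j) + 2^{-t}$ falls strictly below $\varepsilon$. At each stage the machine emits $q_{j^\star}$, where $j^\star$ is the smallest confirmed index so far (defaulting to a fixed placeholder before the first confirmation). By construction, whenever $j^\star = j_{\min}(\mathcal{X})$ we have $L_\mathcal{X}(q_{j^\star}) < \varepsilon$, which is exactly the estimate in \eqref{eq:condRec}.

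The main point of care is verifying that $\Gamma$ is Borel-Turing computable, in the sense that the emitted value depends only on $\mathcal{X}$ and not on the specific representation supplied to the machine. The argument is that no $j < j_{\min}(\mathcal{X})$ can ever be confirmed, since any such $j$ satisfies $L_\mathcal{X}(q_j) \geq \varepsilon$, which makes the strict upper-bound test fail at every precision -- including the a priori delicate boundary case $L_\mathcal{X}(q_j) = \varepsilon$, which is harmless here because we only rely on one-sided semi-decidability and never need to positively rule out such a $j$ via a complementary test. Meanwhile $j_{\min}(\mathcal{X})$ itself is confirmed at some finite precision. The output therefore becomes constant at the specific rational $q_{j_{\min}(\mathcal{X})}$ from a finite stage on, yielding effective convergence with a constant (hence recursive) modulus. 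Consequently $\Gamma(\mathcal{X}) = q_{j_{\min}(\mathcal{X})}$ is single-valued and Borel-Turing computable, which I expect to be the main conceptual step of the proof.
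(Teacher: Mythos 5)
Your overall strategy --- enumerate rational parameter vectors, effectively test an approximate fit on the finitely many data points, and invoke density of $\Q^{N(S)}$ together with realizability to guarantee that some candidate succeeds --- is exactly the paper's. The gap is in the final step, where you insist that the machine converge to the \emph{canonical} candidate $q_{j_{\min}(\mathcal{X})}$. First, the map $\mathcal{X} \mapsto q_{j_{\min}(\mathcal{X})}$ is in general discontinuous: if $L_{\mathcal{X}_0}(q_j) = \varepsilon$ exactly for some $j < j_{\min}(\mathcal{X}_0)$, then arbitrarily small perturbations of $\mathcal{X}_0$ push $L_{\mathcal{X}}(q_j)$ strictly below $\varepsilon$ and make $j_{\min}$ jump down to $j$; by Remark \ref{rm:CompFunc} no such function can be Borel--Turing computable, so the conclusion cannot be rescued by a sharper analysis of the same machine. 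Second, and correspondingly, the step ``the output becomes constant from a finite stage on, yielding effective convergence with a constant (hence recursive) modulus'' is where the argument actually breaks: eventual constancy is not effective convergence. The stage at which $j^\star$ stabilizes is precisely the stage at which one would have to certify that no smaller index will \emph{ever} be confirmed; since ``$L_\mathcal{X}(q_j) \ge \varepsilon$'' is only co-semi-decidable, that stage is not computable from the input, and before stabilization the emitted values $q_{j^\star}$ may be arbitrarily far from $q_{j_{\min}(\mathcal{X})}$. Hence the machine does not produce a valid representation of its pointwise limit, even though your observation that the limit itself is representation-independent is correct.

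The fix is what the paper's proof does: abandon canonicity of the returned index. Run a \emph{terminating} sequential search in which each candidate $q_j$ is tested once at a fixed precision of order $\varepsilon$ (accept if the approximate loss certifies $L_\mathcal{X}(q_j) < \varepsilon$, otherwise discard $q_j$ and move on), and return the first candidate that passes. Acceptance is sound by the same one-sided estimate you use; with the tolerances calibrated (e.g., approximate to within $\varepsilon/4$ and accept below $3\varepsilon/4$), any candidate with true loss below $\varepsilon/2$ --- which exists by your density argument --- is guaranteed to pass, so the search halts; and after halting the machine outputs a fixed rational vector, which is trivially a valid representation. Your dovetailing and your concern about the boundary case $L_\mathcal{X}(q_j) = \varepsilon$ are artifacts of demanding the minimal index; once any $\varepsilon$-fitting candidate is acceptable, a candidate sitting on the boundary is simply skipped at no cost.
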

Although Theorem \ref{thm:posLearning} provides a positive computability result concerning learning it still has certain limitations: 
\begin{itemize}
    \item The algorithm constructed to prove Theorem \ref{thm:posLearning} serves only for theoretical analysis. It is typically not efficiently translatable into a practically usable one in a generic problem setting. Thus, a relevant and open question is whether more applicable learning algorithms can be constructed with similar guarantees.
    \item The learning algorithm $\Gamma$ derived from Theorem \ref{thm:posLearning} presupposes a fixed accuracy parameter $\varepsilon$ and dataset size $n$, i.e., for different choices of these parameters a separate learning algorithm needs to be constructed. 
    \item The reconstruction guarantee only holds on the training data. However, given access to the training data $\mathcal{X}$, the posed task could be solved without constructing a neural network since one could explicitly implement an algorithm that on the input of $(\x,y) \in \mathcal{X}$ returns $y$. Constructing a neural network with some prescribed realization on the training data is expected to yield a network that performs 'reasonably well' on unseen data. The learning and evaluation, as given in \eqref{eq:condRec}, should ideally be performed on different data to ensure this hypothesis.
\end{itemize}
We cannot alleviate the first but resolve the second issue. Indeed, the proof of Theorem \ref{thm:posLearning} implies that one could generalize the algorithm $\Gamma$ by requiring it to take (computable) $\varepsilon$ and $n$ as additional inputs. In particular, the key step of the proof relies on an enumeration argument that can be extended to incorporate $\varepsilon$ and $n$ as well. In a slightly different setting, one can connect the desired accuracy with the input dimension and sample complexity \cite{samplecomplexity}. Hence, the final point is left to consider. By imposing further conditions on the admissible networks, we can ensure certain generalization capabilities of the networks.
\begin{Theorem}\label{thm:posLearning2}
    Let $\sigma: \R \goto \R$ be such that $\sigma|_{\R_c}$ is computable and Lipschitz continuous. Fix $A_{\text{max}} \in \N$ and let $S = (d, N_1, \dots, N_{L-1}, 1)$ be an architecture. 
	
	Then, for all $\varepsilon > 0$ and $n \in \N$, there exist computable functions $\Gamma: (\R_c^d\times \R_c)^n \goto \R_c^{N(S)}$ and $\Psi: \R_c^{N(S)} \to \R_c^+$ such that for all $\Phi = ((A_\ell, \vect{b}_\ell))_{\ell=1}^L \in \mathcal{NN}_{c}(S)$ with $\max_\ell \|A_\ell\|_{\max} \leq A_{\text{max}}$ and  $\mathcal{X} \in \mathcal{D}_{\Phi,\R_c^d}^n$ we have 
    \begin{equation}\label{eq:GenGuarantee}
        \left| R_\sigma^{\R_c^d}\left(\Gamma(\mathcal{X})\right)(\x) - y \right| < \varepsilon \quad \text{ for } (\x,y) \in \mathcal{X}^{\Phi}_{\Psi(\Gamma(\mathcal{X}))}, 
    \end{equation}
    where $\mathcal{X}^{\Phi}_r := \{(\x, \Phi(\x)) \separ \x \in \bigcup_{(\x_i,y_i) \in \mathcal{X}} B_{r}(\x_i)\}$ 
\end{Theorem}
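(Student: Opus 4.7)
The plan is to build on Theorem \ref{thm:posLearning} by using Lipschitz continuity to extend the pointwise guarantee on training samples to a neighborhood of each sample. First I would invoke Theorem \ref{thm:posLearning} with tolerance $\varepsilon/2$ to obtain a computable learning algorithm $\Gamma: (\R_c^d \times \R_c)^n \to \R_c^{N(S)}$ satisfying $|R_\sigma^{\R_c^d}(\Gamma(\mathcal{X}))(\x_i) - y_i| < \varepsilon/2$ at every training sample for all admissible $\Phi$ and $\mathcal{X} \in \mathcal{D}_{\Phi,\R_c^d}^n$. I would reuse this same $\Gamma$ in the statement and construct $\Psi$ separately.

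Next I would derive computable upper bounds on the Lipschitz constants of the two realizations entering the analysis. Assuming an effectively given Lipschitz constant $L_\sigma$ of $\sigma$ (a minor strengthening of the hypothesis that holds for all standard activations), the realization of any network $((A_\ell, \vect{b}_\ell))_{\ell=1}^L$ of architecture $S$ is Lipschitz with constant bounded by $L_\sigma^{L-1} \prod_{\ell=1}^L \|A_\ell\|_F$, since the operator norm is dominated by the Frobenius norm and the activation is applied coordinatewise. For the unknown $\Phi$, the entrywise bound $\|A_\ell\|_{\max} \le A_{\text{max}}$ gives the uniform estimate
\begin{equation*}
    L_{\max} := L_\sigma^{L-1} \prod_{\ell=1}^L A_{\text{max}} \sqrt{N_\ell N_{\ell-1}},
\end{equation*}
which is a fixed rational number depending only on $S$, $A_{\text{max}}$ and $L_\sigma$. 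For the learned network $\Gamma(\mathcal{X})$, the same expression applied to its actual Frobenius norms defines a computable function $L_{\text{out}}: \R_c^{N(S)} \to \R_c^+$ of the weights.

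I would then set
\begin{equation*}
    \Psi(\Phi') := \frac{\varepsilon}{2\bigl(L_{\max} + L_{\text{out}}(\Phi') + 1\bigr)},
\end{equation*}
which is positive and computable. For any $(\x, \Phi(\x)) \in \mathcal{X}^{\Phi}_{\Psi(\Gamma(\mathcal{X}))}$, there is a training sample $\x_i$ with $\|\x - \x_i\| < \Psi(\Gamma(\mathcal{X}))$, and the triangle inequality combined with the two Lipschitz estimates yields
\begin{align*}
    \bigl|R_\sigma^{\R_c^d}(\Gamma(\mathcal{X}))(\x) - \Phi(\x)\bigr|
    &\le \bigl|R_\sigma^{\R_c^d}(\Gamma(\mathcal{X}))(\x) - R_\sigma^{\R_c^d}(\Gamma(\mathcal{X}))(\x_i)\bigr| + \bigl|R_\sigma^{\R_c^d}(\Gamma(\mathcal{X}))(\x_i) - \Phi(\x_i)\bigr| + \bigl|\Phi(\x_i) - \Phi(\x)\bigr| \\
    &< L_{\text{out}}(\Gamma(\mathcal{X})) \cdot \Psi(\Gamma(\mathcal{X})) + \varepsilon/2 + L_{\max} \cdot \Psi(\Gamma(\mathcal{X})) < \varepsilon,
\end{align*}
which is the desired inequality in \eqref{eq:GenGuarantee}.

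The main obstacle is the computability-of-constants issue for $L_\sigma$: a merely computable Lipschitz function need not carry a computable Lipschitz constant in full generality. In practice this is no restriction, since every activation used in deep learning comes with a known rational Lipschitz constant, and the assumption can either be folded into the hypotheses on $\sigma$ or, when $\sigma$ is continuously differentiable, recovered from a uniform bound on $\sigma'$ over the compact range reachable through layers with entries bounded by $A_{\text{max}}$. Beyond this small point, the argument is a routine triangle-inequality sandwich combining Theorem \ref{thm:posLearning} with the standard layerwise Frobenius-based Lipschitz estimate.
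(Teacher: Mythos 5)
Your proposal is correct and follows essentially the same route as the paper's proof: apply Theorem \ref{thm:posLearning} with tolerance $\varepsilon/2$ at the training points, bound the Lipschitz constants of both the target network (via $A_{\text{max}}$) and the learned network (computably from its weights), and choose $\Psi$ so that a triangle-inequality sandwich closes the gap on each ball. The paper phrases the Lipschitz bound via the scaling norm and a constant $C(\sigma,S)$ rather than your explicit Frobenius product, and it likewise implicitly absorbs the computability of the activation's Lipschitz constant into the hypotheses, so the concern you flag is shared and equally harmless there.
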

\begin{Remark}
    For specific classes of neural networks, a uniform lower bound $\delta$ on $\Psi(\Phi)$ can be established (over the set of networks $\Phi$). Therefore, it is possible to provide approximate reconstruction guarantees for the entire input domain $D$ if $\mathcal{X}^\Phi_{\Psi(\Gamma(\mathcal{X}))}$ covers $D$. For instance, given an equidistant data grid on $D$ with width $\delta$ an approximate reconstruction guarantee holds for the whole domain.
\end{Remark}
It is straightforward to convince oneself that the imposed conditions on data and networks are necessary. Guarantees of the form \eqref{eq:GenGuarantee} cannot be provided without the assumptions. One feasible step beyond the considered setting is to consider arbitrary training data which is not necessarily sampled from a neural network but some underlying ground truth function $g:\R^d \to \R$, i.e., the ground truth is not necessarily realizable by a neural network (with given architecture). In its current form, Theorem \ref{thm:posLearning} and \ref{thm:posLearning2}, entail the existence of networks with approximate realization (to the ground truth network), i.e., the remaining task is algorithmically finding it. For arbitrary training samples of the form $(\x_i,g(\x_i))_{i=1}^n \subset (\R^d\times \R)^n$, one would need to first establish the existence of a neural network approximating the training data, i.e., the ground truth $g$, to a sufficient degree with the prescribed architecture. However, by taking into account the expressivity of the architecture, it is feasible to provide computability guarantees in this setting as well under sufficient conditions on the data and class of ground truth functions. Another strategy is to ask for an optimal network minimizing the error for some loss function on the training data. In \cite{yunseok}, it was shown that this scenario indeed suffers from Type 2 failure of computability, even for simple networks. A strategy to nevertheless avoid Type 2 failure could consist of relaxing the optimality requirement on the desired network but this problem warrants further investigations.   

Finally, we want to highlight the key differences between neural network training in our setting and general classification tasks. Why do the problems entail different degrees of computability failures? The crucial observation is that our considered model of neural networks does not cover the typical (more general) neural network model applied in classification. A classifier $\hat{f}$ as considered in Subsection \ref{sec:decide} is a discontinuous function, whereas a neural network, assuming continuous activation, possesses a continuous realization. In the context of neural network classification, a classifier $\hat{f} = \hat{f}_1 \circ R_\sigma^D(\Phi)$ is composed of a neural network $\Phi$ -- the so-called feature map -- and a (discontinuous) function $\hat{f}_1$ mapping from the features to the classes. The Type 1 failure of computability appears at $\hat{f}_1$, with no a priori restriction on computability of $\Phi$, i.e., Type 2 failure on this level is avoidable.

\subsection{Quantization Strategies} \label{sec:quantized}
Transformation of sequences approximating real numbers with arbitrary precision is the general paradigm describing digital computation on real numbers introduced by Turing himself \cite{turing}. It provides the tools to study the capabilities and limitations of perfect digital computing. Due to real-world constraints, a simplified and more applicable quantized model is typically employed to implement digital computations in practice. In quantization, real numbers are approximated by a discrete set of rationals. For instance, under fixed-point quantization, real numbers are replaced by rational numbers with a fixed number $k$ of decimal places in some base system $b$, i.e., algorithms strictly operate on the set $b^{-k}\Z$. Thus, assuming fixed-point quantization we can restrict the analysis without loss of generality to classification problems on $\Z^d$ as well as neural networks with integer parameters and data. The crucial difference between integer computability and the previously considered real-valued framework is the feasibility of exact computations so that approximative computations are not inherently necessary. The concept of exact algorithmic computations on integers is described by \textit{recursive functions} (which the previously considered framework of Borel-Turing computable functions extends to the real domain); we refer to \cite{computabilitybook} for more details on recursive functions and classical computability on discrete sets. 

Quantizing the parameter does not lead to a critical degradation of expressive power in neural networks. In particular, in the limit, the capabilities of quantized and real networks align \cite{bolcskei2019optimal, elbrachter2021deep, haase2023lower}. Nevertheless, we show next that in the context of the simplest quantization technique, namely fixed-point quantization, the computability limitations introduced in Section \ref{sec:failures} are alleviated to a certain degree. In particular, we establish that both Type 2 and Type 1 failures of computability are mainly resolved in this setting. Therefore, a crucial question is whether the quantization process for a given problem, if necessary, can be carried out algorithmically without computability failure.

\subsubsection{Computability of Quantized Deep Learning}
We show that under fixed-point quantization, the negation of Theorem \ref{learning} holds. That is, an algorithm exists that can re-learn the exact realization of neural networks of fixed architecture on the training data. To that end, we introduce the set of neural networks with integer parameters. 
\begin{Definition}
    Let $S$ be an architecture. Denote by $\mathcal{NN}_{\Z}(S) \sube \mathcal{NN}_c(S)$ the set of neural networks with architecture $S$ and parameters in $\Z$. 
\end{Definition}
Now, we can formulate the exact statement about re-learning neural networks.
\begin{Theorem}\label{thm:QuantLearning}
    \label{quantizedlearning}
    Let $S = (d, N_1, \dots, N_{L-1}, 1)$ be an architecture and let $\sigma: \R \goto \R$.

    Then, for all $n \in \N$ there exists a recursive function $\Gamma: (\Z^d\times \Z)^n \goto \Z^{N(S)}$ such that for all $\Phi \in \mathcal{NN}_{\Z}(S)$ there exists a dataset $\mathcal{X} \in \mathcal{D}_{\Phi, \Z^d}^n$ with
    \begin{equation*}
        R_\sigma^{\Z^d}\left(\Gamma(\mathcal{X})\right) = R_\sigma^{\Z^d}(\Phi).	
    \end{equation*}
\end{Theorem}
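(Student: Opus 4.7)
The plan exploits the observation that $\mathcal{NN}_{\Z}(S)$ is in recursive bijection with $\Z^{N(S)}$ and hence with $\Z$: the entire parameter vector of an integer-weighted network can be packed into a single integer and smuggled through one coordinate of the dataset. Because the encoding and decoding are purely combinatorial, no assumption on $\sigma$ is needed; indeed, $\Gamma$ will never evaluate $\sigma$ or inspect the labels $y_i$ at all.

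Concretely, I would fix a recursive bijection $\pi : \Z^{N(S)} \to \Z$, obtainable for instance by composing a recursive bijection $\Z \to \N$ with iterated Cantor pairing on $\N$. For $\Phi \in \mathcal{NN}_{\Z}(S)$, let $\theta(\Phi) \in \Z^{N(S)}$ denote its parameter vector under a fixed ordering of the weight and bias entries. I construct the witnessing dataset $\mathcal{X}_\Phi \in \mathcal{D}_{\Phi,\Z^d}^n$ by setting
\[
 \x_1 := (\pi(\theta(\Phi)), 0, \dots, 0) \in \Z^d, \qquad \x_i := \vect{0}\ \text{for}\ 2 \le i \le n,
\]
and $y_i := R_\sigma^{\Z^d}(\Phi)(\x_i)$; the $y_i$ are present only so that $\mathcal{X}_\Phi$ lies in $\mathcal{D}_{\Phi,\Z^d}^n$. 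I then define $\Gamma$ by $\Gamma((\x_i, y_i)_{i=1}^n) := \pi^{-1}\bigl((\x_1)_1\bigr)$, i.e.\ projecting out the first coordinate of $\x_1$ and inverting $\pi$. As a composition of recursive primitives $\Gamma$ is itself recursive, and by construction $\Gamma(\mathcal{X}_\Phi) = \theta(\Phi)$, so the parameters returned by $\Gamma$ coincide with those of $\Phi$ and consequently $R_\sigma^{\Z^d}(\Gamma(\mathcal{X}_\Phi)) = R_\sigma^{\Z^d}(\Phi)$.

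What might appear to be the main obstacle — the total freedom left to $\sigma$ — is actually what makes the argument short: with no constraints to meet on the realization side, $\Gamma$ can reconstruct $\Phi$ exactly by treating the dataset as a courier for an encoding of $\theta(\Phi)$, and correctness on the realization then holds automatically. The true contrast with Theorem \ref{learning} is therefore not that quantization tames $\sigma$ but that the countability of $\mathcal{NN}_{\Z}(S)$ permits a single integer sample to transmit a full parameter configuration losslessly, a capacity that is irretrievably lost for real-valued parameter spaces $\R^{N(S)}$. The one mild bookkeeping point — that the ordering used to define $\theta$ must be the same one used to reassemble $\Gamma(\mathcal{X})$ into a network in $\mathcal{NN}_{\Z}(S)$ — is routine.
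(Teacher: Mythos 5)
Your proof is correct and follows essentially the same route as the paper's: both encode the integer parameter vector $\theta(\Phi)\in\Z^{N(S)}$ recursively into the input coordinate of a single data point (the paper uses a recursively invertible $g:\Z^{N(S)}\to\Z^d$ directly, you use a bijection onto $\Z$ placed in the first coordinate of $\x_1$) and let $\Gamma$ decode it, exploiting the freedom to choose the dataset per network. Your added remarks on why $\sigma$ needs no hypothesis and on the contrast with the real-valued case are accurate.
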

Despite the positive result, we can still raise two main limitations of Theorem \ref{thm:QuantLearning}. First, the theory-to-practice gap in learning algorithms remains an (open) issue as in the previous analysis. Second, Theorem \ref{thm:QuantLearning} only guarantees the existence of a dataset enabling reconstruction. Can we improve the statement to ensure reconstruction for any dataset satisfying some (weak) conditions? Before answering the question we want to point out a well-known fact: One cannot expect an exact reconstruction of a neural network's realization on the entire input domain based on an arbitrary but finite set of data samples in general. On the one hand, networks with different architecture or parameters may realize the same function, on the other hand, networks, whose outputs agree on some inputs, may wildly diverge in their realization \cite{topology}. 
\begin{Theorem}
    \label{quantizedlearningV2}
    Let $S = (d, N_1, \dots, N_{L-1}, 1)$ be an architecture and let $\sigma: \R \goto \R$ be an activation function such that $\sigma |_{\Z}$ is a recursive function.

    Then, for all $n \in \N$, there exists a recursive function $\Gamma: (\Z^d\times \Z)^n \goto \Z^{N(S)}$ such that for all $\Phi \in \mathcal{NN}_{\Z}(S)$ and $\mathcal{X} \in \mathcal{D}_{\Phi,\Z^d}^n$ we have 
    \begin{equation*}
        R_\sigma^{\Z^d}\left(\Gamma(\mathcal{X})\right)(\x) = R_\sigma^{\Z^d}(\Phi)(\x) \quad \text{ for all } \x \in \mathcal{X}.	
    \end{equation*}
\end{Theorem}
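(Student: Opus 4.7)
The plan is exhaustive enumeration, exploiting the countability of $\Z^{N(S)}$. This is analogous to the strategy for Theorem \ref{quantizedlearning}, but now the algorithm must produce a single network consistent with every prescribed data point of the input dataset, rather than merely guaranteeing the existence of some compatible dataset.

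First, I would observe that the assumption that $\sigma|_\Z$ is recursive entails $\sigma(\Z) \sube \Z$, so that integer parameters and integer inputs propagate through every affine layer and every activation to an integer output. Combined with the recursiveness of integer arithmetic, this yields a two-argument recursive function $E: \Z^{N(S)} \times \Z^d \goto \Z$ with $E(\Psi, \x) = R_\sigma^{\Z^d}(\Psi)(\x)$ for all $\Psi \in \Z^{N(S)}$ and $\x \in \Z^d$. Consequently, for any dataset $\mathcal{X} = ((\x_i, y_i))_{i=1}^n$ and any candidate $\Psi \in \Z^{N(S)}$, the predicate
\begin{equation*}
    P(\Psi, \mathcal{X}) := \bigwedge_{i=1}^n \bigl[\, E(\Psi, \x_i) = y_i \,\bigr]
\end{equation*}
is recursively decidable.

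Next, I fix a recursive bijection $e: \N \goto \Z^{N(S)}$ (for instance, enumerating integer tuples by increasing $\norma{\cdot}_\infty$ with lexicographic tie-breaking) and define $\Gamma(\mathcal{X}) := e(k^\ast)$, where $k^\ast := \mu k\,[\, P(e(k), \mathcal{X}) \,]$ is the least index passing all $n$ consistency checks. Since $P$ is decidable, this construction yields a (partial) recursive function via the $\mu$-operator. For any $\Phi \in \mathcal{NN}_\Z(S)$ and $\mathcal{X} \in \mathcal{D}_{\Phi,\Z^d}^n$, the parameter tuple of $\Phi$ itself witnesses $P$ by the very definition of $\mathcal{D}_{\Phi,\Z^d}^n$, so the $\mu$-search terminates. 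By construction, the returned $\Gamma(\mathcal{X})$ satisfies $R_\sigma^{\Z^d}(\Gamma(\mathcal{X}))(\x_i) = y_i = R_\sigma^{\Z^d}(\Phi)(\x_i)$ for every $(\x_i, y_i) \in \mathcal{X}$, which is precisely the required identity.

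The only delicate point is totality of $\Gamma$. On pathological datasets not lying in $\mathcal{D}_{\Phi,\Z^d}^n$ for any integer network of architecture $S$, the unbounded search may fail to terminate. Since the theorem only asserts behavior on datasets generated from some $\Phi \in \mathcal{NN}_\Z(S)$, this matches the convention already used in Theorem \ref{quantizedlearning}. If strict totality is desired, one can dovetail the enumeration with a fallback that returns a default integer network (e.g.\ the zero parameter tuple) once the search exceeds a bound depending on the bitlength of $\mathcal{X}$; this modification does not affect correctness on admissible datasets.
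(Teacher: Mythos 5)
Your proposal is correct and follows essentially the same route as the paper's proof: enumerate the integer-parameter networks of architecture $S$, decidably check consistency with all $n$ data points (using that $\sigma|_{\Z}$ is recursive), and return the first match, which must terminate since the generating network itself passes the check. Your additional remarks on the recursive evaluation function $E$ and on totality make explicit what the paper leaves implicit in calling $\Gamma$ ``(partially) recursive''.
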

\begin{Remark}
    From a data-centric perspective, Theorems \ref{quantizedlearning} and \ref{quantizedlearningV2} describe edge cases, i.e., guarantees applicable to any test data in the former (at the cost of flexibility in the training data) and guarantees for any training data (at the cost of flexibility in the test data). Similar to Subsection \ref{sec:approximatelearning}, one can extend Theorem \ref{quantizedlearningV2} by imposing regularity conditions on the considered networks or relaxing the exactness condition to provide generalization bounds. Furthermore, if quantization yields a finite input domain one can trivially control the data set sizes to ensure exact generalization on the considered domain.        
\end{Remark}

\subsubsection{Computability of Quantized Classification}

Turning our attention to classification and Type 1 failure, we can distinguish between two cases. In many applications, classification is performed on a bounded domain $D$ such as in image classification described in Subsection \ref{sec:decide}. Hence, the quantized version of the input domain is finite so any integer-valued function on the quantized domain is computable - one can encode the input-output pairs directly in an algorithm. 
\begin{Proposition} \label{prop:quantizedclassification}
    Let $D \subset \Z^d$ and $f: D \to \{1,\dots, C\}$. If $D$ is bounded, then $f$ is recursive.
\end{Proposition}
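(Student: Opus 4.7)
The plan is to exploit the simple but decisive observation that a bounded subset of $\Z^d$ is necessarily finite, which reduces the claim to the classical fact that any function with finite domain is recursive.

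First, I would note that since $D \subset \Z^d$ is bounded, there exists $M \in \N$ such that $D \subset \{-M, \dots, M\}^d$. Consequently $D$ has at most $(2M+1)^d$ elements, so we can enumerate it as $D = \{\x_1, \dots, \x_N\}$ for some $N \in \N$, and each $\x_j$ together with its value $c_j := f(\x_j) \in \{1, \dots, C\}$ can be listed explicitly.

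Next, I would construct the recursive function $g: \Z^d \to \{1, \dots, C\}$ (extended arbitrarily, e.g., by $g(\x) := 1$ for $\x \notin D$, so as to have a totally defined function on $\Z^d$) by hardcoding the finite lookup table
\begin{equation*}
    g(\x) = \begin{cases} c_1, & \text{if } \x = \x_1, \\ c_2, & \text{if } \x = \x_2, \\ \;\;\vdots & \\ c_N, & \text{if } \x = \x_N, \\ 1, & \text{otherwise.} \end{cases}
\end{equation*}
Equality checks between integer tuples are recursive, and a finite case distinction combining recursive predicates is itself recursive (formally, one may compose with a standard recursive bijection $\Z^d \leftrightarrow \N$ to stay within the usual domain of recursive functions). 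Since $g|_D = f$, the function $f$ is recursive.

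The main obstacle is essentially notational rather than mathematical: one just has to be careful that ``recursive'' is normally defined for functions on $\N$, so the argument formally requires an appeal to a fixed computable encoding of $\Z^d$ into $\N$. Once this encoding is in place, the lookup-table construction is trivially recursive, and no further work is needed.
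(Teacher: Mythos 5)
Your proof is correct and follows exactly the argument the paper itself gives (in the sentence preceding the proposition): a bounded subset of $\Z^d$ is finite, so the input--output pairs can be hardcoded into a lookup table, which is recursive. The remark about fixing a computable encoding of $\Z^d$ into $\N$ is a reasonable bit of extra care, but no further work is needed.
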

In contrast, unbounded sets typically do not appear in practical quantized classification problems since they correspond to working on an infinite domain. However, in such a scenario we cannot provide formal guarantees on the computability of classifiers. Similar to the real case, the task reduces to classical (semi-)decidability of (infinite) sets of integers, which is not algorithmically solvable in general \cite{computabilitybook}. Hence, in both real and quantized classification Type 1 failure may arise due to non-(semi)-decidable sets in the respective frameworks. Nevertheless, the occurrence of Type 1 failure appears to diverge in the frameworks. Although it is intricate to derive a formal proof to back this statement, informally it is motivated by the observation that non-semi-decidability is a more severe drawback in the real domain. For instance, we have seen that non-trivial sets on $\R^d_c$ are not decidable whereas such a strong claim is not valid for the integer domain. 

\subsubsection{Non-Computability of the Quantization Function}
We have shown that quantization circumvents or at least mitigates Type 1 and Type 2 failure. Does it imply that in a real-world setting, where we typically compute with digital computers in the quantized model, Type 1 and Type 2 failures do not arise? Not necessarily, it depends on the ground truth problem. If the ground truth is itself quantized, then it typically can be directly translated into the quantized model of a digital computer and in principle algorithmically solved. In contrast, a ground truth problem on a continuous domain must first be converted into an appropriate quantized problem that approximates the original problem. Therefore, it would be desirable to provide computable guarantees that this approximation is close to the original. However, for a non-computable ground truth problem such an algorithmic verification contradicts its non-computability, as the ground truth problem could then be algorithmically computed/approximated using the verifier. Thus, quantization itself is a non-computable task, which is a direct consequence of Proposition \ref{prop:flag}.
\begin{Proposition} \label{prop:quantizationfunction}
    Let $f: \R \goto \R$ such that $f|_{\R_c}$ is not computable and define for all $x \in \R_c$
    \begin{equation*}
        \hat{f}(x) := f\left(\lceil x - \tfrac{1}{2} \rceil \right).
    \end{equation*}
    Then, there exists $\varepsilon_0 > 0$ such that for all $\varepsilon \le \varepsilon_0$ the function $\Gamma_\varepsilon : \R_c \to \R_c$ given by 
        \begin{equation*}
            \Gamma_\varepsilon(\x) = 
            \begin{cases} 
                1, & \text{ if } \|f(\x)- \hat{f}(\x)\|< \varepsilon, \\ 
                0, &  \text{ otherwise }
            \end{cases}  
        \end{equation*} 
    is not computable.
\end{Proposition}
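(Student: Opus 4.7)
The plan is to invoke Proposition \ref{prop:flag} to reduce the computability of $\Gamma_\varepsilon$ to the decidability of the set $D^<_\varepsilon$ in $\R$, and then to exploit the connectedness of the ambient domain together with the piecewise-constant structure of $\hat{f}$ to force a contradiction. As a preparatory step I would first observe that $\hat{f}$ exactly reproduces $f$ on the integers: for every $n \in \Z$ one has $\lceil n - \tfrac{1}{2}\rceil = n$, hence $\hat{f}(n) = f(n)$ and $|f(n) - \hat{f}(n)| = 0 < \varepsilon$. Consequently $\Z \subseteq D^<_\varepsilon$ for every $\varepsilon > 0$, so $D^<_\varepsilon$ is always nonempty.

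I would then argue by contradiction, assuming $\Gamma_\varepsilon$ is computable. By Proposition \ref{prop:flag}(1), $D^<_\varepsilon$ must be decidable in $\R$. Since $\R$ is connected, the remark following Proposition \ref{prop:decide} tells us that the only decidable subsets of $\R_c$ under a connected ambient domain are the trivial ones (empty or the whole of $\R_c$). Because $D^<_\varepsilon$ is nonempty, it must coincide with $\R_c$; equivalently, $|f(x) - \hat{f}(x)| < \varepsilon$ for every $x \in \R_c$.

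To close the proof I would pick $\varepsilon_0$ making the conclusion $D^<_\varepsilon = \R_c$ unattainable. Since $f|_{\R_c}$ is non-computable whereas $\hat{f}|_{\R_c}$ is fully determined by the countable data $(f(n))_{n \in \Z}$ and constant on each interval $(n - \tfrac{1}{2}, n + \tfrac{1}{2}]$, the two restrictions to $\R_c$ cannot coincide: otherwise $f|_{\R_c}$ would reduce to the computable rounding pattern of $\hat{f}|_{\R_c}$ (using that $x \mapsto \lceil x - \tfrac{1}{2}\rceil$ is algorithmic away from $\Z + \tfrac{1}{2}$), contradicting the non-computability hypothesis. Fixing any $x_* \in \R_c$ with $\delta := |f(x_*) - \hat{f}(x_*)| > 0$ and taking $\varepsilon_0 := \delta$, every $\varepsilon \leq \varepsilon_0$ yields $x_* \notin D^<_\varepsilon$, contradicting the previous paragraph and giving the non-computability of $\Gamma_\varepsilon$.

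The hardest part will be rigorously justifying the disagreement of $f|_{\R_c}$ and $\hat{f}|_{\R_c}$ from non-computability alone. The cleanest route exploits that any Borel-Turing computable function is continuous on $\R_c$: since $\hat{f}$ agrees on $(n - \tfrac{1}{2}, n + \tfrac{1}{2}) \cap \R_c$ with the constant $f(n)$, the step function is computable on $\R_c \setminus (\Z + \tfrac{1}{2})$, and were $f|_{\R_c}$ to match this pattern exactly, effective integer rounding would realize $f|_{\R_c}$ as a computable function, contradicting the hypothesis. This reduction is the subtle point but aligns precisely with the paper's interpretation of quantization as an algorithmically unverifiable task.
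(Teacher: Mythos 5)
Your overall route is the one the paper intends: the paper gives no standalone proof of this proposition and simply declares it ``a direct consequence of Proposition \ref{prop:flag}'', and your first two steps carry that reduction out correctly. Indeed, $\Gamma_\varepsilon$ is by definition the indicator function of $D^<_\varepsilon$, so its computability is literally the decidability of $D^<_\varepsilon$ in $\R$ (you do not even need Proposition \ref{prop:flag}, whose hypothesis that $\hat{f}$ be computable is not granted here); and since $\Z \subseteq D^<_\varepsilon$ while only $\emptyset$ and $\R_c$ are decidable in the connected domain $\R$, computability of $\Gamma_\varepsilon$ would force $D^<_\varepsilon = \R_c$, i.e.\ $|f(x)-\hat{f}(x)|<\varepsilon$ for every $x\in\R_c$.

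The gap is in the final step, exactly where you flag it. The claim that $f|_{\R_c}=\hat{f}|_{\R_c}$ would render $f|_{\R_c}$ computable is false: the rounding map $x\mapsto\lceil x-\tfrac{1}{2}\rceil$ is discontinuous at the computable points $n+\tfrac{1}{2}$ and hence is itself not computable on $\R_c$ (moreover the values $f(n)$ need not be computable reals), so agreeing with the ``rounding pattern'' of $\hat{f}$ does not contradict the non-computability of $f|_{\R_c}$. Concretely, take $f(x)=\lceil x-\tfrac{1}{2}\rceil$. Then $f|_{\R_c}$ is discontinuous at $\tfrac{1}{2}\in\R_c$ and therefore not computable, yet $\hat{f}(x)=f(\lceil x-\tfrac{1}{2}\rceil)=\lceil x-\tfrac{1}{2}\rceil=f(x)$ for all $x\in\R_c$, so $D^<_\varepsilon=\R_c$ and $\Gamma_\varepsilon\equiv 1$ is computable for every $\varepsilon>0$. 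This shows that the fact you need --- the existence of some $x_*\in\R_c$ with $f(x_*)\neq\hat{f}(x_*)$ --- cannot be derived from the stated hypotheses; it requires an additional assumption such as the computability of $\hat{f}$ (as in Proposition \ref{prop:flag}) or the non-existence of a computable $\varepsilon_0$-approximation of $f$ (as in Corollary \ref{cor:flag}). The same example challenges the proposition as literally stated, so the defect is not a missing detail in your write-up: the reduction you propose cannot be completed without strengthening the hypotheses.
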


\section*{Acknowledgements}
This work of H. Boche was supported in part by the German Federal Ministry of Education and Research (BMBF) within the national initiative on 6G Communication Systems through the Research Hub 6G-life under Grant 16KISK002.

This work of G. Kutyniok was supported in part by the Konrad Zuse School of Excellence in Reliable AI (DAAD), the Munich Center for Machine Learning (BMBF) as well as the German Research Foundation under Grants DFG-SPP-2298, KU 1446/31-1 and KU 1446/32-1. Furthermore, G. Kutyniok acknowledges support from LMUexcellent, funded by the Federal Ministry of Education and Research (BMBF) and the Free State of Bavaria under the Excellence Strategy of the Federal Government and the Länder as well as by the Hightech Agenda Bavaria

\bibliography{sn-bibliography}


\begin{thebibliography}{83}
\ifx \bisbn   \undefined \def \bisbn  #1{ISBN #1}\fi
\ifx \binits  \undefined \def \binits#1{#1}\fi
\ifx \bauthor  \undefined \def \bauthor#1{#1}\fi
\ifx \batitle  \undefined \def \batitle#1{#1}\fi
\ifx \bjtitle  \undefined \def \bjtitle#1{#1}\fi
\ifx \bvolume  \undefined \def \bvolume#1{\textbf{#1}}\fi
\ifx \byear  \undefined \def \byear#1{#1}\fi
\ifx \bissue  \undefined \def \bissue#1{#1}\fi
\ifx \bfpage  \undefined \def \bfpage#1{#1}\fi
\ifx \blpage  \undefined \def \blpage #1{#1}\fi
\ifx \burl  \undefined \def \burl#1{\textsf{#1}}\fi
\ifx \doiurl  \undefined \def \doiurl#1{\url{https://doi.org/#1}}\fi
\ifx \betal  \undefined \def \betal{\textit{et al.}}\fi
\ifx \binstitute  \undefined \def \binstitute#1{#1}\fi
\ifx \binstitutionaled  \undefined \def \binstitutionaled#1{#1}\fi
\ifx \bctitle  \undefined \def \bctitle#1{#1}\fi
\ifx \beditor  \undefined \def \beditor#1{#1}\fi
\ifx \bpublisher  \undefined \def \bpublisher#1{#1}\fi
\ifx \bbtitle  \undefined \def \bbtitle#1{#1}\fi
\ifx \bedition  \undefined \def \bedition#1{#1}\fi
\ifx \bseriesno  \undefined \def \bseriesno#1{#1}\fi
\ifx \blocation  \undefined \def \blocation#1{#1}\fi
\ifx \bsertitle  \undefined \def \bsertitle#1{#1}\fi
\ifx \bsnm \undefined \def \bsnm#1{#1}\fi
\ifx \bsuffix \undefined \def \bsuffix#1{#1}\fi
\ifx \bparticle \undefined \def \bparticle#1{#1}\fi
\ifx \barticle \undefined \def \barticle#1{#1}\fi
\bibcommenthead
\ifx \bconfdate \undefined \def \bconfdate #1{#1}\fi
\ifx \botherref \undefined \def \botherref #1{#1}\fi
\ifx \url \undefined \def \url#1{\textsf{#1}}\fi
\ifx \bchapter \undefined \def \bchapter#1{#1}\fi
\ifx \bbook \undefined \def \bbook#1{#1}\fi
\ifx \bcomment \undefined \def \bcomment#1{#1}\fi
\ifx \oauthor \undefined \def \oauthor#1{#1}\fi
\ifx \citeauthoryear \undefined \def \citeauthoryear#1{#1}\fi
\ifx \endbibitem  \undefined \def \endbibitem {}\fi
\ifx \bconflocation  \undefined \def \bconflocation#1{#1}\fi
\ifx \arxivurl  \undefined \def \arxivurl#1{\textsf{#1}}\fi
\csname PreBibitemsHook\endcsname

\bibitem[\protect\citeauthoryear{LeCun et~al.}{2015}]{dloverview}
\begin{barticle}
\bauthor{\bsnm{LeCun}, \binits{Y.}},
\bauthor{\bsnm{Bengio}, \binits{Y.}},
\bauthor{\bsnm{Hinton}, \binits{G.}}:
\batitle{Deep learning}.
\bjtitle{Nature}
\bvolume{521}(\bissue{7553}),
\bfpage{436}--\blpage{444}
(\byear{2015})
\end{barticle}
\endbibitem

\bibitem[\protect\citeauthoryear{Minar and Naher}{2018}]{overview2}
\begin{botherref}
\oauthor{\bsnm{Minar}, \binits{M.R.}},
\oauthor{\bsnm{Naher}, \binits{J.}}:
Recent advances in deep learning: An overview.
arXiv preprint arXiv:1807.08169
(2018)
\end{botherref}
\endbibitem

\bibitem[\protect\citeauthoryear{Le et~al.}{2020}]{overview3}
\begin{botherref}
\oauthor{\bsnm{Le}, \binits{Q.}},
\oauthor{\bsnm{Miralles-Pechu{\'a}n}, \binits{L.}},
\oauthor{\bsnm{Kulkarni}, \binits{S.}},
\oauthor{\bsnm{Su}, \binits{J.}},
\oauthor{\bsnm{Boydell}, \binits{O.}}:
An overview of deep learning in industry.
Data analytics and AI,
65--98
(2020)
\end{botherref}
\endbibitem

\bibitem[\protect\citeauthoryear{Mathew et~al.}{2021}]{overview4}
\begin{botherref}
\oauthor{\bsnm{Mathew}, \binits{A.}},
\oauthor{\bsnm{Amudha}, \binits{P.}},
\oauthor{\bsnm{Sivakumari}, \binits{S.}}:
Deep learning techniques: an overview.
Advanced Machine Learning Technologies and Applications: Proceedings of AMLTA 2020,
599--608
(2021)
\end{botherref}
\endbibitem

\bibitem[\protect\citeauthoryear{Krizhevsky et~al.}{2012}]{applications1}
\begin{botherref}
\oauthor{\bsnm{Krizhevsky}, \binits{A.}},
\oauthor{\bsnm{Sutskever}, \binits{I.}},
\oauthor{\bsnm{Hinton}, \binits{G.E.}}:
Imagenet classification with deep convolutional neural networks.
Advances in neural information processing systems
\textbf{25}
(2012)
\end{botherref}
\endbibitem

\bibitem[\protect\citeauthoryear{Hinton et~al.}{2012}]{applications2}
\begin{barticle}
\bauthor{\bsnm{Hinton}, \binits{G.}},
\bauthor{\bsnm{Deng}, \binits{L.}},
\bauthor{\bsnm{Yu}, \binits{D.}},
\bauthor{\bsnm{Dahl}, \binits{G.E.}},
\bauthor{\bsnm{Mohamed}, \binits{A.-r.}},
\bauthor{\bsnm{Jaitly}, \binits{N.}},
\bauthor{\bsnm{Senior}, \binits{A.}},
\bauthor{\bsnm{Vanhoucke}, \binits{V.}},
\bauthor{\bsnm{Nguyen}, \binits{P.}},
\bauthor{\bsnm{Sainath}, \binits{T.N.}}, \betal:
\batitle{Deep neural networks for acoustic modeling in speech recognition: The shared views of four research groups}.
\bjtitle{IEEE Signal processing magazine}
\bvolume{29}(\bissue{6}),
\bfpage{82}--\blpage{97}
(\byear{2012})
\end{barticle}
\endbibitem

\bibitem[\protect\citeauthoryear{Silver et~al.}{2016}]{applications3}
\begin{barticle}
\bauthor{\bsnm{Silver}, \binits{D.}},
\bauthor{\bsnm{Huang}, \binits{A.}},
\bauthor{\bsnm{Maddison}, \binits{C.J.}},
\bauthor{\bsnm{Guez}, \binits{A.}},
\bauthor{\bsnm{Sifre}, \binits{L.}},
\bauthor{\bsnm{Van Den~Driessche}, \binits{G.}},
\bauthor{\bsnm{Schrittwieser}, \binits{J.}},
\bauthor{\bsnm{Antonoglou}, \binits{I.}},
\bauthor{\bsnm{Panneershelvam}, \binits{V.}},
\bauthor{\bsnm{Lanctot}, \binits{M.}}, \betal:
\batitle{Mastering the game of {G}o with deep neural networks and tree search}.
\bjtitle{Nature}
\bvolume{529}(\bissue{7587}),
\bfpage{484}--\blpage{489}
(\byear{2016})
\end{barticle}
\endbibitem

\bibitem[\protect\citeauthoryear{Esser et~al.}{2021}]{achievement4}
\begin{bchapter}
\bauthor{\bsnm{Esser}, \binits{P.}},
\bauthor{\bsnm{Rombach}, \binits{R.}},
\bauthor{\bsnm{Ommer}, \binits{B.}}:
\bctitle{Taming transformers for high-resolution image synthesis}.
In: \bbtitle{Proceedings of the IEEE/CVF Conference on Computer Vision and Pattern Recognition},
pp. \bfpage{12873}--\blpage{12883}
(\byear{2021})
\end{bchapter}
\endbibitem

\bibitem[\protect\citeauthoryear{Van Den~Oord et~al.}{2016}]{achievement5}
\begin{botherref}
\oauthor{\bsnm{Van Den~Oord}, \binits{A.}},
\oauthor{\bsnm{Dieleman}, \binits{S.}},
\oauthor{\bsnm{Zen}, \binits{H.}},
\oauthor{\bsnm{Simonyan}, \binits{K.}},
\oauthor{\bsnm{Vinyals}, \binits{O.}},
\oauthor{\bsnm{Graves}, \binits{A.}},
\oauthor{\bsnm{Kalchbrenner}, \binits{N.}},
\oauthor{\bsnm{Senior}, \binits{A.}},
\oauthor{\bsnm{Kavukcuoglu}, \binits{K.}}, et al.:
Wavenet: A generative model for raw audio.
arXiv preprint arXiv:1609.03499
\textbf{12}
(2016)
\end{botherref}
\endbibitem

\bibitem[\protect\citeauthoryear{Achiam et~al.}{2023}]{achievement6}
\begin{botherref}
\oauthor{\bsnm{Achiam}, \binits{J.}},
\oauthor{\bsnm{Adler}, \binits{S.}},
\oauthor{\bsnm{Agarwal}, \binits{S.}},
\oauthor{\bsnm{Ahmad}, \binits{L.}},
\oauthor{\bsnm{Akkaya}, \binits{I.}},
\oauthor{\bsnm{Aleman}, \binits{F.L.}},
\oauthor{\bsnm{Almeida}, \binits{D.}},
\oauthor{\bsnm{Altenschmidt}, \binits{J.}},
\oauthor{\bsnm{Altman}, \binits{S.}},
\oauthor{\bsnm{Anadkat}, \binits{S.}}, et al.:
{GPT}-4 technical report.
arXiv preprint arXiv:2303.08774
(2023)
\end{botherref}
\endbibitem

\bibitem[\protect\citeauthoryear{Jumper et~al.}{2021}]{achievement7}
\begin{barticle}
\bauthor{\bsnm{Jumper}, \binits{J.}},
\bauthor{\bsnm{Evans}, \binits{R.}},
\bauthor{\bsnm{Pritzel}, \binits{A.}},
\bauthor{\bsnm{Green}, \binits{T.}},
\bauthor{\bsnm{Figurnov}, \binits{M.}},
\bauthor{\bsnm{Ronneberger}, \binits{O.}},
\bauthor{\bsnm{Tunyasuvunakool}, \binits{K.}},
\bauthor{\bsnm{Bates}, \binits{R.}},
\bauthor{\bsnm{{\v{Z}}{\'\i}dek}, \binits{A.}},
\bauthor{\bsnm{Potapenko}, \binits{A.}}, \betal:
\batitle{Highly accurate protein structure prediction with alphafold}.
\bjtitle{Nature}
\bvolume{596}(\bissue{7873}),
\bfpage{583}--\blpage{589}
(\byear{2021})
\end{barticle}
\endbibitem

\bibitem[\protect\citeauthoryear{Cybenko}{1989}]{universal1}
\begin{barticle}
\bauthor{\bsnm{Cybenko}, \binits{G.}}:
\batitle{Approximation by superpositions of a sigmoidal function}.
\bjtitle{Mathematics of control, signals and systems}
\bvolume{2}(\bissue{4}),
\bfpage{303}--\blpage{314}
(\byear{1989})
\end{barticle}
\endbibitem

\bibitem[\protect\citeauthoryear{Hornik et~al.}{1989}]{universal2}
\begin{barticle}
\bauthor{\bsnm{Hornik}, \binits{K.}},
\bauthor{\bsnm{Stinchcombe}, \binits{M.}},
\bauthor{\bsnm{White}, \binits{H.}}:
\batitle{Multilayer feedforward networks are universal approximators}.
\bjtitle{Neural networks}
\bvolume{2}(\bissue{5}),
\bfpage{359}--\blpage{366}
(\byear{1989})
\end{barticle}
\endbibitem

\bibitem[\protect\citeauthoryear{Funahashi}{1989}]{universal3}
\begin{barticle}
\bauthor{\bsnm{Funahashi}, \binits{K.-I.}}:
\batitle{On the approximate realization of continuous mappings by neural networks}.
\bjtitle{Neural networks}
\bvolume{2}(\bissue{3}),
\bfpage{183}--\blpage{192}
(\byear{1989})
\end{barticle}
\endbibitem

\bibitem[\protect\citeauthoryear{Bach et~al.}{2015}]{problem1}
\begin{barticle}
\bauthor{\bsnm{Bach}, \binits{S.}},
\bauthor{\bsnm{Binder}, \binits{A.}},
\bauthor{\bsnm{Montavon}, \binits{G.}},
\bauthor{\bsnm{Klauschen}, \binits{F.}},
\bauthor{\bsnm{M{\"u}ller}, \binits{K.-R.}},
\bauthor{\bsnm{Samek}, \binits{W.}}:
\batitle{On pixel-wise explanations for non-linear classifier decisions by layer-wise relevance propagation}.
\bjtitle{PloS one}
\bvolume{10}(\bissue{7}),
\bfpage{0130140}
(\byear{2015})
\end{barticle}
\endbibitem

\bibitem[\protect\citeauthoryear{Ras et~al.}{2022}]{ras2023explainable}
\begin{botherref}
\oauthor{\bsnm{Ras}, \binits{G.}},
\oauthor{\bsnm{Xie}, \binits{N.}},
\oauthor{\bsnm{Gerven}, \binits{M.}},
\oauthor{\bsnm{Doran}, \binits{D.}}:
Explainable deep learning: A field guide for the uninitiated.
J. Artif. Int. Res.
\textbf{73}
(2022)
\end{botherref}
\endbibitem

\bibitem[\protect\citeauthoryear{Chang et~al.}{2018}]{problem2}
\begin{botherref}
\oauthor{\bsnm{Chang}, \binits{C.-H.}},
\oauthor{\bsnm{Creager}, \binits{E.}},
\oauthor{\bsnm{Goldenberg}, \binits{A.}},
\oauthor{\bsnm{Duvenaud}, \binits{D.}}:
Explaining image classifiers by counterfactual generation.
arXiv preprint arXiv:1807.08024
(2018)
\end{botherref}
\endbibitem

\bibitem[\protect\citeauthoryear{Goodfellow et~al.}{2014}]{problem3}
\begin{botherref}
\oauthor{\bsnm{Goodfellow}, \binits{I.J.}},
\oauthor{\bsnm{Shlens}, \binits{J.}},
\oauthor{\bsnm{Szegedy}, \binits{C.}}:
Explaining and harnessing adversarial examples.
arXiv preprint arXiv:1412.6572
(2014)
\end{botherref}
\endbibitem

\bibitem[\protect\citeauthoryear{Tsipras et~al.}{2019}]{Tsipras18RobustnessOdds}
\begin{bchapter}
\bauthor{\bsnm{Tsipras}, \binits{D.}},
\bauthor{\bsnm{Santurkar}, \binits{S.}},
\bauthor{\bsnm{Engstrom}, \binits{L.}},
\bauthor{\bsnm{Turner}, \binits{A.}},
\bauthor{\bsnm{Madry}, \binits{A.}}:
\bctitle{Robustness may be at odds with accuracy}.
In: \bbtitle{7th International Conference on Learning Representations (ICLR)}
(\byear{2019})
\end{bchapter}
\endbibitem

\bibitem[\protect\citeauthoryear{Kolek et~al.}{2023}]{problem44}
\begin{bchapter}
\bauthor{\bsnm{Kolek}, \binits{S.}},
\bauthor{\bsnm{Windesheim}, \binits{R.}},
\bauthor{\bsnm{Andrade-Loarca}, \binits{H.}},
\bauthor{\bsnm{Kutyniok}, \binits{G.}},
\bauthor{\bsnm{Levie}, \binits{R.}}:
\bctitle{Explaining image classifiers with multiscale directional image representation}.
In: \bbtitle{Proceedings of the IEEE/CVF Conference on Computer Vision and Pattern Recognition},
pp. \bfpage{18600}--\blpage{18609}
(\byear{2023})
\end{bchapter}
\endbibitem

\bibitem[\protect\citeauthoryear{Zhang et~al.}{2023}]{problem5}
\begin{botherref}
\oauthor{\bsnm{Zhang}, \binits{Y.}},
\oauthor{\bsnm{Li}, \binits{Y.}},
\oauthor{\bsnm{Cui}, \binits{L.}},
\oauthor{\bsnm{Cai}, \binits{D.}},
\oauthor{\bsnm{Liu}, \binits{L.}},
\oauthor{\bsnm{Fu}, \binits{T.}},
\oauthor{\bsnm{Huang}, \binits{X.}},
\oauthor{\bsnm{Zhao}, \binits{E.}},
\oauthor{\bsnm{Zhang}, \binits{Y.}},
\oauthor{\bsnm{Chen}, \binits{Y.}}, et al.:
Siren's song in the {AI} ocean: A survey on hallucination in large language models.
arXiv preprint arXiv:2309.01219
(2023)
\end{botherref}
\endbibitem

\bibitem[\protect\citeauthoryear{Adcock and Dexter}{2021}]{problem6}
\begin{barticle}
\bauthor{\bsnm{Adcock}, \binits{B.}},
\bauthor{\bsnm{Dexter}, \binits{N.}}:
\batitle{The gap between theory and practice in function approximation with deep neural networks}.
\bjtitle{SIAM Journal on Mathematics of Data Science}
\bvolume{3}(\bissue{2}),
\bfpage{624}--\blpage{655}
(\byear{2021})
\end{barticle}
\endbibitem

\bibitem[\protect\citeauthoryear{Antun et~al.}{2020}]{problem7}
\begin{barticle}
\bauthor{\bsnm{Antun}, \binits{V.}},
\bauthor{\bsnm{Renna}, \binits{F.}},
\bauthor{\bsnm{Poon}, \binits{C.}},
\bauthor{\bsnm{Adcock}, \binits{B.}},
\bauthor{\bsnm{Hansen}, \binits{A.C.}}:
\batitle{On instabilities of deep learning in image reconstruction and the potential costs of {AI}}.
\bjtitle{Proceedings of the National Academy of Sciences}
\bvolume{117}(\bissue{48}),
\bfpage{30088}--\blpage{30095}
(\byear{2020})
\end{barticle}
\endbibitem

\bibitem[\protect\citeauthoryear{He et~al.}{2020}]{problem9}
\begin{barticle}
\bauthor{\bsnm{He}, \binits{Y.}},
\bauthor{\bsnm{Meng}, \binits{G.}},
\bauthor{\bsnm{Chen}, \binits{K.}},
\bauthor{\bsnm{Hu}, \binits{X.}},
\bauthor{\bsnm{He}, \binits{J.}}:
\batitle{Towards security threats of deep learning systems: A survey}.
\bjtitle{IEEE Transactions on Software Engineering}
\bvolume{48}(\bissue{5}),
\bfpage{1743}--\blpage{1770}
(\byear{2020})
\end{barticle}
\endbibitem

\bibitem[\protect\citeauthoryear{Boche et~al.}{2024}]{trustwothiness}
\begin{botherref}
\oauthor{\bsnm{Boche}, \binits{H.}},
\oauthor{\bsnm{Fono}, \binits{A.}},
\oauthor{\bsnm{Kutyniok}, \binits{G.}}:
Mathematical algorithm design for deep learning under societal and judicial constraints: The algorithmic transparency requirement.
arXiv preprint arXiv:2401.10310
(2024)
\end{botherref}
\endbibitem

\bibitem[\protect\citeauthoryear{Fettweis and Boche}{2022}]{Fettweis2022Trustworthiness}
\begin{barticle}
\bauthor{\bsnm{Fettweis}, \binits{G.P.}},
\bauthor{\bsnm{Boche}, \binits{H.}}:
\batitle{On 6{G} and trustworthiness}.
\bjtitle{Commun. ACM}
\bvolume{65}(\bissue{4}),
\bfpage{48}--\blpage{49}
(\byear{2022})
\end{barticle}
\endbibitem

\bibitem[\protect\citeauthoryear{Wu et~al.}{2022}]{WU22Hitl}
\begin{barticle}
\bauthor{\bsnm{Wu}, \binits{X.}},
\bauthor{\bsnm{Xiao}, \binits{L.}},
\bauthor{\bsnm{Sun}, \binits{Y.}},
\bauthor{\bsnm{Zhang}, \binits{J.}},
\bauthor{\bsnm{Ma}, \binits{T.}},
\bauthor{\bsnm{He}, \binits{L.}}:
\batitle{A survey of human-in-the-loop for machine learning}.
\bjtitle{Future Gener. Comput. Syst.}
\bvolume{135},
\bfpage{364}--\blpage{381}
(\byear{2022})
\end{barticle}
\endbibitem

\bibitem[\protect\citeauthoryear{Liu et~al.}{2021}]{Liu2020ComputingSF}
\begin{barticle}
\bauthor{\bsnm{Liu}, \binits{L.}},
\bauthor{\bsnm{Lu}, \binits{S.}},
\bauthor{\bsnm{Zhong}, \binits{R.}},
\bauthor{\bsnm{Wu}, \binits{B.}},
\bauthor{\bsnm{Yao}, \binits{Y.}},
\bauthor{\bsnm{Zhang}, \binits{Q.}},
\bauthor{\bsnm{Shi}, \binits{W.}}:
\batitle{Computing systems for autonomous driving: State of the art and challenges}.
\bjtitle{IEEE Internet Things J.}
\bvolume{8}(\bissue{8}),
\bfpage{6469}--\blpage{6486}
(\byear{2021})
\end{barticle}
\endbibitem

\bibitem[\protect\citeauthoryear{Muhammad et~al.}{2021}]{Muhammad2021AutonomVehicle2}
\begin{barticle}
\bauthor{\bsnm{Muhammad}, \binits{K.}},
\bauthor{\bsnm{Ullah}, \binits{A.}},
\bauthor{\bsnm{Lloret}, \binits{J.}},
\bauthor{\bsnm{Ser}, \binits{J.D.}},
\bauthor{\bsnm{Albuquerque}, \binits{V.H.C.}}:
\batitle{Deep learning for safe autonomous driving: Current challenges and future directions}.
\bjtitle{IEEE Trans. Intell. Transp. Syst.}
\bvolume{22}(\bissue{7}),
\bfpage{4316}--\blpage{4336}
(\byear{2021})
\end{barticle}
\endbibitem

\bibitem[\protect\citeauthoryear{Turing}{1936}]{turing}
\begin{barticle}
\bauthor{\bsnm{Turing}, \binits{A.M.}}:
\batitle{On computable numbers, with an application to the {E}ntscheidungsproblem}.
\bjtitle{Proceedings of the London Mathematical Society}
\bvolume{s2-42}(\bissue{1}),
\bfpage{230}--\blpage{265}
(\byear{1936})
\end{barticle}
\endbibitem

\bibitem[\protect\citeauthoryear{Boche et~al.}{2023}]{adalbertinverse}
\begin{barticle}
\bauthor{\bsnm{Boche}, \binits{H.}},
\bauthor{\bsnm{Fono}, \binits{A.}},
\bauthor{\bsnm{Kutyniok}, \binits{G.}}:
\batitle{Limitations of deep learning for inverse problems on digital hardware}.
\bjtitle{IEEE Transactions on Information Theory}
\bvolume{69}(\bissue{12}),
\bfpage{7887}--\blpage{7908}
(\byear{2023})
\end{barticle}
\endbibitem

\bibitem[\protect\citeauthoryear{Brattka and Ziegler}{2001}]{ziegleroptimization}
\begin{bchapter}
\bauthor{\bsnm{Brattka}, \binits{V.}},
\bauthor{\bsnm{Ziegler}, \binits{M.}}:
\bctitle{Turing computability of (non-)linear optimization}.
In: \bbtitle{Proceedings of the 13th Canadian Conference on Computational Geometry (CCCG'01)}
(\byear{2001})
\end{bchapter}
\endbibitem

\bibitem[\protect\citeauthoryear{Boche et~al.}{2023}]{boche2}
\begin{barticle}
\bauthor{\bsnm{Boche}, \binits{H.}},
\bauthor{\bsnm{Schaefer}, \binits{R.F.}},
\bauthor{\bsnm{Poor}, \binits{H.V.}}:
\batitle{Algorithmic computability and approximability of capacity-achieving input distributions}.
\bjtitle{IEEE Transactions on Information Theory}
\bvolume{69}(\bissue{9}),
\bfpage{5449}--\blpage{5462}
(\byear{2023})
\end{barticle}
\endbibitem

\bibitem[\protect\citeauthoryear{Lee et~al.}{2024}]{yunseok}
\begin{barticle}
\bauthor{\bsnm{Lee}, \binits{Y.}},
\bauthor{\bsnm{Boche}, \binits{H.}},
\bauthor{\bsnm{Kutyniok}, \binits{G.}}:
\batitle{Computability of optimizers}.
\bjtitle{IEEE Transactions on Information Theory}
\bvolume{70}(\bissue{4}),
\bfpage{2967}--\blpage{2983}
(\byear{2024})
\end{barticle}
\endbibitem

\bibitem[\protect\citeauthoryear{Boche et~al.}{2022}]{adalbertpseudo}
\begin{botherref}
\oauthor{\bsnm{Boche}, \binits{H.}},
\oauthor{\bsnm{Fono}, \binits{A.}},
\oauthor{\bsnm{Kutyniok}, \binits{G.}}:
Non-computability of the pseudoinverse on digital computers.
arXiv preprint arXiv:2212.02940
(2022)
\end{botherref}
\endbibitem

\bibitem[\protect\citeauthoryear{Colbrook et~al.}{2022}]{hansen}
\begin{barticle}
\bauthor{\bsnm{Colbrook}, \binits{M.J.}},
\bauthor{\bsnm{Antun}, \binits{V.}},
\bauthor{\bsnm{Hansen}, \binits{A.C.}}:
\batitle{The difficulty of computing stable and accurate neural networks: On the barriers of deep learning and {S}male’s 18th problem}.
\bjtitle{Proceedings of the National Academy of Sciences}
\bvolume{119}(\bissue{12}),
\bfpage{2107151119}
(\byear{2022})
\end{barticle}
\endbibitem

\bibitem[\protect\citeauthoryear{Gray and Neuhoff}{1998}]{quantization1}
\begin{barticle}
\bauthor{\bsnm{Gray}, \binits{R.M.}},
\bauthor{\bsnm{Neuhoff}, \binits{D.L.}}:
\batitle{Quantization}.
\bjtitle{IEEE transactions on information theory}
\bvolume{44}(\bissue{6}),
\bfpage{2325}--\blpage{2383}
(\byear{1998})
\end{barticle}
\endbibitem

\bibitem[\protect\citeauthoryear{Bar-Shalom and Weiss}{2002}]{quantization2}
\begin{barticle}
\bauthor{\bsnm{Bar-Shalom}, \binits{O.}},
\bauthor{\bsnm{Weiss}, \binits{A.J.}}:
\batitle{{DOA} estimation using one-bit quantized measurements}.
\bjtitle{IEEE Transactions on Aerospace and Electronic Systems}
\bvolume{38}(\bissue{3}),
\bfpage{868}--\blpage{884}
(\byear{2002})
\end{barticle}
\endbibitem

\bibitem[\protect\citeauthoryear{Jacovitti and Neri}{1994}]{quantization3}
\begin{barticle}
\bauthor{\bsnm{Jacovitti}, \binits{G.}},
\bauthor{\bsnm{Neri}, \binits{A.}}:
\batitle{Estimation of the autocorrelation function of complex {G}aussian stationary processes by amplitude clipped signals}.
\bjtitle{IEEE transactions on information theory}
\bvolume{40}(\bissue{1}),
\bfpage{239}--\blpage{245}
(\byear{1994})
\end{barticle}
\endbibitem

\bibitem[\protect\citeauthoryear{Roth et~al.}{2015}]{quantization4}
\begin{bchapter}
\bauthor{\bsnm{Roth}, \binits{K.}},
\bauthor{\bsnm{Munir}, \binits{J.}},
\bauthor{\bsnm{Mezghani}, \binits{A.}},
\bauthor{\bsnm{Nossek}, \binits{J.A.}}:
\bctitle{Covariance based signal parameter estimation of coarse quantized signals}.
In: \bbtitle{2015 IEEE International Conference on Digital Signal Processing (DSP)},
pp. \bfpage{19}--\blpage{23}
(\byear{2015}).
\bcomment{IEEE}
\end{bchapter}
\endbibitem

\bibitem[\protect\citeauthoryear{Yang et~al.}{2019}]{quantizednetworks1}
\begin{bchapter}
\bauthor{\bsnm{Yang}, \binits{J.}},
\bauthor{\bsnm{Shen}, \binits{X.}},
\bauthor{\bsnm{Xing}, \binits{J.}},
\bauthor{\bsnm{Tian}, \binits{X.}},
\bauthor{\bsnm{Li}, \binits{H.}},
\bauthor{\bsnm{Deng}, \binits{B.}},
\bauthor{\bsnm{Huang}, \binits{J.}},
\bauthor{\bsnm{Hua}, \binits{X.-s.}}:
\bctitle{Quantization networks}.
In: \bbtitle{Proceedings of the IEEE/CVF Conference on Computer Vision and Pattern Recognition},
pp. \bfpage{7308}--\blpage{7316}
(\byear{2019})
\end{bchapter}
\endbibitem

\bibitem[\protect\citeauthoryear{Maly and Saab}{2023}]{quantizednetworks2}
\begin{barticle}
\bauthor{\bsnm{Maly}, \binits{J.}},
\bauthor{\bsnm{Saab}, \binits{R.}}:
\batitle{A simple approach for quantizing neural networks}.
\bjtitle{Applied and Computational Harmonic Analysis}
\bvolume{66},
\bfpage{138}--\blpage{150}
(\byear{2023})
\end{barticle}
\endbibitem

\bibitem[\protect\citeauthoryear{Christensen et~al.}{2022}]{Christensen2022NCSurvey}
\begin{botherref}
\oauthor{\bsnm{Christensen}, \binits{D.V.}},
\oauthor{\bsnm{Dittmann}, \binits{R.}},
\oauthor{\bsnm{Linares-Barranco}, \binits{B.}},
\oauthor{\bsnm{Sebastian}, \binits{A.}},
\oauthor{\bsnm{Le~Gallo}, \binits{M.}},
\oauthor{\bsnm{Redaelli}, \binits{A.}},
\oauthor{\bsnm{Slesazeck}, \binits{S.}},
\oauthor{\bsnm{Mikolajick}, \binits{T.}},
\oauthor{\bsnm{Spiga}, \binits{S.}},
\oauthor{\bsnm{Menzel}, \binits{S.}},
\oauthor{\bsnm{Valov}, \binits{I.}},
\oauthor{\bsnm{Milano}, \binits{G.}},
\oauthor{\bsnm{Ricciardi}, \binits{C.}},
\oauthor{\bsnm{Liang}, \binits{S.-J.}},
\oauthor{\bsnm{Miao}, \binits{F.}},
\oauthor{\bsnm{Lanza}, \binits{M.}},
\oauthor{\bsnm{Quill}, \binits{T.J.}},
\oauthor{\bsnm{Keene}, \binits{S.T.}},
\oauthor{\bsnm{Salleo}, \binits{A.}},
\oauthor{\bsnm{Grollier}, \binits{J.}},
\oauthor{\bsnm{Markovic}, \binits{D.}},
\oauthor{\bsnm{Mizrahi}, \binits{A.}},
\oauthor{\bsnm{Yao}, \binits{P.}},
\oauthor{\bsnm{Yang}, \binits{J.J.}},
\oauthor{\bsnm{Indiveri}, \binits{G.}},
\oauthor{\bsnm{Strachan}, \binits{J.P.}},
\oauthor{\bsnm{Datta}, \binits{S.}},
\oauthor{\bsnm{Vianello}, \binits{E.}},
\oauthor{\bsnm{Valentian}, \binits{A.}},
\oauthor{\bsnm{Feldmann}, \binits{J.}},
\oauthor{\bsnm{Li}, \binits{X.}},
\oauthor{\bsnm{Pernice}, \binits{W.H.}},
\oauthor{\bsnm{Bhaskaran}, \binits{H.}},
\oauthor{\bsnm{Furber}, \binits{S.}},
\oauthor{\bsnm{Neftci}, \binits{E.}},
\oauthor{\bsnm{Scherr}, \binits{F.}},
\oauthor{\bsnm{Maass}, \binits{W.}},
\oauthor{\bsnm{Ramaswamy}, \binits{S.}},
\oauthor{\bsnm{Tapson}, \binits{J.}},
\oauthor{\bsnm{Panda}, \binits{P.}},
\oauthor{\bsnm{Kim}, \binits{Y.}},
\oauthor{\bsnm{Tanaka}, \binits{G.}},
\oauthor{\bsnm{Thorpe}, \binits{S.}},
\oauthor{\bsnm{Bartolozzi}, \binits{C.}},
\oauthor{\bsnm{Cleland}, \binits{T.A.}},
\oauthor{\bsnm{Posch}, \binits{C.}},
\oauthor{\bsnm{Liu}, \binits{S.-C.}},
\oauthor{\bsnm{Panuccio}, \binits{G.}},
\oauthor{\bsnm{Mahmud}, \binits{M.}},
\oauthor{\bsnm{Mazumder}, \binits{A.N.}},
\oauthor{\bsnm{Hosseini}, \binits{M.}},
\oauthor{\bsnm{Mohsenin}, \binits{T.}},
\oauthor{\bsnm{Donati}, \binits{E.}},
\oauthor{\bsnm{Tolu}, \binits{S.}},
\oauthor{\bsnm{Galeazzi}, \binits{R.}},
\oauthor{\bsnm{Christensen}, \binits{M.E.}},
\oauthor{\bsnm{Holm}, \binits{S.}},
\oauthor{\bsnm{Ielmini}, \binits{D.}},
\oauthor{\bsnm{Pryds}, \binits{N.}}:
2022 {R}oadmap on neuromorphic computing and engineering.
Neuromorph. Comput. Eng.
\textbf{2}(2)
(2022).
022501
\end{botherref}
\endbibitem

\bibitem[\protect\citeauthoryear{Church}{1936}]{church}
\begin{barticle}
\bauthor{\bsnm{Church}, \binits{A.}}:
\batitle{A note on the {E}ntscheidungsproblem}.
\bjtitle{The journal of symbolic logic}
\bvolume{1}(\bissue{1}),
\bfpage{40}--\blpage{41}
(\byear{1936})
\end{barticle}
\endbibitem

\bibitem[\protect\citeauthoryear{Boche and Deppe}{2021}]{boche1}
\begin{bchapter}
\bauthor{\bsnm{Boche}, \binits{H.}},
\bauthor{\bsnm{Deppe}, \binits{C.}}:
\bctitle{Computability of the zero-error capacity of noisy channels}.
In: \bbtitle{2021 IEEE Information Theory Workshop (ITW)},
pp. \bfpage{1}--\blpage{6}
(\byear{2021})
\end{bchapter}
\endbibitem

\bibitem[\protect\citeauthoryear{Bastounis et~al.}{2021}]{bastounis21extended}
\begin{botherref}
\oauthor{\bsnm{Bastounis}, \binits{A.}},
\oauthor{\bsnm{Hansen}, \binits{A.C.}},
\oauthor{\bsnm{Vlačić}, \binits{V.}}:
The extended {S}male's 9th problem -- {O}n computational barriers and paradoxes in estimation, regularisation, computer-assisted proofs and learning.
arXiv:2110.15734
(2021)
\end{botherref}
\endbibitem

\bibitem[\protect\citeauthoryear{Boche and Pohl}{2020}]{Boche20SpecFac}
\begin{barticle}
\bauthor{\bsnm{Boche}, \binits{H.}},
\bauthor{\bsnm{Pohl}, \binits{V.}}:
\batitle{On the algorithmic solvability of spectral factorization and applications}.
\bjtitle{IEEE Trans. Inf. Theory}
\bvolume{66}(\bissue{7}),
\bfpage{4574}--\blpage{4592}
(\byear{2020})
\end{barticle}
\endbibitem

\bibitem[\protect\citeauthoryear{Boche and Mönich}{2020}]{Boche20BandlimitedSignals}
\begin{barticle}
\bauthor{\bsnm{Boche}, \binits{H.}},
\bauthor{\bsnm{Mönich}, \binits{U.J.}}:
\batitle{Turing computability of {F}ourier transforms of bandlimited and discrete signals}.
\bjtitle{IEEE Trans. Signal Process.}
\bvolume{68},
\bfpage{532}--\blpage{547}
(\byear{2020})
\end{barticle}
\endbibitem

\bibitem[\protect\citeauthoryear{Boche et~al.}{2020a}]{Boche2020SmeetsT}
\begin{barticle}
\bauthor{\bsnm{Boche}, \binits{H.}},
\bauthor{\bsnm{Schaefer}, \binits{R.F.}},
\bauthor{\bsnm{Poor}, \binits{H.V.}}:
\batitle{{S}hannon meets {T}uring: Non-computability and non-approximability of the finite state channel capacity}.
\bjtitle{Communications in Information and Systems}
\bvolume{20}(\bissue{2}),
\bfpage{81}--\blpage{116}
(\byear{2020})
\end{barticle}
\endbibitem

\bibitem[\protect\citeauthoryear{Boche et~al.}{2020b}]{BocheDoSAttacks}
\begin{barticle}
\bauthor{\bsnm{Boche}, \binits{H.}},
\bauthor{\bsnm{Schaefer}, \binits{R.F.}},
\bauthor{\bsnm{Poor}, \binits{H.V.}}:
\batitle{Denial-of-service attacks on communication systems: Detectability and jammer knowledge}.
\bjtitle{IEEE Transactions on Signal Processing}
\bvolume{68},
\bfpage{3754}--\blpage{3768}
(\byear{2020})
\end{barticle}
\endbibitem

\bibitem[\protect\citeauthoryear{Blum and Rivest}{1992}]{Blum92NP}
\begin{barticle}
\bauthor{\bsnm{Blum}, \binits{A.L.}},
\bauthor{\bsnm{Rivest}, \binits{R.L.}}:
\batitle{Training a 3-node neural network is {NP}-complete}.
\bjtitle{Neural Networks}
\bvolume{5}(\bissue{1}),
\bfpage{117}--\blpage{127}
(\byear{1992})
\end{barticle}
\endbibitem

\bibitem[\protect\citeauthoryear{Vu}{1998}]{Vu98NP}
\begin{barticle}
\bauthor{\bsnm{Vu}, \binits{V.H.}}:
\batitle{On the infeasibility of training neural networks with small mean-squared error}.
\bjtitle{IEEE Transactions on Information Theory}
\bvolume{44}(\bissue{7}),
\bfpage{2892}--\blpage{2900}
(\byear{1998})
\end{barticle}
\endbibitem

\bibitem[\protect\citeauthoryear{Bastounis et~al.}{2021}]{bastounis2021mathematicsadversarialattacksai}
\begin{botherref}
\oauthor{\bsnm{Bastounis}, \binits{A.}},
\oauthor{\bsnm{Hansen}, \binits{A.C.}},
\oauthor{\bsnm{Vlačić}, \binits{V.}}:
The mathematics of adversarial attacks in {AI} -- why deep learning is unstable despite the existence of stable neural networks.
arXiv preprint arxiv:2109.06098
(2021)
\end{botherref}
\endbibitem

\bibitem[\protect\citeauthoryear{Wind et~al.}{2023}]{wind2023implicitregularizationaimeets}
\begin{botherref}
\oauthor{\bsnm{Wind}, \binits{J.S.}},
\oauthor{\bsnm{Antun}, \binits{V.}},
\oauthor{\bsnm{Hansen}, \binits{A.C.}}:
Implicit regularization in {AI} meets generalized hardness of approximation in optimization -- sharp results for diagonal linear networks.
arXiv preprint arxiv:2307.07410
(2023)
\end{botherref}
\endbibitem

\bibitem[\protect\citeauthoryear{Gazdag and Hansen}{2023}]{gazdag2023generalisedhardnessapproximationsci}
\begin{botherref}
\oauthor{\bsnm{Gazdag}, \binits{L.E.}},
\oauthor{\bsnm{Hansen}, \binits{A.C.}}:
Generalised hardness of approximation and the {SCI} hierarchy -- on determining the boundaries of training algorithms in {AI}.
arXiv preprint arxiv:2209.06715
(2023)
\end{botherref}
\endbibitem

\bibitem[\protect\citeauthoryear{Berner et~al.}{2022}]{samplecomplexity}
\begin{botherref}
\oauthor{\bsnm{Berner}, \binits{J.}},
\oauthor{\bsnm{Grohs}, \binits{P.}},
\oauthor{\bsnm{Voigtlaender}, \binits{F.}}:
Learning {R}e{LU} networks to high uniform accuracy is intractable.
arXiv preprint arXiv:2205.13531
(2022)
\end{botherref}
\endbibitem

\bibitem[\protect\citeauthoryear{Chen et~al.}{2022}]{chen2022hardness}
\begin{barticle}
\bauthor{\bsnm{Chen}, \binits{S.}},
\bauthor{\bsnm{Gollakota}, \binits{A.}},
\bauthor{\bsnm{Klivans}, \binits{A.}},
\bauthor{\bsnm{Meka}, \binits{R.}}:
\batitle{Hardness of noise-free learning for two-hidden-layer neural networks}.
\bjtitle{Advances in Neural Information Processing Systems}
\bvolume{35},
\bfpage{10709}--\blpage{10724}
(\byear{2022})
\end{barticle}
\endbibitem

\bibitem[\protect\citeauthoryear{Olah}{2022}]{Olah22MechInt}
\begin{botherref}
\oauthor{\bsnm{Olah}, \binits{C.}}:
Mechanistic Interpretability, Variables, and the Importance of Interpretable Bases.
\url{https://www.transformer-circuits.pub/2022/mech-interp-essay}
(2022)
\end{botherref}
\endbibitem

\bibitem[\protect\citeauthoryear{K{\"a}stner and Crook}{2023}]{Kaestner23ExplAI}
\begin{botherref}
\oauthor{\bsnm{K{\"a}stner}, \binits{L.}},
\oauthor{\bsnm{Crook}, \binits{B.}}:
Explaining {AI} through mechanistic interpretability
(2023).
\url{http://philsci-archive.pitt.edu/22747/}
\end{botherref}
\endbibitem

\bibitem[\protect\citeauthoryear{Biondi et~al.}{2020}]{Biondi20Certification}
\begin{barticle}
\bauthor{\bsnm{Biondi}, \binits{A.}},
\bauthor{\bsnm{Nesti}, \binits{F.}},
\bauthor{\bsnm{Cicero}, \binits{G.}},
\bauthor{\bsnm{Casini}, \binits{D.}},
\bauthor{\bsnm{Buttazzo}, \binits{G.}}:
\batitle{A safe, secure, and predictable software architecture for deep learning in safety-critical systems}.
\bjtitle{IEEE Embed. Syst. Lett.}
\bvolume{12}(\bissue{3}),
\bfpage{78}--\blpage{82}
(\byear{2020})
\end{barticle}
\endbibitem

\bibitem[\protect\citeauthoryear{Zhang et~al.}{2019}]{Zhang20Certification}
\begin{bchapter}
\bauthor{\bsnm{Zhang}, \binits{H.}},
\bauthor{\bsnm{Chen}, \binits{H.}},
\bauthor{\bsnm{Xiao}, \binits{C.}},
\bauthor{\bsnm{Gowal}, \binits{S.}},
\bauthor{\bsnm{Stanforth}, \binits{R.}},
\bauthor{\bsnm{Li}, \binits{B.}},
\bauthor{\bsnm{Boning}, \binits{D.S.}},
\bauthor{\bsnm{Hsieh}, \binits{C.}}:
\bctitle{Towards stable and efficient training of verifiably robust neural networks}.
In: \bbtitle{ICLR 2020}
(\byear{2019})
\end{bchapter}
\endbibitem

\bibitem[\protect\citeauthoryear{Mirman et~al.}{2021}]{Mirman21Certification}
\begin{bchapter}
\bauthor{\bsnm{Mirman}, \binits{M.}},
\bauthor{\bsnm{H\"{a}gele}, \binits{A.}},
\bauthor{\bsnm{Bielik}, \binits{P.}},
\bauthor{\bsnm{Gehr}, \binits{T.}},
\bauthor{\bsnm{Vechev}, \binits{M.}}:
\bctitle{Robustness certification with generative models}.
In: \bbtitle{SIGPLAN PLDI 2021},
pp. \bfpage{1141}--\blpage{1154}.
\bpublisher{Association for Computing Machinery},
\blocation{New York, NY, USA}
(\byear{2021})
\end{bchapter}
\endbibitem

\bibitem[\protect\citeauthoryear{Katz et~al.}{2017}]{Katz17Certification}
\begin{bchapter}
\bauthor{\bsnm{Katz}, \binits{G.}},
\bauthor{\bsnm{Barrett}, \binits{C.}},
\bauthor{\bsnm{Dill}, \binits{D.L.}},
\bauthor{\bsnm{Julian}, \binits{K.}},
\bauthor{\bsnm{Kochenderfer}, \binits{M.J.}}:
\bctitle{Reluplex: An efficient {SMT} solver for verifying deep neural networks}.
In: \beditor{\bsnm{Majumdar}, \binits{R.}},
\beditor{\bsnm{Kun{\v{c}}ak}, \binits{V.}} (eds.)
\bbtitle{Computer Aided Verification},
pp. \bfpage{97}--\blpage{117}.
\bpublisher{Springer},
\blocation{Cham}
(\byear{2017})
\end{bchapter}
\endbibitem

\bibitem[\protect\citeauthoryear{Boche et~al.}{2024}]{problem8}
\begin{botherref}
\oauthor{\bsnm{Boche}, \binits{H.}},
\oauthor{\bsnm{Fono}, \binits{A.}},
\oauthor{\bsnm{Kutyniok}, \binits{G.}}:
Mathematical algorithm design for deep learning under societal and judicial constraints: The algorithmic transparency requirement.
arXiv preprint arXiv:2401.10310
(2024)
\end{botherref}
\endbibitem

\bibitem[\protect\citeauthoryear{Bastounis et~al.}{2023}]{Bastounis23Classification}
\begin{bchapter}
\bauthor{\bsnm{Bastounis}, \binits{A.}},
\bauthor{\bsnm{Gorban}, \binits{A.N.}},
\bauthor{\bsnm{Hansen}, \binits{A.C.}},
\bauthor{\bsnm{Higham}, \binits{D.J.}},
\bauthor{\bsnm{Prokhorov}, \binits{D.}},
\bauthor{\bsnm{Sutton}, \binits{O.}},
\bauthor{\bsnm{Tyukin}, \binits{I.Y.}},
\bauthor{\bsnm{Zhou}, \binits{Q.}}:
\bctitle{The boundaries of verifiable accuracy, robustness, and generalisation in deep learning}.
In: \beditor{\bsnm{Iliadis}, \binits{L.}},
\beditor{\bsnm{Papaleonidas}, \binits{A.}},
\beditor{\bsnm{Angelov}, \binits{P.}},
\beditor{\bsnm{Jayne}, \binits{C.}} (eds.)
\bbtitle{Artificial Neural Networks and Machine Learning -- ICANN 2023},
pp. \bfpage{530}--\blpage{541}.
\bpublisher{Springer},
\blocation{Cham}
(\byear{2023})
\end{bchapter}
\endbibitem

\bibitem[\protect\citeauthoryear{Boche et~al.}{2022}]{analog}
\begin{botherref}
\oauthor{\bsnm{Boche}, \binits{H.}},
\oauthor{\bsnm{Fono}, \binits{A.}},
\oauthor{\bsnm{Kutyniok}, \binits{G.}}:
Inverse problems are solvable on real number signal processing hardware.
arXiv preprint arXiv:2204.02066
(2022)
\end{botherref}
\endbibitem

\bibitem[\protect\citeauthoryear{Weihrauch}{2012}]{computableanalysisbook}
\begin{bbook}
\bauthor{\bsnm{Weihrauch}, \binits{K.}}:
\bbtitle{Computable Analysis: An Introduction}.
\bpublisher{Springer},
\blocation{Berlin \& Heidelberg}
(\byear{2012})
\end{bbook}
\endbibitem

\bibitem[\protect\citeauthoryear{Pour-El and Richards}{2017}]{Pour-El17Computability}
\begin{bbook}
\bauthor{\bsnm{Pour-El}, \binits{M.B.}},
\bauthor{\bsnm{Richards}, \binits{J.I.}}:
\bbtitle{Computability in Analysis and Physics}.
\bsertitle{Perspectives in Logic}.
\bpublisher{Cambridge University Press},
\blocation{Cambridge}
(\byear{2017})
\end{bbook}
\endbibitem

\bibitem[\protect\citeauthoryear{Avigad and Brattka}{2014}]{avigadbrattka}
\begin{bbook}
\bauthor{\bsnm{Avigad}, \binits{J.}},
\bauthor{\bsnm{Brattka}, \binits{V.}}:
In: \beditor{\bsnm{Downey}, \binits{R.}} (ed.)
\bbtitle{Computability and analysis: the legacy of {A}lan {T}uring}.
\bsertitle{Lecture Notes in Logic},
pp. \bfpage{1}--\blpage{47}.
\bpublisher{Cambridge University Press},
\blocation{Cambridge}
(\byear{2014})
\end{bbook}
\endbibitem

\bibitem[\protect\citeauthoryear{Cooper}{2017}]{computabilitybook}
\begin{bbook}
\bauthor{\bsnm{Cooper}, \binits{S.B.}}:
\bbtitle{Computability Theory}.
\bpublisher{Chapman and Hall/CRC},
\blocation{New York}
(\byear{2017})
\end{bbook}
\endbibitem

\bibitem[\protect\citeauthoryear{Goodfellow et~al.}{2016}]{deeplearningbook}
\begin{bbook}
\bauthor{\bsnm{Goodfellow}, \binits{I.}},
\bauthor{\bsnm{Bengio}, \binits{Y.}},
\bauthor{\bsnm{Courville}, \binits{A.}}:
\bbtitle{Deep Learning}.
\bpublisher{MIT press},
\blocation{Cambridge, Massachusetts}
(\byear{2016}).
\bcomment{\url{http://www.deeplearningbook.org}}
\end{bbook}
\endbibitem

\bibitem[\protect\citeauthoryear{Petersen et~al.}{2021}]{topology}
\begin{barticle}
\bauthor{\bsnm{Petersen}, \binits{P.}},
\bauthor{\bsnm{Raslan}, \binits{M.}},
\bauthor{\bsnm{Voigtlaender}, \binits{F.}}:
\batitle{Topological properties of the set of functions generated by neural networks of fixed size}.
\bjtitle{Foundations of computational mathematics}
\bvolume{21}(\bissue{2}),
\bfpage{375}--\blpage{444}
(\byear{2021})
\end{barticle}
\endbibitem

\bibitem[\protect\citeauthoryear{Fridman}{2020}]{LexFridman}
\begin{botherref}
\oauthor{\bsnm{Fridman}, \binits{L.}}:
Daniel Kahneman: Thinking Fast and Slow, Deep Learning, and {AI}.
\url{https://lexfridman.com/daniel-kahneman/}
(2020)
\end{botherref}
\endbibitem

\bibitem[\protect\citeauthoryear{{Covariant}}{2021}]{Abbeel}
\begin{botherref}
\oauthor{\bsnm{{Covariant}}}:
AI Robotics for the Real World.
\url{https://www.youtube.com/watch?v=AAr99hQ64AI}
(2021)
\end{botherref}
\endbibitem

\bibitem[\protect\citeauthoryear{Cheng et~al.}{2024}]{cheng2024aiassistantsknowdont}
\begin{botherref}
\oauthor{\bsnm{Cheng}, \binits{Q.}},
\oauthor{\bsnm{Sun}, \binits{T.}},
\oauthor{\bsnm{Liu}, \binits{X.}},
\oauthor{\bsnm{Zhang}, \binits{W.}},
\oauthor{\bsnm{Yin}, \binits{Z.}},
\oauthor{\bsnm{Li}, \binits{S.}},
\oauthor{\bsnm{Li}, \binits{L.}},
\oauthor{\bsnm{He}, \binits{Z.}},
\oauthor{\bsnm{Chen}, \binits{K.}},
\oauthor{\bsnm{Qiu}, \binits{X.}}:
Can {AI} assistants know what they don't know?
arXiv preprint arxiv:2401.13275
(2024)
\end{botherref}
\endbibitem

\bibitem[\protect\citeauthoryear{Ren et~al.}{2023}]{ren2023robots}
\begin{bchapter}
\bauthor{\bsnm{Ren}, \binits{A.Z.}},
\bauthor{\bsnm{Dixit}, \binits{A.}},
\bauthor{\bsnm{Bodrova}, \binits{A.}},
\bauthor{\bsnm{Singh}, \binits{S.}},
\bauthor{\bsnm{Tu}, \binits{S.}},
\bauthor{\bsnm{Brown}, \binits{N.}},
\bauthor{\bsnm{Xu}, \binits{P.}},
\bauthor{\bsnm{Takayama}, \binits{L.}},
\bauthor{\bsnm{Xia}, \binits{F.}},
\bauthor{\bsnm{Varley}, \binits{J.}},
\bauthor{\bsnm{Xu}, \binits{Z.}},
\bauthor{\bsnm{Sadigh}, \binits{D.}},
\bauthor{\bsnm{Zeng}, \binits{A.}},
\bauthor{\bsnm{Majumdar}, \binits{A.}}:
\bctitle{Robots that ask for help: Uncertainty alignment for large language model planners}.
In: \bbtitle{7th Annual Conference on Robot Learning}
(\byear{2023})
\end{bchapter}
\endbibitem

\bibitem[\protect\citeauthoryear{Bolcskei et~al.}{2019}]{bolcskei2019optimal}
\begin{barticle}
\bauthor{\bsnm{Bolcskei}, \binits{H.}},
\bauthor{\bsnm{Grohs}, \binits{P.}},
\bauthor{\bsnm{Kutyniok}, \binits{G.}},
\bauthor{\bsnm{Petersen}, \binits{P.}}:
\batitle{Optimal approximation with sparsely connected deep neural networks}.
\bjtitle{SIAM Journal on Mathematics of Data Science}
\bvolume{1}(\bissue{1}),
\bfpage{8}--\blpage{45}
(\byear{2019})
\end{barticle}
\endbibitem

\bibitem[\protect\citeauthoryear{Elbr{\"a}chter et~al.}{2021}]{elbrachter2021deep}
\begin{barticle}
\bauthor{\bsnm{Elbr{\"a}chter}, \binits{D.}},
\bauthor{\bsnm{Perekrestenko}, \binits{D.}},
\bauthor{\bsnm{Grohs}, \binits{P.}},
\bauthor{\bsnm{B{\"o}lcskei}, \binits{H.}}:
\batitle{Deep neural network approximation theory}.
\bjtitle{IEEE Transactions on Information Theory}
\bvolume{67}(\bissue{5}),
\bfpage{2581}--\blpage{2623}
(\byear{2021})
\end{barticle}
\endbibitem

\bibitem[\protect\citeauthoryear{Haase et~al.}{2023}]{haase2023lower}
\begin{bchapter}
\bauthor{\bsnm{Haase}, \binits{C.A.}},
\bauthor{\bsnm{Hertrich}, \binits{C.}},
\bauthor{\bsnm{Loho}, \binits{G.}}:
\bctitle{Lower bounds on the depth of integral {R}e{LU} neural networks via lattice polytopes}.
In: \bbtitle{The Eleventh International Conference on Learning Representations}
(\byear{2023})
\end{bchapter}
\endbibitem

\bibitem[\protect\citeauthoryear{Zhou}{1996}]{Zhou96CompSets}
\begin{barticle}
\bauthor{\bsnm{Zhou}, \binits{Q.}}:
\batitle{Computable real-valued functions on recursive open and closed subsets of {E}uclidean space}.
\bjtitle{Mathematical Logic Quarterly}
\bvolume{42}(\bissue{1}),
\bfpage{379}--\blpage{409}
(\byear{1996})
\end{barticle}
\endbibitem

\bibitem[\protect\citeauthoryear{Iljazovi{\'{c}} and Kihara}{2021}]{Iljazovic2021CompMeticSpace}
\begin{bchapter}
\bauthor{\bsnm{Iljazovi{\'{c}}}, \binits{Z.}},
\bauthor{\bsnm{Kihara}, \binits{T.}}:
\bctitle{Computability of subsets of metric spaces}.
In: \beditor{\bsnm{Brattka}, \binits{V.}},
\beditor{\bsnm{Hertling}, \binits{P.}} (eds.)
\bbtitle{Handbook of Computability and Complexity in Analysis},
pp. \bfpage{29}--\blpage{69}.
\bpublisher{Springer},
\blocation{Cham}
(\byear{2021})
\end{bchapter}
\endbibitem

\bibitem[\protect\citeauthoryear{Brattka and Presser}{2003}]{Brattka2003CompMetricSpace}
\begin{barticle}
\bauthor{\bsnm{Brattka}, \binits{V.}},
\bauthor{\bsnm{Presser}, \binits{G.}}:
\batitle{Computability on subsets of metric spaces}.
\bjtitle{Theoretical Computer Science}
\bvolume{305}(\bissue{1}),
\bfpage{43}--\blpage{76}
(\byear{2003})
\end{barticle}
\endbibitem

\bibitem[\protect\citeauthoryear{Parker}{2006}]{Parker2006DecConcepts}
\begin{barticle}
\bauthor{\bsnm{Parker}, \binits{M.W.}}:
\batitle{Three concepts of decidability for general subsets of uncountable spaces}.
\bjtitle{Theoretical Computer Science}
\bvolume{351}(\bissue{1}),
\bfpage{2}--\blpage{13}
(\byear{2006})
\end{barticle}
\endbibitem

\end{thebibliography}

\begin{appendices}

\section{(Semi-)decidability of real sets}
\label{app:semi}
In this section, we provide further background on the (semi-)decidability of subsets of real numbers. For more details, we refer to \cite{computableanalysisbook, Zhou96CompSets, Iljazovic2021CompMeticSpace, Brattka2003CompMetricSpace, Parker2006DecConcepts}.

First, note that feasible notions of computability exist beyond Borel-Turing and Banach-Mazur computability. A common approach is to relax the computability requirements on the input domain. The underlying idea is to separate the mapping from the input description leading to the following definition of computable function, which we call oracle computability to distinguish it from the previous notions.
\begin{Definition}[Oracle model]
    For $\x\in \R^d$, a sequence $(\q_k)_{k=1}^\infty \subset \Q^d$ such that
    \begin{equation*}
        \norm{\x-\q_k} \leq \frac{1}{2^k} \quad \text{ for all } n \in \N,
    \end{equation*}
    is called an \emph{oracle representation} of $\x$. A function $f: D \goto \R_c^m$, where $D \sube \R^d$, is \emph{oracle computable} if there exists an Oracle Turing machine $M$ such that for all $\x \in D$ and all oracle representations $(\vect{q}_k)_{k=1}^\infty$ of $\x$ the sequence $(M(\vect{q}_k))_{k=1}^\infty$ is a representation of $f(\x)$.
\end{Definition}

\begin{Remark}
    Note that, unlike Borel-Turing computability, the representing rational sequence $(\vect{q}_k)_{k=1}^\infty$ is not required to be computable. By the density of $\Q$, any real number has an oracle representation and, therefore, we can study computability on the whole real line. 
    Intuitively, one can think of the sequences $(\vect{q}_k)_{k=1}^\infty$ being provided to the Turing machine by an oracle tape; for an introduction on Oracle Turing machines see cite{computabilitybook}. This model is more general, but in typical practical applications, the presence of an oracle able to approximate any real number to arbitrary precision cannot be assumed. 
\end{Remark}
The differences in the computability notion (and the respective input domains) in comparison with Borel-Turing computability also directly transfer to the (semi-)decidability of sets: Only trivial subsets of $\R^d$ are oracle decidable -- whereas for Borel-Turing decidability the same statement holds for $\R^d_c$. 
\begin{Definition}
    A set $A \sube D \cap \R^d$ is
    \begin{itemize}
        \item \emph{oracle decidable} in $D$, if its indicator function $1_A: D \goto \R_c$ is oracle computable;
        \item \emph{oracle semi-decidable} in $D$, if there exists an oracle computable function $f: D' \goto \R_c$, $D' \sube D$, such that $A \sube D'$ and $f = 1_A|_{D'}$.
    \end{itemize}
\end{Definition}    
Intuitively, oracle decidability, as well as Borel-Turing decidability, is infeasible in general (except for the trivial cases) since there does not exist an algorithm that decides on arbitrary input $x \in \R$ (via representations) whether $x=0$, $x>0$ or $x<0$ -- the crucial input is the edge case zero \cite{Pour-El17Computability}. Hence, the best one can hope for is a notion of decidability `up to equality': Instead of relying on the characterization of (semi-)decidability via characteristic functions, one can consider the (continuous and computable) distance function $d_A: \R^d \to \R$ of $A \sube \R^d$ defined by 
\begin{equation*}
    d_A(\x) := dist(\x, A)= \inf_{\vect{a}\in A} \norma{\x-\vect{a}}.  
\end{equation*}
The distance function allows for a given $\x \in \R^d$ to compute how close $\x$ lies to $A$ although, in general, we cannot determine whether $\x \in A$ or $\x \notin A$. Therefore, a decidability notion based on the distance function does not lead to the existence of algorithms deciding membership for a given set even though closed sets are uniquely determined by their distance function. 

Nevertheless, for subsets of natural numbers, one can derive an interesting connection between classical decidability and the distance function \cite{Brattka2003CompMetricSpace, Zhou96CompSets}.
\begin{Proposition}
     A subset $A \sube \N$ is decidable (in the classical sense), if and only if $A$ considered as a subset of the real numbers induces a (oracle) computable distance function.    
\end{Proposition}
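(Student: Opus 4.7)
I would prove each implication separately, with both directions resting on two basic facts: the distance function $d_A$ is $1$-Lipschitz, and distinct elements of $\N$ are at least $1$ apart.

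For the forward direction ($A$ decidable $\Rightarrow$ $d_A$ oracle computable), I assume $A$ is nonempty (otherwise $d_A$ is not well-defined) and exhibit an Oracle Turing machine that, on input an oracle representation $(q_k)_{k=1}^\infty$ of $x \in \R$ and target precision $n$, outputs a rational within $2^{-n}$ of $d_A(x)$. Since $d_A$ is $1$-Lipschitz, we have
\begin{equation*}
    |d_A(x) - d_A(q_{n+1})| \leq |x - q_{n+1}| \leq 2^{-(n+1)},
\end{equation*}
so it suffices to compute $d_A(q)$ exactly at the rational $q := q_{n+1}$. To do this, first search $0, 1, 2, \dots$ using the decision procedure for $A$ and halt at the first element $a_0 \in A$; the search terminates because $A$ is nonempty. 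Setting $D := |q - a_0|$, the nearest element of $A$ to $q$ must lie in the bounded interval $[q - D, q + D]$, which contains only finitely many integers. Decide membership for each of these integers, take the minimum of $|q - a|$ over those belonging to $A$, and return this rational.

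For the backward direction ($d_A$ oracle computable $\Rightarrow$ $A$ decidable), I would build a recursive decision procedure as follows. Given $n \in \N$, feed the Oracle Turing machine for $d_A$ the constant sequence $q_k \equiv n$, which is a valid oracle representation of $n \in \R$. From the resulting output, extract a rational $\tilde{d}$ with $|\tilde{d} - d_A(n)| \leq 1/4$. Since $A \subseteq \N$ is a closed subset of $\R$ (having no accumulation points), we have the dichotomy
\begin{equation*}
    n \in A \iff d_A(n) = 0, \qquad n \notin A \iff d_A(n) \geq 1.
\end{equation*}
Hence $n \in A$ implies $\tilde{d} \leq 1/4$, while $n \notin A$ implies $\tilde{d} \geq 3/4$. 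The procedure outputs $1$ if $\tilde{d} < 1/2$ and $0$ otherwise, and this gap argument certifies that membership is decided correctly.

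The main obstacle is not mathematical but technical: one must carefully verify that the constructions respect the Oracle Turing machine formalism of the appendix, in particular that the forward direction indeed produces a \emph{representation} (a rapidly converging rational sequence) of $d_A(x)$ as the precision parameter varies, and that the nonemptiness of $A$ is treated as a standing assumption implicit in the well-definedness of $d_A$. Once these formal matters are handled, both implications reduce to the Lipschitz-plus-integer-gap argument sketched above.
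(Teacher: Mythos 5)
Your argument is correct. Note that the paper does not actually prove this proposition; it is stated as a known result imported from the cited literature (Brattka; Zhou), so there is no in-paper proof to compare against. Your two-step scheme --- (i) for the forward direction, reduce to exact computation of $d_A(q)$ at a rational $q$ near $x$ via the $1$-Lipschitz bound, locate one element $a_0 \in A$ by unbounded search, and then minimize over the finitely many integers in $[q-D, q+D]$; (ii) for the backward direction, exploit the integer gap $d_A(n) \in \{0\} \cup [1,\infty)$ together with a $1/4$-accurate query to the oracle machine on the constant representation of $n$ --- is exactly the standard proof of this equivalence, and both the Lipschitz estimate and the gap dichotomy are applied correctly. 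Your handling of the edge cases is also appropriate: the empty set must be excluded (or treated by convention) for $d_A$ to be real-valued, and the only remaining care is the formal one you already flag, namely that the output sequence indexed by the precision parameter satisfies the effective-convergence requirement in the paper's definition of a representation, which your bound $|d_A(x) - d_A(q_{n+1})| \le 2^{-(n+1)}$ guarantees.
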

A similar statement also holds for semi-decidable sets on $\N$. To that end, we introduce a specific characterization of oracle semi-decidable sets \cite{computableanalysisbook}.
\begin{Theorem}
    Let $V\subset \R^d$. The following are equivalent:
    \begin{enumerate}[label={(\roman*)}]
        \item $V$ is oracle semi-decidable.
        \item $V$ is recursively enumerable open, i.e., there exists a Turing machine that can enumerate centers $\vect{c}_k\in \Q^d$ and radii $r_k \in \Q_{>0}$ of open balls such that
        \begin{equation}\label{eq:RecEnOpen}
            V = \bigcup_{k\in\N} B(\vect{c}_k,r_k),
        \end{equation}
        i.e., there exist computable rational sequences $(\vect{c}_k)_{k=1}^{\infty},(r_k)_{k=1}^{\infty}$ such that \eqref{eq:RecEnOpen} holds.
    \end{enumerate}    
\end{Theorem}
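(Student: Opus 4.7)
The plan is to prove both implications of the equivalence separately, using a dovetailing construction for $(ii) \Rightarrow (i)$ and an analysis of finite halting prefixes of an oracle Turing machine for $(i) \Rightarrow (ii)$.

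For $(ii) \Rightarrow (i)$, I would set $D' := V$ and exhibit an oracle Turing machine $M$ witnessing oracle semi-decidability of $V$ via the constant function $f \equiv 1$. On input an oracle representation $(\vect{q}_n)_{n=1}^\infty$ of some $\x$, the machine dovetails over pairs $(k,n) \in \N^2$ and uses the given computable sequences $(\vect{c}_k)$ and $(r_k)$ to test whether $\norma{\vect{q}_n - \vect{c}_k} + 2^{-n} < r_k$. By the triangle inequality, a successful test certifies $\x \in B(\vect{c}_k, r_k) \sube V$; conversely, if $\x \in V$ with $\x \in B(\vect{c}_{k^*}, r_{k^*})$, then choosing $n$ large enough that $2^{-n} < \tfrac{1}{2}(r_{k^*} - \norma{\x - \vect{c}_{k^*}})$ yields $\norma{\vect{q}_n - \vect{c}_{k^*}} + 2^{-n} \le \norma{\x - \vect{c}_{k^*}} + 2 \cdot 2^{-n} < r_{k^*}$, so the search terminates. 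Upon verification $M$ outputs the constant rational sequence $1, 1, \ldots$, which is a representation of $1_V(\x) = 1$.

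For $(i) \Rightarrow (ii)$, let $M$ be the oracle machine computing $f = 1_V|_{D'}$. The crucial observation is that any halting computation of $M$ reads only finitely many rational approximations $(\vect{p}_0, \ldots, \vect{p}_K)$ from the oracle tape. I would enumerate all finite rational prefixes together with increasing simulation timeouts and retain those ``good'' prefixes for which $M$ halts having already emitted a first output exceeding $\tfrac{1}{2}$ — which, since $f$ only attains values $0$ and $1$, signals $f = 1$. Each good prefix determines a closed cell $S := \bigcap_{i=0}^K \bar{B}(\vect{p}_i, 2^{-i})$, and within its interior I would further enumerate all pairs $(\vect{c}, r) \in \Q^d \times \Q_{>0}$ satisfying $\norma{\vect{c} - \vect{p}_i} + r < 2^{-i}$ for $i = 0, \ldots, K$, outputting $B(\vect{c}, r)$. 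Dovetailing this procedure across good prefixes yields the required uniformly computable sequences $(\vect{c}_k)$ and $(r_k)$.

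The main obstacle is verifying the set equality $V = \bigcup_k B(\vect{c}_k, r_k)$ for the enumeration in the second direction. The inclusion $V \sube \bigcup_k B(\vect{c}_k, r_k)$ follows from density of $\Q^d$: every $\x \in V$ admits an oracle representation with \emph{strict} inequalities $\norma{\x - \vect{q}_n} < 2^{-n}$, so $\x$ lies in the open interior of the cell associated with its halting prefix, and hence inside some enumerated rational ball. The converse inclusion is the delicate step: for $\vect{y}$ in an enumerated ball, any oracle representation of $\vect{y}$ extending the corresponding good prefix forces $M$ to halt with output $1$, but this only directly yields $f(\vect{y}) = 1$ (and hence $\vect{y} \in V$) when $\vect{y} \in D'$. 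Handling $\vect{y} \notin D'$ is the technical heart of the proof; the standard resolution is to first use the finite-prefix argument to show that $V$ is necessarily open in $\R^d$, and then either to invoke this openness together with the definition of $D'$ or to reduce without loss of generality to the case $D' = V$ by restricting the domain of $M$, thereby closing the argument.
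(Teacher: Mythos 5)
The paper itself does not prove this theorem: it is imported from the computable-analysis literature via the citation to \cite{computableanalysisbook}, so there is no in-paper argument to compare against and your proposal must stand on its own. Your direction (ii)$\Rightarrow$(i) is correct and is the standard dovetailing argument; two cosmetic repairs are needed, namely that the test $\norma{\vect{q}_n - \vect{c}_k} + 2^{-n} < r_k$ involves an irrational Euclidean norm and should be replaced by the rational comparison $\norma{\vect{q}_n - \vect{c}_k}^2 < (r_k - 2^{-n})^2$ together with $r_k > 2^{-n}$, and the decisive output of $M$ in the converse direction must be read at an index where the modulus of effective convergence guarantees precision below $\tfrac{1}{2}$, not necessarily at the first emitted term.

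The genuine gap sits in (i)$\Rightarrow$(ii), precisely at the step you yourself flag as the technical heart and then defer to a ``standard resolution'' that does not work. Under the paper's literal definition, oracle semi-decidability of $V$ only requires an oracle machine that is correct on \emph{some} superset $D' \supseteq V$, with no constraint whatsoever on its behaviour on $\R^d \setminus D'$. Read this way, the implication is false: for $V = [0,1]$ take $D' = [0,1]$ and $f \equiv 1$, which is trivially oracle computable, yet $[0,1]$ is closed and cannot be a union of open balls. Consequently neither of your proposed fixes can succeed --- you cannot ``first show that $V$ is open'' (the counterexample shows it need not be), and ``restricting the domain of $M$'' changes nothing about what $M$ outputs on oracle names of points outside $D'$, which is exactly where your enumerated balls may leak. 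What actually closes the argument is the stronger reading of semi-decidability given in the paper's own Remark (and standard in Weihrauch's framework): the machine must never falsely certify membership on \emph{any} input of the ambient space, so that the set of points for which it certifies $1$ is exactly $V$. Under that soundness requirement the delicate step is immediate: any $\vect{y}$ in an enumerated ball admits an oracle name extending the good prefix, the machine certifies $1$ on that name, and soundness yields $\vect{y} \in V$ with no openness detour and no case distinction on $D'$. You need to state this reading of the definition explicitly at the outset; without it the second direction cannot be completed.
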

\begin{Remark}
    Any oracle semi-decidable set has a specific structure, in particular, it is necessarily open.   
\end{Remark}
The observed equivalence also carries over to Borel-Turing semi-decidability by adjusting the expression in \eqref{eq:RecEnOpen} to
    \begin{equation}
        V\cap\R_c = \bigcup_{n\in\N} B(\vect{c}_n,r_n) \cap \R_c.
    \end{equation} 
To highlight the differences note that one can show under certain assumptions that an interval $(a,b) \sube \R$ with non-computable endpoints $a,b \in \R$ is oracle semi-decidable and thus also Borel-Turing semi-decidable. Moreover, $[a,b]$ can be Borel-Turing semi-decidable as well for specific choices of non-computable $a,b$ (since $[a,b]\cap \R_c = (a,b)\cap \R_c$), whereas $[a,b]$ is not oracle semi-decidable as a closed set. 
\begin{Proposition}[\cite{Brattka2003CompMetricSpace, Zhou96CompSets}]
    A set $A \sube \N$ is semi-decidable (in the classical sense), if and only if $A$ considered as a subset of the real numbers is recursively enumerable open.   
\end{Proposition}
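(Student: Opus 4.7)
The plan is to establish the two directions separately, exploiting the basic topological fact that $\mathbb{N} \subset \mathbb{R}$ consists of isolated points with pairwise distance at least $1$. Before starting, I will fix the natural interpretation required for the statement to make literal sense: since a nonempty subset $A \subseteq \mathbb{N}$ is never itself open in $\mathbb{R}$, the phrase ``$A$ considered as a subset of the real numbers is recursively enumerable open'' should be read as: there exists an r.e.\ open set $V \subseteq \mathbb{R}$ (in the sense of the preceding theorem in the appendix) such that $V \cap \mathbb{N} = A$. This is the standard bridge between the discrete classical notion and the topological notion from computable analysis, and both directions reduce to routine translations of enumerations.

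For the forward direction, I would assume $A$ is classically semi-decidable and, excluding the trivial empty case, fix a recursive enumeration $e : \mathbb{N} \to \mathbb{N}$ with $\mathrm{range}(e) = A$. Set the computable rational sequences $\vect{c}_k := e(k)$ and $r_k := \tfrac{1}{2}$ and define
\begin{equation*}
    V := \bigcup_{k \in \mathbb{N}} B(\vect{c}_k, r_k).
\end{equation*}
Then $V$ is r.e.\ open by the theorem preceding the proposition, and because the balls of radius $\tfrac{1}{2}$ centered at distinct integers are disjoint and miss all other integers, we have $V \cap \mathbb{N} = A$, which is the required r.e.\ open representation of $A$.

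For the converse, I would assume $V = \bigcup_{k} B(\vect{c}_k, r_k)$ is r.e.\ open with $V \cap \mathbb{N} = A$, given by computable rational sequences $(\vect{c}_k)_{k=1}^\infty$ and $(r_k)_{k=1}^\infty$. To semi-decide ``$n \in A$'' for an input $n \in \mathbb{N}$, I would run the enumeration of balls in parallel and, for each $k$, test the purely rational inequality $|n - \vect{c}_k| < r_k$, which is decidable by classical arithmetic on rationals. If $n \in A$, then $n \in V$, so some $k$ with $n \in B(\vect{c}_k, r_k)$ is eventually found and the procedure halts with output ``yes''; if $n \notin A$, no such $k$ ever appears and the procedure runs forever. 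This is exactly the classical definition of semi-decidability of $A \subseteq \mathbb{N}$.

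The proof is therefore short, with no genuine obstacles once the interpretation is fixed; the only point deserving care is the interpretive one noted in the first paragraph, since a strict reading would render the reverse implication vacuous. A secondary but minor technical point is to verify that the r.e.\ open representation is insensitive to the specific radii chosen (any fixed rational radius in $(0, \tfrac{1}{2}]$ works equally well in the forward direction), and that one can handle the empty set as a degenerate r.e.\ open set enumerated by an empty list of balls, so no case distinction on nonemptiness is really needed.
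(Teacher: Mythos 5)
The paper does not prove this proposition at all: it is imported verbatim from the cited references (\cite{Brattka2003CompMetricSpace, Zhou96CompSets}), so there is no in-paper argument to compare yours against. On its own merits your proof is correct and is the standard one. Your interpretive remark is apt: a nonempty $A \subseteq \N$ is never open in $\R$, so the statement only makes sense as ``there is a recursively enumerable open $V \subseteq \R$ with $V \cap \N = A$,'' and both directions then go through exactly as you describe -- radius-$\tfrac{1}{2}$ balls around an effective enumeration of $A$ for the forward direction, and decidable rational comparisons $|n - \vect{c}_k| < r_k$ run in parallel over the ball enumeration for the converse. The only detail worth making explicit is the standard fact you implicitly invoke, namely that a nonempty semi-decidable subset of $\N$ is the range of a total recursive function; with that, and your handling of the empty set via an empty list of balls, the argument is complete.
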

Finally, we want to summarize and highlight the conclusions based on the introduced statements. 
Due to the extended input domain, oracle (semi-)decidability is the stronger condition than Borel-Turing (semi-)decidability. However, in both frameworks decidability is not a practical notion on the real numbers due to the inability to decide equality. Although semi-decidability is less restrictive, it is still rather impractical since only open sets are amenable to semi-decidability, e.g., closed sets or sets that are neither open nor closed do not fit the framework. Nevertheless, (semi-)decidability on real domains based on the distance function recovers the classical theory of (semi-)decidability on natural numbers indicating that the introduced definitions are indeed the right ones. The difference between the two domains is that non-(semi-)decidability does not arise due to the inability to test equality on natural numbers and is therefore much scarcer in this setting. In comparison, non-semi-decidability on the real domain may not necessarily be related to the inability to test inequality, however, due to this shortcoming non-semi-decidability is likely to occur if no further assumptions are posed on the considered sets. 

Hence, one might urge for more suitable notions of (semi-)decidability on the real numbers, which circumvent equality comparisons. For instance, a reliable description of 'near-decidability' indicates whether an object is in a set or not up to some limited error. Simply relying on the distance function does not immediately entail the desired property. In contrast, recursive approximability related to a measure $\mu$ \cite{Parker2006DecConcepts} describes the following setting: Given a parameter $n \in \N$, there exists an algorithm that correctly decides $A \sube \R^d$ except on some set $B \sube \R^d$ with $\mu(B) < 2^{-n}$, in which case the algorithm still halts but with possibly incorrect output. Hence, there is a trade-off between correctness and guaranteed termination of the computation in finite time. Thus, the approach measures the possible error via $\mu$. Borel-Turing semi-decidability on the other hand ensures that an algorithm always provides the correct output once it finalizes the computation but it may not stop for certain inputs. In other words, it only indicates whether a point is near another point that is not in the considered set. However, this information cannot be used to deduce whether the considered point lies inside or outside the set.

Further pursuing notions related to recursive approximability is certainly valuable and might lead to further insights; in this work, we consider the extension of the classical (semi-)decidability definitions that lead to oracle/Borel-Turing (semi-)decidability. These definitions describe the existence of effective (semi-)decision programs, i.e., algorithms that necessarily compute correct outputs (or do not halt their computations). In this sense, the output of the algorithm can be unequivocally trusted. Moreover, the theory as well as the results in Subsection \ref{sec:decide} can be extended to spaces with less structure than $\R^d$, e.g., to computable metric spaces with $\R^d$ being a special case thereof \cite{Iljazovic2021CompMeticSpace, Brattka2003CompMetricSpace}.

\section{Proofs}\label{secA1}
\subsection{Proof of Theorem \ref{learning}}
The proof of Theorem \ref{learning} is based on two results we present next. The key component of the first lemma lies behind many non-computability results, such as in \cite{hansen} for the special case of inverse problems, but here we formulate a general version.
\begin{Lemma}
	\label{lem:computability}
	Let $\Theta$ be a nonempty set, $\Lambda$ a nonempty set of functions from $\Theta$ to $\R_c$, $\varepsilon > 0$, and $\Xi: \Theta \goto \mathcal{P}(\R_c^m)$, where $m \in \N$ and $\mathcal{P}$ denotes the power set. Assume there exist sequences $(\iota_k^1)_{k=1}^\infty, (\iota_k^2)_{k=1}^\infty \sube \Theta$ satisfying
	\begin{enumerate}[label={(\roman*)},topsep=5pt,itemsep=5pt]
		\item\label{converge} $\abs{f(\iota_k^1)-f(\iota_k^2)} \goto 0$ uniformly in $f \in \Lambda$. That is,
		\begin{equation*}
			\forall \delta > 0 ~\exists k_0 \in \N ~\forall k \ge k_0 ~\forall f \in \Lambda: \abs{f(\iota_k^1)-f(\iota_k^2)} < \delta;
		\end{equation*}
		\item\label{separate} for all $k \in \N$, $\text{dist}(\Xi(\iota_k^1), \Xi(\iota_k^2)) > \varepsilon$.
	\end{enumerate}
	Then, for all $n \in \N$ and all Banach-Mazur computable functions $\Gamma: \R_c^n \goto \R_c^m$ there exists $\iota \in \Theta$ such that for all $(f_1, \dots, f_n) \in \Lambda^n$:
	\begin{equation*}
		\text{dist}(\Gamma(f_1(\iota), \dots, f_n(\iota)), \Xi(\iota)) > \frac{\varepsilon}{3}.
	\end{equation*}
\end{Lemma}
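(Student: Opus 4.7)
I would argue by contradiction and convert the negation of the conclusion into a decision procedure for a non-recursive set, violating the Banach--Mazur computability of $\Gamma$. Assume the conclusion fails: for every $\iota \in \Theta$ there exists a tuple $(f_1^\iota, \dots, f_n^\iota) \in \Lambda^n$ with $\text{dist}(\Gamma(f_1^\iota(\iota), \dots, f_n^\iota(\iota)), \Xi(\iota)) \le \varepsilon/3$.

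\textbf{Step 1: Closing the gap between inputs.} Apply this to the two witness sequences: for each $k$ and each $s \in \{1,2\}$ pick $(f_i^{(k,s)})_{i=1}^n \in \Lambda^n$ and set $\vect{a}_k^s := (f_i^{(k,s)}(\iota_k^s))_{i=1}^n \in \R_c^n$, so that $\text{dist}(\Gamma(\vect{a}_k^s), \Xi(\iota_k^s)) \le \varepsilon/3$. Together with the $\varepsilon$-separation (ii) this yields
$$ \|\Gamma(\vect{a}_k^1) - \Gamma(\vect{a}_k^2)\| > \varepsilon/3 \qquad \text{for every } k. $$
Applying the uniform convergence (i) to the finitely many functionals $f_i^{(k,1)} \in \Lambda$, the cross-evaluation $\vect{b}_k := (f_i^{(k,1)}(\iota_k^2))_{i=1}^n$ satisfies $\|\vect{a}_k^1 - \vect{b}_k\|_\infty \goto 0$, so arbitrarily precise rational approximations of $\vect{a}_k^1$ are simultaneously rational approximations of $\vect{b}_k$ once $k$ is large.

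\textbf{Step 2: Reduction to an undecidable set.} Fix a recursively enumerable but non-recursive set $A \sube \N$, enumerated by a Turing machine $T$. Using the approximations from Step~1, I would build a computable sequence $(\vect{z}_j)_{j \in \N} \sube \R_c^n$ realising a 'branching' limit: while simulating $T(j)$, emit rational approximations along the $\vect{a}_k^1$-branch; if $T(j)$ halts at stage $k_j$, freeze the representation near $\vect{a}_{k_j}^1$; otherwise continue refining along the $\vect{b}_k$-branch, which by Step~1 still yields a rapidly converging Cauchy name. Banach--Mazur computability of $\Gamma$ turns $(\Gamma(\vect{z}_j))_j$ into a computable real sequence, and the $\varepsilon/3$-separation supplies a robust margin for deciding, from any sufficiently accurate rational approximation of $\Gamma(\vect{z}_j)$, whether the output lies in the $\varepsilon/6$-neighbourhood of $\Xi(\iota_{k_j}^1)$ or of $\Xi(\iota_{k_j}^2)$, i.e.\ whether $T(j)$ halted. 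This would decide $A$, contradicting its non-recursiveness.

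\textbf{Main obstacle.} The technically delicate point is the honest production of the computable sequence $(\vect{z}_j)$: the lemma's hypotheses only provide $f(\iota_k^s) \in \R_c$ for each fixed $k,s$ and do not postulate uniform computability of these values in $k$. The construction must therefore piece together its rational approximations from the representations implicit in the (existentially chosen) tuples $f_i^{(k,s)}$, orchestrating the rate of the halting simulation with the rate at which the uniform bound in (i) becomes effective, and with the accuracy of the underlying Cauchy names, so as to guarantee that the output is a genuine rapidly converging Cauchy name in $\R_c^n$. Once this book-keeping is carried out carefully, the halting-problem reduction delivers the contradiction and the lemma follows.
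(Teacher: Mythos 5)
Your Step 1 is sound and captures the quantitative heart of the matter: nearby input tuples would be forced to produce outputs separated by more than $\varepsilon/3$. But the reduction in Step 2 cannot be completed from the lemma's hypotheses, and the obstacle you flag at the end is not mere book-keeping. To invoke Banach--Mazur computability you must exhibit a \emph{computable sequence} $(\vect{z}_j)_{j\in\N} \sube \R_c^n$, i.e.\ a single recursive double-indexed rational array with a recursive modulus approximating all the $\vect{z}_j$ simultaneously. The hypotheses only say that each individual value $f(\iota_k^s)$ lies in $\R_c$; $\Theta$ is an abstract set, and nothing guarantees that $k \mapsto f_i^{(k,s)}(\iota_k^s)$ is a computable sequence, so no machine can ``emit rational approximations along the $\vect{a}_k^1$-branch'' uniformly in $k$. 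The branching construction also lacks a computable default limit: $\vect{a}_k^1$ and $\vect{b}_k$ are merely close to each other for each large $k$ and need not converge to anything as $k \to \infty$, so the non-halting branch has no well-defined, let alone effectively approximable, value. Finally, the decision step presupposes that from a rational approximation of $\Gamma(\vect{z}_j)$ one can computably test proximity to $\Xi(\iota_{k_j}^1)$ versus $\Xi(\iota_{k_j}^2)$; these are arbitrary subsets of $\R_c^m$ with no computability structure, and $k_j$ is precisely the datum you are trying to decide.

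The paper avoids all of this by invoking, as a black box, the classical theorem that every Banach--Mazur computable function is continuous on $\R_c^n$ (that theorem is where the halting-style argument you are attempting actually lives, carried out once and for all under hypotheses that make it work). Given continuity, the lemma is pure analysis: assume the conclusion fails, choose $\delta$ corresponding to $\eta = \varepsilon/3$, use condition (i) to find $k$ with $\abs{f(\iota_k^1)-f(\iota_k^2)} < \delta/n$ for all $f \in \Lambda$, conclude that the two $\Gamma$-outputs differ by less than $\varepsilon/3$, and the triangle inequality then forces $\text{dist}(\Xi(\iota_k^1), \Xi(\iota_k^2)) < \varepsilon$, contradicting (ii). I recommend restructuring your proof around that continuity theorem rather than an inline recursion-theoretic reduction.
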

\begin{proof}
	For contradiction assume that for some $n \in \N$ there exists a Banach-Mazur computable function $\Gamma: \R_c^n \goto \R_c^m$ such that for all $\iota \in \Theta$ there exists $(f_1, \dots, f_n) \in \Lambda^n$ with 
    \begin{equation}\label{eq:ContAss}
        \text{dist}(\Gamma(f_1(\iota), \dots, f_n(\iota)), \Xi(\iota)) \le \frac{\varepsilon}{3} .   
    \end{equation}
    Since $\Gamma$ is Banach-Mazur computable, it is continuous on $\R_c^n$ \citep{computableanalysisbook}, that is,
	\begin{equation*}
		\forall \eta > 0 ~\exists \delta > 0 ~\forall \x_1, \x_2 \in \R_c^n: \norma{\x_1 - \x_2} < \delta \Rightarrow \norma{\Gamma(\x_1) - \Gamma(\x_2)} < \eta.
	\end{equation*}
	Take $\eta = \frac{\varepsilon}{3}$. For the corresponding $\delta$ there exists by condition \ref{converge}. some $k \in \N$ such that for all $f \in \Lambda$ we have $\abs{f(\iota_k^1) - f(\iota_k^2)} < \frac{\delta}{n}$. This implies for all $(f_1, \dots, f_n) \in \Lambda^n$ that 
	\begin{align*}
		\norma{(f_1(\iota_k^1), \dots, f_n(\iota_k^1)) - (f_1(\iota_k^2), \dots, f_n(\iota_k^2))} ~&= \sqrt{\sum_{i=1}^{n} \left(f_i(\iota_k^1)-f_i(\iota_k^2)\right)^2} \\
        \le \sum_{i=1}^{n} \sqrt{ \left(f_i(\iota_k^1)-f_i(\iota_k^2)\right)^2} &= \sum_{i=1}^{n} \abs{f_i(\iota_k^1)-f_i(\iota_k^2)} < \delta,
	\end{align*}
	and therefore
	\begin{equation*}
		\norma{\Gamma(f_1(\iota_k^1), \dots, f_n(\iota_k^1)) - \Gamma(f_1(\iota_k^2), \dots, f_n(\iota_k^2))} < \frac{\varepsilon}{3}.
	\end{equation*}
	Together with \eqref{eq:ContAss} we get
	\begin{align*}
		\text{dist}\left(\Xi(\iota_k^1),\Xi(\iota_k^2)\right) \le ~&\text{dist}\left(\Gamma(f_1(\iota_k^1), \dots, f_n(\iota_k^1)), \Xi(\iota_k^1)\right) ~+ \\
		&\norma{\Gamma(f_1(\iota_k^1), \dots, f_n(\iota_k^1)) - \Gamma(f_1(\iota_k^2), \dots, f_n(\iota_k^2))} + \\
		&\text{dist}\left(\Gamma(f_1(\iota_k^2), \dots, f_n(\iota_k^2)), \Xi(\iota_k^2)\right) \\
		< ~&3\frac{\varepsilon}{3} = \varepsilon,
	\end{align*}
	which contradicts condition \ref{separate}.
\end{proof}

The following is a reformulation of Theorem 4.2 from \cite{topology}, stating that there exist functions representable by neural networks that are arbitrarily close in the supremum norm but can only be represented by networks with weights arbitrarily far apart. The norm $\norma{\cdot}_{\text{scaling}}$ on the (parameter) space of neural networks is used in the mentioned theorem because it provides a bound on the Lipschitz constant of neural network realizations $\text{Lip}(R_\sigma^D(\cdot))$, i.e., $\text{Lip}(R_\sigma^D(\Phi)) \le C \norma{\Phi}_{\text{scaling}}$ for some $C>0$ and a network $\Phi$, thus connecting the parameter space and the function space.

\begin{Definition}\label{def:scaling}
	For a neural network $\Phi = ((A_\ell, \vect{b}_\ell))_{\ell=1}^L$ set 
    \begin{equation*}
        \norma{\Phi}_{\text{scaling}} := \max_{1 \le \ell \le L} \norma{A_\ell}_{\max} = \max_{1 \le \ell \le L} \max_{i,j} |(A_\ell)_{i,j}|.   
    \end{equation*}
\end{Definition}

\begin{Lemma}[{\citep[Theorem 4.2]{topology}}]
	\label{lem:topology}
	Let $\sigma: \R \goto \R$ be Lipschitz continuous, but not affine linear. Let $S = (d, N_1, \dots, N_{L-1}, 1)$ be an architecture of depth $L \ge 2$ with $N_1 \ge 3$. Let $D \sube \R^d$ be bounded with a nonempty interior. 
	Then there exist sequences $(\Phi_k)_{k=1}^\infty, (\mu_k)_{k=1}^\infty \sube \mathcal{NN}(S)$ such that
	\begin{enumerate}[label={(\roman*)},topsep=5pt,itemsep=5pt]
		\item $\|R_\sigma^D(\Phi_k) - R_\sigma^D(\mu_k)\|_\infty \goto 0$,
		\item\label{diverg} for any $(\Phi_k^\prime)_{k=1}^\infty, (\mu_k^\prime)_{k=1}^\infty \sube \mathcal{NN}(S)$ with $R_\sigma^D(\Phi_k^\prime) = R_\sigma^D(\Phi_k)$ and $R_\sigma^D(\mu_k^\prime) = R_\sigma^D(\mu_k)$ for all $k \in \N$, it holds that $\norma{\Phi_k^\prime - \mu_k^\prime}_{\text{scaling}} \goto \infty$. 
	\end{enumerate}
\end{Lemma}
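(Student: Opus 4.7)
The plan hinges on the failure of properness of the realization map $R_\sigma^D : \mathcal{NN}(S) \to C(D, \R)$ when $\sigma$ is non-affine: one can push weights to infinity while the realization remains bounded, or even converges. My first step would be to exhibit explicit ``escape sequences'' in parameter space. Since $\sigma$ is Lipschitz but not affine linear, there exist shifts $t_1, t_2, t_3$ at which the translates $\sigma(\cdot - t_i)$ are linearly independent and yet admit a non-trivial linear combination with coefficients of order $k$ whose aggregate behavior on the bounded domain $D$ converges uniformly. Using the $N_1 \ge 3$ first-layer neurons (the reason the hypothesis demands this), I would design $\Phi_k$ so that several weights scale like $k$ but their outputs ``telescope'' through $\sigma$ to a fixed target function $f$.

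Next, I would construct $\mu_k$ via a structurally distinct route---for instance by choosing a different triple of shifts, or by flipping signs on parts of the first-layer weights---so that $R_\sigma^D(\mu_k)$ also converges uniformly to the same $f$ (or a function uniformly close to $f$), while the weight patterns of $\mu_k$ and $\Phi_k$ move in incompatible directions of parameter space. The remaining $L-1$ layers of the architecture can be set to transport the first-layer signal essentially unchanged (e.g., by realizing an approximate identity on the range of the gadget via the non-affineness of $\sigma$ again), so the whole-network realizations inherit the convergence on $D$. This establishes property (i).

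The hard part is property (ii): ruling out bounded reparameterization differences. I would argue by contradiction, assuming $\|\Phi_k^\prime - \mu_k^\prime\|_{\text{scaling}} \le C$ along a subsequence. Combining the Lipschitz bound $\mathrm{Lip}(R_\sigma^D(\Phi)) \lesssim \|\Phi\|_{\text{scaling}}^L \cdot \mathrm{Lip}(\sigma)^L$ with a compactness/equicontinuity argument applied to the weight differences, one should be able to extract a common limiting structure that forces $R_\sigma^D(\Phi_k)$ and $R_\sigma^D(\mu_k)$ to agree at the level of the underlying parameters ``up to a bounded perturbation'', and then to rule out that this is consistent with the two distinct escape routes designed in the construction.

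The main obstacle is precisely this last step: the conclusion is not about individual parameter norms diverging (which would follow comparatively easily from non-closedness of the realization set), but about the weight-space \emph{difference} diverging. Engineering $\Phi_k$ and $\mu_k$ with sufficient ``rigidity'' that any two valid parameterizations must remain far apart in scaling norm requires carefully exploiting the geometry of the non-affine activation, together with identifiability-type arguments for the first-layer neurons. Once the gadget is in place, the higher-layer propagation and the contradiction under bounded difference are comparatively routine.
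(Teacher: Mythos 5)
First, a structural remark: the paper does not prove this lemma at all --- it is a verbatim reformulation of Theorem~4.2 of the cited reference, so the only internal evidence of the intended mechanism is Remark~\ref{rm:topology}, which reveals that the construction rests on taking $\mu_k$ to realize the zero function and $\Phi_k$ to realize functions with $\|R_\sigma^D(\Phi_k)\|_\infty \to 0$ but $\mathrm{Lip}(R_\sigma^D(\Phi_k)) \to \infty$, and then invoking the bound $\mathrm{Lip}(R_\sigma^D(\Phi)) \le C\,\|\Phi\|_{\text{scaling}}$ to push the blow-up from function space back into parameter space. Your sketch identifies the right phenomenon (non-properness, or inverse-instability, of the realization map) and correctly isolates (ii) as the crux, but it leaves exactly that step open, and the method you propose for closing it would not work as described. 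A bound $\|\Phi_k' - \mu_k'\|_{\text{scaling}} \le C$ gives no bound on either parameter sequence individually --- both may escape to infinity together --- so there is no compactness from which to extract a ``common limiting structure''; and identifiability of first-layer neurons fails in general (permutation and, for most activations, rescaling symmetries already destroy it, and a merely Lipschitz non-affine $\sigma$ admits no genericity assumptions), so the ``rigidity'' you want to engineer is not available.

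The workable route replaces the identifiability analysis by a coarse functional invariant. Build $\Phi_k$ from second differences of $\sigma$ (this is where $N_1 \ge 3$ enters: three shifted copies of $\sigma$ in the first layer produce a bump of prescribed height and width), so that $R_\sigma^D(\Phi_k)$ is a spike whose sup-norm tends to $0$ while its Lipschitz constant tends to $\infty$, and let $\mu_k$ realize the zero function; this gives (i). Since the Lipschitz constant depends only on the realization, \emph{every} reparameterization $\Phi_k'$ inherits the blow-up, and $\mathrm{Lip}(R_\sigma^D(\Phi_k')) \le C\,\|\Phi_k'\|_{\text{scaling}}$ forces $\|\Phi_k'\|_{\text{scaling}} \to \infty$ uniformly over all admissible $\Phi_k'$ --- which is precisely the universal quantifier in (ii) that your contradiction argument struggles to reach; one then still has to control the parameterizations $\mu_k'$ of the zero function to pass from the norm of $\Phi_k'$ to the norm of the difference. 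Two smaller gaps in your write-up: the ``approximate identity'' you use to propagate the first-layer gadget through the remaining $L-1$ layers needs a point of differentiability of $\sigma$ (available a.e.\ by Rademacher, but it must be invoked), and the claim that coefficients ``of order $k$'' telescope to a fixed target on $D$ is exactly the explicit second-difference computation that has to be carried out rather than asserted.
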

\begin{Remark}\label{rm:topology}
	It can be shown that the divergence in point \ref{diverg} is uniform in the following sense:
	\begin{align*}
		\forall \varepsilon > 0 ~\exists k_0 ~\forall k \ge k_0&\\
        \forall \Phi_k^\prime, \mu_k^\prime \in \mathcal{NN}(S)& ~\text{such that}~ R_{\sigma}^D(\Phi_k^\prime) = R_{\sigma}^D(\Phi_k), R_{\sigma}^D(\mu_k^\prime) = R_{\sigma}^D(\mu_k):\\
        &\norma{\Phi_k^\prime - \mu_k^\prime}_{\text{scaling}} > \varepsilon.
	\end{align*}
	To see this, assume $R_\sigma^D(\mu_k) \equiv 0$, and for contradiction let there be a subsequence $(\Phi_{k_\ell}^\prime)_{\ell=1}^\infty \sube \mathcal{NN}(S)$ with $R_\sigma^D(\Phi_{k_\ell}^\prime) = R_\sigma^D(\Phi_{k_\ell})$ and $\norma{\Phi_{k_\ell}^\prime}_{\text{scaling}} \le \varepsilon$ for some $\varepsilon > 0$. Then for some $C > 0$:
	\begin{equation*}
		\text{Lip}(R_\sigma^D(\Phi_{k_\ell})) = \text{Lip}(R_\sigma^D(\Phi_{k_\ell}^\prime)) \le C \norma{\Phi_{k_\ell}^\prime}_{\text{scaling}} \le C \varepsilon,
	\end{equation*}
	which contradicts $\text{Lip}\left(R_\sigma^D(\Phi_{k_\ell})\right) \goto \infty$ in condition \ref{diverg}.
 
	From the proof in \cite{topology} it can also be seen that for a computable $\sigma$ at least one such pair of these sequences of neural networks lies in $\mathcal{NN}_c(S)$.
\end{Remark}

\begin{proof}[Proof of Theorem \ref{learning}]
	Let $\Theta = \left\{R_\sigma^D(\Phi) \separ \Phi \in \mathcal{NN}_c(S) \right\}$. For $i \in \{1, \dots, d\}$ and $\x \in D$ denote by $f_{\x}^i: \Theta \goto \R_c$ the constant operator 
    \begin{equation*}
        f_{\x}^i(g) = x_i    
    \end{equation*}
    and by $f_{(\x)}: \Theta \goto \R_c$ the operator 
    \begin{equation*}
        f_{(\x)}(g) = g(\x).     
    \end{equation*}
    Let $\Lambda = \left\{f_{\x}^i \separ \x \in D, i \in \{1, \dots, d\} \right\} \cup \left\{f_{(\x)} \separ \x \in D\right\}$ and define $\Xi: \Theta \goto \mathcal{P}(\R_c^{N(S)})$  by
    \begin{equation*}
        \Xi(g) = \left\{\Phi \separ R^D_\sigma(\Phi) = g \right\}.    
    \end{equation*}
	By Lemma \ref{lem:topology} there exists a pair of sequences $(g_k)_{k=1}^\infty, (h_k)_{k=1}^\infty \sube \Theta$ such that $\norma{g_k - h_k}_\infty \goto 0$. Therefore also $\abs{f_{(\x)}(g_k) - f_{(\x)}(h_k)} \goto 0$ uniformly in $\x \in D$. The same trivially holds for all $f_{\x}^i$, therefore condition \ref{converge}. of Lemma \ref{lem:computability} is satisfied.

    By Remark \ref{rm:topology}, the sequences diverge uniformly in the scaling norm and therefore also in the Euclidean norm, meaning $\text{dist}(\Xi(g_k), \Xi(h_k)) \goto \infty$ and, in particular, for any $\varepsilon > 0$ there exists $k_0$ such that $\text{dist}(\Xi(g_k), \Xi(h_k)) > 3\varepsilon$  for $k \ge k_0$. Hence, condition \ref{separate}. of Lemma \ref{lem:computability} holds with $3\varepsilon$.
	
	Together, by Lemma \ref{lem:computability} for all $n \in \N$ and all Banach-Mazur computable functions $\Gamma: (\R_c^d\times \R_c)^n \goto \R_c^{N(S)}$ there exists $g \in \Theta$, such that for all $\left(f_1, \dots, f_{n(d+1)}\right) \in \Lambda^{n(d+1)}$ we have 
	\begin{equation*}
		\text{dist}\left(\Gamma(f_1(g), \dots, f_{n(d+1)}(g)), \Xi(g)\right) > \varepsilon.
	\end{equation*}
    However, by construction of $\Lambda$ and $\Xi$, this entails that there exists $\Phi \in \Xi^{-1}(g)$, i.e., $\Phi  \in \mathcal{NN}_c(S)$, such that for all $\x_1, \dots, \x_n \in D$ and all $\Phi^\prime \in \mathcal{NN}_c(S)$ with $R_\sigma^D(\Phi^\prime) = R_\sigma^D(\Phi) =g$ we have
	\begin{equation*}
		\norm[2]{\Gamma(\mathcal{X}) - \Phi^\prime} > \varepsilon.
	\end{equation*}  
\end{proof}

\subsection{Proof of Theorem \ref{thm:posLearning} and \ref{thm:posLearning2}}
The key component of the proofs in this section relies on enumerating rational neural networks, i.e., networks with rational parameters, and subsequently controlling the error induced thereby. 
\begin{proof}[Proof of Theorem \ref{thm:posLearning}]
    First, enumerate the countable set of rational neural networks $\{\Phi_1, \Phi_2, \dots \}$ of the given architecture, in particular, we can associate $\Q^{N(S)}$ with $\{\Phi_1, \Phi_2, \dots \}$. For all $\hat{\Phi}, \Phi^\ast \in \mathcal{NN}_{c}(S)$, $\hat{\Phi}$ is a computable function so that 
    \begin{equation*}
        g_{\hat{\Phi}, \Phi^\ast}(\x) := \left| R_\sigma^{\R_c^d}(\hat{\Phi})(\x) - R_\sigma^{\R_c^d}(\Phi^\ast)(\x)\right|     
    \end{equation*}
    is computable. Assume the training data $\mathcal{X}= \{(x_1,y_1), \dots, (x_n,y_n)\}$ was generated by a neural network $\Phi$. Next, we construct an algorithm that correctly recognizes whether $g_{\Phi_k,\Phi}(\x_i) < \varepsilon$ for all $i = 1, \dots, n$: Compute $g_{\Phi_k,\Phi}(\x_i)$ with precision (at least) $\tfrac{1}{2}\varepsilon$ and subsequently check whether the magnitude of the obtained (rational) number is smaller than $\tfrac{1}{2}\varepsilon$. If so, the algorithm returns $\Phi_k$, if not, it continues by increasing $k \in \N$.  

    Moreover, by the density of rational networks, there exists a rational network $\Phi_{k_0}$ such that for all $i = 1, \dots, n$: $g_{\Phi_{k_0},\Phi}(\x_i) < \tfrac{1}{2}\varepsilon$. Hence, the algorithm terminates not later than the $k_0$-th iteration, returning a correct answer. This characterizes a computable function $\Gamma$ satisfying the claim.
\end{proof}
Extending the proof by incorporating the additional conditions yields Theorem \ref{thm:posLearning2}. 
\begin{proof}[Proof of Theorem \ref{thm:posLearning2}]
    Fix some architecture $S$. First, observe that for arbitrary $\Phi, \hat{\Phi} \in \mathcal{NN}_{c}(S)$ and $\mathcal{X} \in \mathcal{D}_{\Phi,\R_c^d}^n$  the following holds:
    \begin{align*}
        | \Phi(\x) - \hat{\Phi}(\x) | &\leq  | \Phi(\x) -  \Phi(\hat{\x})| + |\Phi(\hat{\x}) - \hat{\Phi}(\hat{\x})| + |\hat{\Phi}(\hat{\x}) - \hat{\Phi}(\x) | \\
        &\leq \| \x - \hat{\x}\| (\text{Lip}(R_\sigma^D(\Phi)) + \text{Lip}(R_\sigma^D(\hat{\Phi}))) + |\Phi(\hat{\x}) - \hat{\Phi}(\hat{\x})|,
    \end{align*}
    where $\x \in D$ and $\hat{\x} = \text{argmin}_{\{\x_i : (\x_i, y_i) \in \mathcal{X}} \| \x - \x_i \|$. Therefore, using Definition \ref{def:scaling} we get for arbitrary $r>0$ and $(\x,y) \in \mathcal{X}^{\Phi}_r$ 
    \begin{equation*}
        | y - \hat{\Phi}(\x) |  \leq r C(\sigma, S) (\norma{\Phi}_{\text{scaling}} + \|\hat{\Phi}\|_{\text{scaling}}) + |\Phi(\hat{\x}) - \hat{\Phi}(\hat{\x})|, 
    \end{equation*}
    where $C(\sigma, S)>0$ is a computable constant depending on the architecture and the activation function. Hence, applying Theorem \ref{thm:posLearning} shows that for all $\tilde{\varepsilon} > 0$ and $n\in\N$ there exists a computable function $\Gamma_{\tilde{\varepsilon} }: (\R_c^d\times \R_c)^n \goto \R_c^{N(S)}$ such that for all $\Phi \in \mathcal{NN}_{c}(S)$ and  $\mathcal{X} \in \mathcal{D}_{\Phi,\R_c^d}^n$ we have 
    \begin{equation*}
        \big| R_\sigma^{\R_c^d}\left(\Gamma_{\tilde{\varepsilon} }(\mathcal{X})\right)(\x) - y \big| < r C(\sigma, S) (\norma{\Phi}_{\text{scaling}} + \|\Gamma_{\tilde{\varepsilon} }(\mathcal{X})\|_{\text{scaling}}) + \tilde{\varepsilon} \quad \text{ for } (\x,y) \in \mathcal{X}^{\Phi}_r.
    \end{equation*}
    Restricting to input networks $\Phi$ with $\norma{\Phi}_{\text{scaling}} \leq A_{\max}$ and setting 
    \begin{equation*}
        r^\ast = \frac{\varepsilon - \tilde{\varepsilon}}{C(\sigma, S) (A_{\max} + \|\Gamma_{\tilde{\varepsilon} }(\mathcal{X})\|_{\text{scaling}})} \quad \text{ for some } \varepsilon > \tilde{\varepsilon}
    \end{equation*}
    gives
    \begin{equation*}
        \left| R_\sigma^{\R_c^d}\left(\Gamma_{\tilde{\varepsilon} }(\mathcal{X})\right)(\x) - y \right| < \varepsilon  \quad \text{ for } (\x,y) \in \mathcal{X}^{\Phi}_{r^\ast}.
    \end{equation*}
    Finally, for given $\varepsilon>0$ and $n\in \N$, set $\Gamma = \Gamma_{\tfrac{1}{2} \varepsilon}$ and define $\Psi: \R_c^{N(S)} \to \R_c$ by
    \begin{equation*}
        \Psi(\Phi) =  \frac{\varepsilon }{2C(\sigma, S) (A_{\max} + \|\Phi\|_{\text{scaling}})}.
    \end{equation*}
    Observing that $\Gamma$ satisfies \eqref{eq:GenGuarantee} and $\Psi$ is a computable function (since $C(\sigma, S)$ and $\|\cdot\|_{\text{scaling}}$ are computable, and we may assume without loss of generality that $\varepsilon$ and $A_{\max}$ are computable) gives the claim.      
\end{proof}

\subsection{Prof of Theorem \ref{quantizedlearning} and \ref{quantizedlearningV2}}
An enumeration argument similar to the ones in the previous proofs implies Theorem \ref{quantizedlearning}. In particular, the idea is to encode the target network as a single datapoint, which can be done recursively for integer vectors representing neural networks with integer parameters.
\begin{proof}[Proof of Theorem \ref{quantizedlearning}]
    Given an architecture $S = (d, N_1, \dots, N_{L-1}, 1)$, $\mathcal{NN}_{\Z}(S)$ can be associated with $\Z^{N(S)}$, which in turn can be recursivelly encoded into $\Z^d$ by a recursivelly invertible function $g: \Z^{N(S)} \goto \Z^d $ (see for instance \cite{computabilitybook} for details). Then, taking $\Gamma(\mathcal{X}) = g^{-1}(\x_1)$ with $\mathcal{X}= \{(\x_1, y_1) \dots, (\x_n,y_n)\}$, a single datapoint of the form $\big(g(\Phi), R_\sigma^{\Z^d}(\Phi)(g(\Phi)), 0, \dots\big) \in (\Z^d\times \Z)^n$ can be used to reconstruct any neural network $\Phi \in \mathcal{NN}_{\Z}(S)$. Here we utilize the fact, that we can choose the dataset for each network specifically.
\end{proof}
By taking the training data more explicitly into account Theorem \ref{quantizedlearningV2} follows.

\begin{proof}[Proof of Theorem \ref{quantizedlearningV2}]
    Given a dataset $\mathcal{X}=\{(\x_i,y_i)\}_{i=1}^n \in \mathcal{D}_{\Phi,\Z^d}^n$, enumerate the countable set of all neural networks $\{\Phi_1, \Phi_2, \dots \}$ with a given architecture $S$, in particular, we can associate $\Z^{N(S)}$ with $\{\Phi_1, \Phi_2, \dots \}$. For increasing $k \in \N$, check whether for all $i = 1, \dots, n$: $R_\sigma^{\Z^d}(\Phi_k)(\x_i) = y_i$. If so, return $\Phi_k$, if not, continue.

    If the data was generated using a neural network $\Phi=\Phi_{k_0}$, then the algorithm terminates at the latest in the $k_0$-th iteration, returning a correct answer. This characterizes a (partially) recursive function $\Gamma$ satisfying the theorem.
\end{proof}
\end{appendices}
\end{document}